\RequirePackage{fix-cm}
\documentclass{article}
\usepackage{iclr2027_conference,times}
\usepackage{amsmath,amssymb,amsthm}
\usepackage{array}
\usepackage{enumitem}
\usepackage{float}
\usepackage{tcolorbox}
\usepackage{tikz}
\usetikzlibrary{arrows.meta,calc}
\usepackage[T1,OT1]{fontenc}
\usepackage{fvextra}
\usepackage[varqu,varl]{zi4}

\newsavebox{\figuretermbox}
\setlist[itemize]{labelindent=0pt,leftmargin=20pt,topsep=1pt,itemsep=1pt,parsep=0pt,partopsep=0pt}

\ifdefined\XeTeXversion
  \errmessage{Build with pdflatex: xelatex silently replaces the ICLR Times fonts}
\fi
\ifdefined\directlua
  \errmessage{Build with pdflatex: lualatex silently replaces the ICLR Times fonts}
\fi

\usepackage{hyperref}

\usepackage{amsmath,amsfonts,bm}

\def\eqref#1{equation~\ref{#1}}

\def\1{\bm{1}}

\DeclareMathAlphabet{\mathsfit}{\encodingdefault}{\sfdefault}{m}{sl}
\SetMathAlphabet{\mathsfit}{bold}{\encodingdefault}{\sfdefault}{bx}{n}

\RequirePackage{xcolor}

\definecolor{todoRed}{HTML}{C1121F}
\definecolor{authorLuis}{HTML}{0057B8}
\definecolor{authorNafiseh}{HTML}{00838F}
\definecolor{authorTina}{HTML}{6F42C1}
\definecolor{authorOmid}{HTML}{007A5E}
\definecolor{authorKeyvan}{HTML}{A64B00}
\definecolor{authorHannah}{HTML}{AD1457}

\RequirePackage{amsmath}
\RequirePackage{amssymb}
\RequirePackage{amsthm}
\RequirePackage{booktabs}
\RequirePackage{aliascnt}
\RequirePackage[nameinlink,noabbrev]{cleveref}

\theoremstyle{definition}
\newtheorem{definition}{Definition}
\newaliascnt{construction}{definition}

\aliascntresetthe{construction}
\newaliascnt{assumption}{definition}

\aliascntresetthe{assumption}

\theoremstyle{plain}
\newtheorem{theorem}{Theorem}
\newaliascnt{proposition}{theorem}

\aliascntresetthe{proposition}
\newaliascnt{lemma}{theorem}

\aliascntresetthe{lemma}
\newaliascnt{corollary}{theorem}

\aliascntresetthe{corollary}

\theoremstyle{remark}
\newaliascnt{remark}{definition}

\aliascntresetthe{remark}
\newaliascnt{example}{definition}

\aliascntresetthe{example}

\crefname{definition}{definition}{definitions}
\Crefname{definition}{Definition}{Definitions}
\crefname{construction}{construction}{constructions}
\Crefname{construction}{Construction}{Constructions}
\crefname{assumption}{assumption}{assumptions}
\Crefname{assumption}{Assumption}{Assumptions}
\crefname{theorem}{theorem}{theorems}
\Crefname{theorem}{Theorem}{Theorems}
\crefname{proposition}{proposition}{propositions}
\Crefname{proposition}{Proposition}{Propositions}
\crefname{lemma}{lemma}{lemmas}
\Crefname{lemma}{Lemma}{Lemmas}
\crefname{corollary}{corollary}{corollaries}
\Crefname{corollary}{Corollary}{Corollaries}
\crefname{remark}{remark}{remarks}
\Crefname{remark}{Remark}{Remarks}
\crefname{example}{example}{examples}
\Crefname{example}{Example}{Examples}
\crefname{enumi}{part}{parts}
\Crefname{enumi}{Part}{Parts}
\crefname{appendix}{Appendix}{Appendices}
\Crefname{appendix}{Appendix}{Appendices}

\newcommand{\Agent}{\mathcal{A}}
\newcommand{\Env}{\mathcal{E}}
\newcommand{\AskPolicy}{\pi_{\mathrm{ask}}}

\newcommand{\AskPolicyFull}{\pi_{\mathrm{full}}}
\newcommand{\AskPolicyNo}{\pi_{\mathrm{none}}}
\newcommand{\AskPolicyAdvFull}{\AskPolicyFull^{\mathrm{adv}}}
\newcommand{\AskPolicyAdvNo}{\AskPolicyNo^{\mathrm{adv}}}
\newcommand{\AskPolicySpecRand}{\AskPolicy^{Q}}

\newcommand{\SolutionPolicy}{\pi_{\mathrm{sol}}}
\newcommand{\Oracle}{\mathcal{O}}
\newcommand{\OraclePerf}{\Oracle^{\star}}
\newcommand{\Verif}{V}
\newcommand{\Text}{\mathrm{Text}}
\newcommand{\traj}{\tau}
\newcommand{\hist}{\mathit{hist}}
\newcommand{\qst}{\mathit{qst}}
\newcommand{\ans}{\mathit{ans}}
\newcommand{\cred}{\mathit{cred}}
\newcommand{\LeakJ}{\mathrm{Leak}_{J}}
\newcommand{\LeakRec}{\mathrm{Leak}_{\Rcl}}
\newcommand{\DiagResp}{\Delta J_{\mathrm{resp}}}
\newcommand{\DiagFull}{\Delta J_{\mathrm{full}}}

\newcommand{\DiagRecNone}{\Delta\Rcl_{\mathrm{none}}}
\newcommand{\DiagSpec}{\Delta\Rcl_{\mathrm{spec}}}
\newcommand{\DiagSpecQ}{\DiagSpec^{Q}}
\newcommand{\Amb}{\mathrm{Amb}}
\newcommand{\res}{\rho}

\newcommand{\mch}{\mathrm{match}}
\newcommand{\rsp}{\mathrm{resp}}
\newcommand{\mchPerf}{\mch^{\star}}
\newcommand{\rspPerf}{\rsp^{\star}}

\newcommand{\Prc}{\mathrm{Prec}}
\newcommand{\Rcl}{\mathrm{Rec}}

\newcommand{\TV}{\mathrm{TV}}

\newcommand{\JFull}{J_{\mathrm{full}}}
\newcommand{\JNo}{J_{\mathrm{none}}}

\newcommand{\benchmarkname}{AmbiTab}
\newcommand{\benchmarkshort}{AT}

\tcbuselibrary{skins}
\definecolor{theoryPanelBg}{HTML}{F8F8F5}
\colorlet{theoryPanelEdge}{black!40}
\tcbset{
  theorypanel style/.style={
    enhanced,
    colback=theoryPanelBg,
    frame hidden,
    borderline={0.4pt}{0pt}{theoryPanelEdge, dash pattern=on 1.3pt off 1.1pt},
    boxrule=0pt,
    arc=1mm,
    outer arc=1mm,
    left=1.5mm,
    right=1.5mm,
    top=1mm,
    bottom=1mm,
    boxsep=0pt,
    before skip=6pt,
    after skip=6pt,
    fontupper=\normalsize,
  },
}
\newcommand{\theorypanelsetup}{\linespread{1.05}\selectfont\setlist[itemize,enumerate]{labelindent=0pt,leftmargin=*,topsep=2pt,itemsep=3pt,parsep=0pt,partopsep=0pt}\setlength{\abovedisplayskip}{4pt}\setlength{\belowdisplayskip}{4pt}}
\newtcolorbox{theorypanel}[1]{
  theorypanel style,
  coltitle=black,
  fonttitle=\bfseries\scshape,
  title=#1,
  detach title,
  before upper={\tcbtitle\par\theorypanelsetup},
}
\newtcolorbox{statementpanel}{
  theorypanel style,
  before upper={\theorypanelsetup},
}

\newif\ifshowaskeradditions
\showaskeradditionstrue
\newcommand{\askeradditioncolor}{\ifshowaskeradditions\color{authorTina}\fi}

\usepackage{url}
\usepackage{graphicx}

\hypersetup{hidelinks}
\title{Conversational Task Disambiguation over Tabular Data: Leakage-Aware Formulation, Benchmark Suite, and Training}

\author{\parbox[t]{\dimexpr\textwidth-2\tabcolsep\relax}{%
\centering\normalfont\normalsize
\textbf{Nafiseh Ghoroghchian}\thanks{Contributed equally and are joint first authors.}\quad \textbf{Luis Scoccola}\footnotemark[1]\quad \textbf{Tina Sedaghat}\\[0.35em]
\textbf{Omid Vaheb}\quad \textbf{Hannah Chen}\thanks{This work was done during her internship at Layer 6 AI.}\quad \textbf{Dino D'Agostino}\quad \textbf{Keyvan Golestan}\\[0.6em]
Layer 6 AI
}}
\iclrfinalcopy

\begin{document}

\maketitle
\lhead{}
\renewcommand{\headrulewidth}{0pt}

\begin{abstract}
Conversational task disambiguation over tabular data uses dialogue to resolve missing information about a user's intended task before producing a solution over tables or databases.
Existing evaluation and training lack a leakage-aware foundation.
Task success mixes the agent's disambiguation and solution-generation capabilities and can also reflect \emph{oracle leakage}, that is, information that a user simulator reveals beyond what a real user would.
Existing datasets also lack a shared representation of ambiguities and access boundaries.
We introduce the notion of an \emph{ambiguous verifiable task}, which formalizes ambiguities and resolutions, decomposing the agent into an asking policy and a solution policy, and the environment into an oracle and verifier.
This framework provides baselines and metrics for evaluating task disambiguation separately from solution generation, formal definitions of oracle leakage, judge-free leakage diagnostics, and a training objective for the asking policy.
We instantiate the framework in text-to-SQL with \emph{\benchmarkname{}}, a benchmark suite that unifies six ambiguous datasets under a common representation specifying what the agent, oracle, and verifier may access.
We evaluate clarification strategies and oracle leakage, and train an asking policy with reinforcement learning.
The trained asker improves our disambiguation metrics on all six datasets and task success on five, and our leakage diagnostics measure how training affects oracle leakage.
\end{abstract}

\section{Introduction}

\subsection{Context and problem statement}
Language models are increasingly used to solve tasks over tables and relational databases~\citep{tian2025tableagents,wu2025tablebench,li2025treb}.
When the solution is an executable query, execution provides a trusted verifier.
Reinforcement Learning (RL) using such rewards~\citep{pourreza2025reasoningsql,ma2025sqlr1,lei2025reasoningtable} has improved single-turn systems on text-to-SQL benchmarks such as BIRD~\citep{li2023bird} and Spider~2.0~\citep{lei2025spider}.
These benchmarks assume that the task specification determines the intended solution.
Real task specifications often do not: for example ``list the top earners'' leaves open both how many earners to return and what counts as earnings (\Cref{fig:main}(c)).
We study \emph{conversational task disambiguation over tabular data}: an agent asks clarifying questions to resolve missing information about the intended task before producing a solution over tables or databases.
We focus empirically on text-to-SQL, where clarification identifies the user's intended interpretation and the solution policy generates a corresponding query.
Conversational benchmarks such as SParC and CoSQL~\citep{yu2019sparc,yu2019cosql} established the value of evaluating complete interactions, and interactive benchmarks such as BIRD-Interact~\citep{huo2025birdinteract} show that this type of ambiguous task remains difficult even for capable models.

Evaluating and training conversational task disambiguation requires a leakage-aware foundation, which current practice lacks, for three reasons.
First, success rate, the natural metric, depends on both asking and solving: a failure may reflect a missing question or a wrong solution, and a success may reflect a task that needs no disambiguation.
Second, at scale the user is replaced by an oracle, often a language model, and an agent selected or trained against it may exploit its errors, as it might exploit a misspecified reward~\citep{skalse2022rewardhacking,pan2022rewardmisspecification}, for example by eliciting details of the reference solution that no real user would supply.
We call this \emph{oracle leakage}; prior work separates, constrains, or audits user simulators against it~\citep{jain2025autoeval,huo2025birdinteract,sullutrone2026abiss}, and diagnosing it with a language model judge adds another untrusted component.
Both problems carry over to training: a success-based reward can attribute gains to the asker that actually arise from the solver’s capabilities or from oracle leakage.
Third, existing datasets lack a shared representation of ambiguities, resolutions, and what the agent, the oracle, and the verifier may access.

We address these difficulties with a formalism that makes explicit what is trusted and what the agent, the oracle, and the verifier may each access.
We use it to address the following questions:
\begin{itemize}
	\item[\textbf{Q1.}] How can task disambiguation be evaluated separately from solution generation?
	\item[\textbf{Q2.}] How can oracle leakage be formally defined and diagnosed without relying on an untrusted language model judge?
	\item[\textbf{Q3.}] How can an asking policy be trained within the same formalism, and how does training affect task success rate, asking behavior, and leakage?
\end{itemize}

\subsection{Contributions}

\begin{figure}[!t]
\centering
\includegraphics[width=1.0\linewidth]{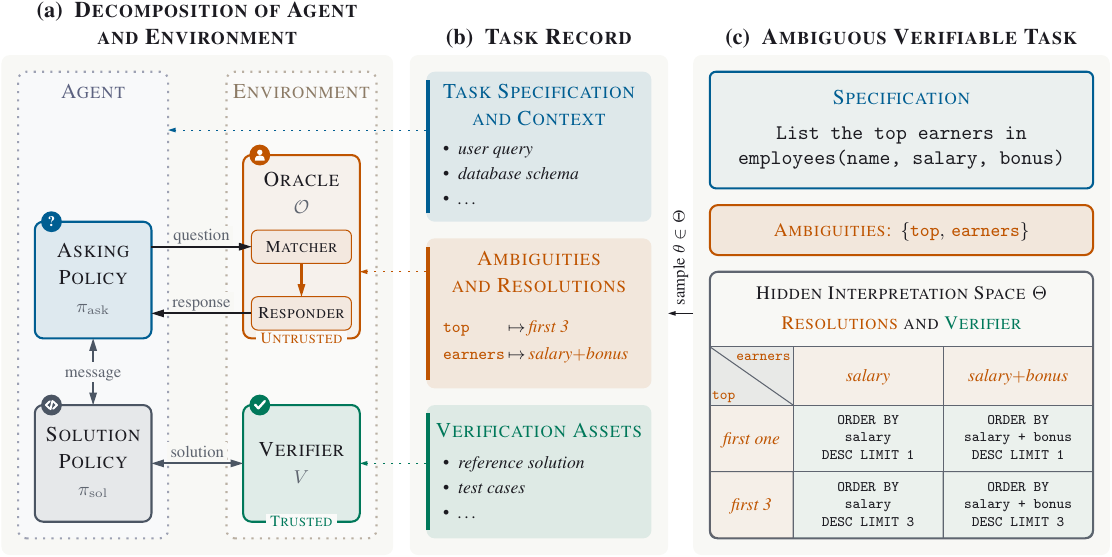}
\caption{
Agent and environment decompositions, task record, and example of a text-to-SQL AVT with two ambiguities and four interpretations.}
\label{fig:main}
\end{figure}

We introduce the notion of an \emph{Ambiguous Verifiable Task} (AVT) (\Cref{def:avt}), a general representation of underspecified tasks with verifiable solutions.
An AVT consists of a public task specification, a probability space of hidden interpretations, a finite set of ambiguities and their resolutions, and a verifier (\Cref{fig:main}c); nothing in the definition is specific to tables or SQL.
We study conversational task disambiguation through an ask-then-solve protocol for AVTs and instantiate it in text-to-SQL over relational databases.
In this setup, the agent consists of an \emph{asking policy} (or \emph{asker}) that asks the oracle questions and a \emph{solution policy} (or \emph{solver}) that submits a solution.
The environment consists of an \emph{oracle}, whose \emph{matcher} credits the ambiguities each question addresses and whose \emph{responder} returns information about their resolutions, and a trusted \emph{verifier} that scores the solution.
Because only the asker interacts with the oracle, we can vary the asker while holding the solver fixed.
We address questions \textbf{Q1}-\textbf{Q3} through three main contributions:

\begin{enumerate}[leftmargin=20pt]
    \item \textbf{Theory.}
    We develop a leakage-aware formalism for AVTs and instantiate it in an agent-environment interaction protocol (\Cref{sec:theory}).
    For \textbf{Q1}, it separates asking from solving through no-disambiguation and full-disambiguation baselines and asking precision and recall.
    Splitting the oracle into a matcher and a responder, as some user simulators already do~\citep{huo2025birdinteract,sullutrone2026abiss,zhang2026pibench,trinh2026hilbench}, separates two leakage channels: crediting ambiguities that a question does not address, and revealing more than the credited resolutions.
    For \textbf{Q2}, we use our framework to give mathematical formulas quantifying the two types of leakage.
    The formalism also introduces judge-free, computable diagnostics and proves how they witness or approximate leakage under explicit assumptions.

    \item \textbf{\benchmarkname{} Benchmark Suite.}
    We introduce \benchmarkname{} for conversational task disambiguation over tabular data, comprising more than 8{,}000 task records across six ambiguous text-to-SQL datasets (\Cref{section:data}).
    AmbiTab-Bird and \benchmarkname{}-EW-Bird derive from BIRD~\citep{li2023bird}; \benchmarkname{}-BIRD-Interact and \benchmarkname{}-Ambrosia adapt BIRD-Interact~\citep{huo2025birdinteract} and Ambrosia~\citep{saparina2024ambrosia}; and \benchmarkname{}-AmbiQT and \benchmarkname{}-TACO adapt AmbiQT~\citep{bhaskar2023benchmarking} and TACO~\citep{deng2026taco}.
    All six share a record that separates the public specification, the resolutions available to the oracle, and the private reference used by the verifier (\Cref{fig:main}b), supporting the metrics for \textbf{Q1} and, with the reference-response and sampling procedures in \Cref{sec:theory-estimation}, the diagnostics for \textbf{Q2}.
    AmbiTab-Bird and \benchmarkname{}-EW-Bird also include training splits relevant to \textbf{Q3}.

    \item \textbf{Experiments.}
    We address \textbf{Q1} by comparing no-disambiguation, evaluated-asker, and full-disambiguation conditions and reporting both success and asking metrics (\Cref{sec:results-ladder}).
    Reporting all three success rates together with macro precision and recall distinguishes the benefit of clarification from gaps in question quality.
    For \textbf{Q2}, we apply the diagnostics to deterministic and LLM oracles and test them on a responder that reads the reference SQL (\Cref{sec:results-oracle-comparison}).
    For \textbf{Q3}, we train the asking policy with a multi-turn GRPO variant with DAPO-style asymmetric clipping and zero-advantage group filtering~\citep{shao2024grpo,yu2025dapo} on AmbiTab-Bird, evaluate its coverage and task success with a fixed solution policy across datasets, and compare training oracles and reward objectives (\Cref{sec:results-rl}).
\end{enumerate}

\subsection{Related Work}
\label{sec:related}

\paragraph{Conversational task disambiguation.}
Ambiguous text-to-SQL benchmarks represent alternative readings~\citep{bhaskar2023benchmarking,saparina2024ambrosia,deng2026taco}.
BIRD-Interact and ABISS evaluate clarification with simulated users~\citep{huo2025birdinteract,sullutrone2026abiss}, whereas DySQL-Bench studies follow-up requests in database exploration~\citep{sun2026rethinking}.
Information-gain methods select questions that distinguish interpretations~\citep{kobalczyk2025active,qiu2025interactive}; IN3, AskBench, and $\pi$-Bench assess coverage of missing information~\citep{qian2024in3,zhao2026askbench,zhang2026pibench}.
We focus on eliciting the missing information that resolves the current task's intended interpretation.
Our task representation makes task ambiguities, resolutions, and oracle access explicit, with text-to-SQL as the evaluated instantiation over tabular data.
See \Cref{sec:appendix-related} for broader comparisons of benchmarks, clarification methods, and reward formulations.

\paragraph{Oracle reliability.}
Prior work restricts solution access or audits simulated responses~\citep{jain2025autoeval,huo2025birdinteract,sullutrone2026abiss}, but simulator fidelity and policy overfitting remain concerns~\citep{seshadri2026lost,yu2026frozen}.
We define leakage relative to a perfect oracle and derive judge-free diagnostics for undeserved question credit and excess response information.

\paragraph{Learning to disambiguate.}
RL for SQL generation and table reasoning directly rewards solution correctness~\citep{pourreza2025reasoningsql,ma2025sqlr1,lei2025reasoningtable}.
ACT and SGP learn clarification policies~\citep{chen2025act,berant2025sgp}, while IntentRL learns to enumerate interpretations and SQL queries in one response~\citep{saparina2026intentrl}.
InfoPO combines information gain with outcome rewards~\citep{kong2026infopo}; HiL-Bench combines blocker coverage with per-question bonuses and penalties~\citep{trinh2026hilbench}.
Our question-F1 reward balances ambiguity coverage and question efficiency without additional shaping coefficients.
Training improves downstream accuracy through better disambiguation.

\section{Theory}
\label{sec:theory}

\paragraph{Agent, environment, and oracle leakage.}
We model the interactions between an agent $\Agent$ and an environment $\Env$ as a stochastic process from which we sample trajectories $\traj \sim (\Agent; \Env)$.
Both $\Agent$ and $\Env$ are conditional distributions: $\Agent$ models the probability distribution of the next action given the observable history, and $\Env$ that of the next state given the history and the action taken by $\Agent$.
The sets of states and actions depend on the setup.

We are interested in situations where the environment is determined by two kinds of distributions: (1) Trusted distributions, such as a programmatic verifier $\Verif$,
which could even be deterministic.
(2) Untrusted distributions, such as an \emph{oracle} $\Oracle$ implemented by a language model, intended to approximate a \emph{perfect oracle} $\OraclePerf$, which could be defined to be, e.g., the behavior of a user base.

The agent is usually evaluated using its \emph{success rate} $J(\Agent; \Env) := \mathbb{E}_{\traj \sim (\Agent; \Env)} [\Verif(\traj)]$.
However, maximizing $J(\Agent; \Env)$ can lead the agent to exploit deviations of $\Oracle$ from $\OraclePerf$,
e.g., by eliciting information from $\Oracle$ that $\OraclePerf$ would not provide.
We refer to any such deviation as \emph{oracle leakage}.

\paragraph{Ambiguous verifiable tasks.}
We represent task specifications, ambiguities, clarification questions, responses, and solutions as text; we write $\Text$ for this set.
Task disambiguation elicits information about the intended interpretation of a specification.
In our text-to-SQL instantiation, this interpretation determines what the user intends to compute over the database, and the solution is a corresponding SQL query (\Cref{fig:main}c).

\begin{statementpanel}
\begin{definition}
\label{def:avt}
An \emph{ambiguous verifiable task} is a tuple $(x, \Theta, \Amb, \res, \Verif)$ consisting of
\begin{itemize}
	\itemsep0em
	\item A \emph{specification} $x\in\Text$;
	\item A probability space $\Theta$ of \emph{interpretations} of $x$;
	\item A finite set $\Amb\subseteq\Text$ of \emph{ambiguities};
	\item A \emph{resolution map} $\res\colon\Theta\times\Amb\to\Text$;
	\item A \emph{verifier} $\Verif\colon\Theta\times\Text\to\{0,1\}$.
\end{itemize}
\end{definition}
\end{statementpanel}

Proofs, technical assumptions, a table of notation, and other details are in \Cref{sec:appendix-proofs}.

\paragraph{Leakage-aware evaluation of agents on ambiguous verifiable tasks.}
\Cref{fig:avt-protocol} defines the
ask-then-solve protocol,
a simple interaction protocol between an agent and an environment associated to an AVT.
This setup can be generalized to other interaction protocols.\footnote{For context, note that an AVT equipped with an interaction protocol determines a partially observable Markov decision process: the state is the hidden interpretation $\theta$ plus the dialogue history, and the agent observes the history. Since $\theta$ is drawn once and fixed, this is the mixed-observability case of \citet{ong2010momdp}.}

\begin{figure}[ht]
\centering
\scalebox{1.0}{
\begingroup

\definecolor{avtInk}{HTML}{1C1B21}
\definecolor{avtMuted}{HTML}{5F6570}
\definecolor{avtPaper}{HTML}{F8F8F5}
\definecolor{avtAsk}{HTML}{005E92}    
\definecolor{avtSol}{HTML}{4B5563}    
\definecolor{avtOrc}{HTML}{BF5400}    
\definecolor{avtVer}{HTML}{00785A}    

\newcommand{\avtFTitle}{\fontsize{6.8}{8}\selectfont\scshape}
\newcommand{\avtFSym}{\fontsize{6.3}{7.4}\selectfont}
\newcommand{\avtFSmall}{\fontsize{5.8}{6.8}\selectfont}
\newcommand{\avtFKernel}{\fontsize{5.6}{6.6}\selectfont}

\tikzset{
  avtBox/.style={draw=#1, fill=#1!10!avtPaper, line width=0.7pt, rounded corners=1mm,
    minimum width=22mm, minimum height=11mm, align=center, inner xsep=0.8mm, inner ysep=0.8mm},
  avtGraphFlow/.style={draw=#1, line width=0.65pt, -{Triangle[length=1.35mm, width=1.0mm]}},
  avtEdgeValue/.style={fill=avtPaper, inner xsep=0.55mm, inner ysep=0.15mm, font=\avtFSmall, text=avtMuted},
  avtTopEdgeValue/.style={fill=avtPaper, inner xsep=0.05mm, inner ysep=0.1mm, font=\avtFKernel, text=avtMuted},
  avtThetaLabel/.style={anchor=south east, font=\avtFSmall, text=avtOrc},
}

\begin{tikzpicture}[x=1mm, y=1mm, every node/.style={inner sep=0pt, outer sep=0pt, text=avtInk}]
\path[use as bounding box] (0,7.0) rectangle (124.0,43.5);
\fill[avtPaper, rounded corners=1mm] (6.0,7.0) rectangle (110.0,43.5);

\node[anchor=north east, font=\avtFSmall, text=avtOrc] at (108.0,42.6) {$\theta\sim\Theta$};

\node[avtBox=avtAsk] (ask) at (21.0,30.0) {
  {\avtFTitle Asking Policy}\\[1.0mm]
  {\avtFSym\color{avtMuted} $\AskPolicy$}
};

\node[avtBox=avtOrc] (match) at (58.0,30.0) {
  {\avtFTitle Matcher}\\[1.0mm]
  {\avtFSym\color{avtMuted} $\mch$}
};
\node[avtThetaLabel] at ($(match.south east)+(-0.8mm,0.6mm)$) {$\theta$};

\node[avtBox=avtOrc] (resp) at (95.0,30.0) {
  {\avtFTitle Responder}\\[1.0mm]
  {\avtFSym\color{avtMuted} $\rsp$}
};
\node[avtThetaLabel] at ($(resp.south east)+(-0.8mm,0.6mm)$) {$\theta$};

\node[anchor=west, font=\avtFSym] (spec) at (0.5,30.0) {$x$};
\draw[avtGraphFlow=avtInk] ([xshift=0.8mm]spec.east) -- (ask.west);
\draw[avtGraphFlow=avtInk] (ask.east) -- (match.west)
  node[midway, above=0.55mm, avtTopEdgeValue] {$\qst_i$};
\draw[avtGraphFlow=avtOrc] (match.east) -- (resp.west)
  node[midway, above=0.55mm, avtTopEdgeValue] {$\cred_i$};
\draw[avtGraphFlow=avtInk, rounded corners=1.8pt] (resp.north) -- (95.0,39.5) -- (21.0,39.5)
  node[midway, fill=avtPaper, inner xsep=1.0mm, inner ysep=0.2mm,
    font=\avtFSmall, text=avtMuted] {$\hist_i$}
  -- (ask.north);

\node[avtBox=avtSol] (solve) at (58.0,15.0) {
  {\avtFTitle Solution Policy}\\[1.0mm]
  {\avtFSym\color{avtMuted} $\SolutionPolicy$}
};

\node[avtBox=avtVer] (verify) at (95.0,15.0) {
  {\avtFTitle Verifier}\\[1.0mm]
  {\avtFSym\color{avtMuted} $V$}
};
\node[avtThetaLabel] at ($(verify.south east)+(-0.8mm,0.6mm)$) {$\theta$};

\draw[avtGraphFlow=avtSol, rounded corners=1.8pt] (ask.south) -- (21.0,15.0) -- (solve.west);
\node[above=0.55mm, avtTopEdgeValue] at ({$(ask.east)!0.5!(match.west)$} |- solve.west) {$\hist_n$};
\draw[avtGraphFlow=avtSol] (solve.east) -- (verify.west)
  node[midway, above=0.55mm, avtTopEdgeValue] {$\mathit{solution}$};
\node[anchor=east, font=\avtFSym] (verdict) at (124.0,15.0) {$\{0,1\}$};
\draw[avtGraphFlow=avtVer] (verify.east) -- ([xshift=-0.8mm]verdict.west);

\end{tikzpicture}
\endgroup}
\caption{
The \emph{ask-then-solve protocol} (\Cref{sec:avt-protocol}).
Oracle leakage can arise from undeserved matcher credit or responses revealing more than the credited resolutions.}

\label{fig:avt-protocol}
\end{figure}

We now define success metrics\footnote{These parallel precision and recall in HiL-Bench~\citep{trinh2026hilbench}; in our protocol, these evaluate $\AskPolicy$ independently of $\SolutionPolicy$.
Our no- and full-disambiguation baselines, below,
parallel their blocked and full-information conditions.
Checkpoint coverage in AskBench~\citep{zhao2026askbench} is related.}, leakage, and leakage diagnostics for the ask-then-solve protocol associated to an AVT (\Cref{fig:avt-protocol}).
See \Cref{sec:theory-definitions} for formal definitions, and \Cref{sec:theory-estimation} for estimation.
The definitions and theoretical guarantees below concern this protocol.

Since we care about the interactions between the asker and the oracle, we fix a solution policy, a verifier, and a perfect oracle $\OraclePerf=(\mchPerf,\rspPerf)$.
We abbreviate $(\AskPolicy,\SolutionPolicy;\Oracle,\Verif)$ as $(\AskPolicy;\Oracle)$.

\begin{theorypanel}{Success metrics and baselines}
\begin{itemize}
\item \emph{Success rate}. $J(\AskPolicy;\Oracle):=\mathbb E[\Verif(\theta,\mathit{solution})]$.
\item \emph{Recall} and \emph{Precision}. In an episode with $n$ questions, the matcher credits a set $\cred_i\subseteq\Amb$ of ambiguities to question $i$; write $\cred_{\leq i}:=\bigcup_{j=1}^i\cred_j$, so that $\cred_{\leq 0}=\emptyset$.
\[
\Rcl(\AskPolicy;\Oracle):=\mathbb E\left[\frac{|\cred_{\leq n}|}{|\Amb|}\right] \;\;\;\;\;\;
\Prc(\AskPolicy;\Oracle):=\mathbb E\left[\frac{|\{i:\cred_i\not\subseteq\cred_{\leq i-1}\}|}{n}\;\middle|\;n\geq1\right].
\]
\item
\emph{Full-disambiguation} and \emph{no-disambiguation baselines}.
Fix a \emph{full-disambiguation policy} $\AskPolicyFull$, which acquires every ambiguity and asks no unproductive question ($\Rcl(\AskPolicyFull;\OraclePerf)=\Prc(\AskPolicyFull;\OraclePerf)=1$), and let $\AskPolicyNo$ stop without asking any questions.
\[
\JFull(\Oracle):=J(\AskPolicyFull;\Oracle)
\;\;\;\;\;\;
\JNo(\Oracle):=J(\AskPolicyNo;\Oracle).
\]
\end{itemize}
\end{theorypanel}

Recall that, by \emph{oracle leakage}, we mean any exploitation by the agent of deviations of $\Oracle$ from $\OraclePerf$.
This motivates the following definitions\footnote{$J$-leakage is a signed, fixed-policy analogue of policy-value gaps used to assess simulators~\citep{dann2026policyaware} and is related to generalized-gain leakage~\citep{alvim2012gain}. Here the reference is a task-defined perfect oracle, and a positive gap means extra task success enabled by the actual oracle.}:

\begin{theorypanel}{Leakage}
\begin{enumerate}[label=\arabic*.]
	\item \emph{$J$-leakage}. $\LeakJ(\AskPolicy;\Oracle) := J(\AskPolicy;\Oracle) - J(\AskPolicy;\OraclePerf)$.
	\item \emph{Recall leakage}. $\LeakRec(\AskPolicy;\mch) := \Rcl(\AskPolicy;\mch,\rspPerf)-\Rcl(\AskPolicy;\OraclePerf)$.
\end{enumerate}
\end{theorypanel}



The above notions of leakage cannot generally be computed directly.
For example, estimating $J$-leakage requires trajectories from $(\AskPolicy; \OraclePerf)$, which in practice can require an expensive back-and-forth between sampling from the asker and human annotators.


\begin{theorypanel}{Leakage diagnostics}
\begin{enumerate}[label=\arabic*.,itemsep=4pt]
\item
	\emph{Responder-leakage diagnostic} $\DiagResp$. Replaces the actual responder by the perfect responder while holding the asking policy and actual matcher fixed (\Cref{def:responder-leakage-diagnostic}).
\item
	\emph{Specification-randomization diagnostic} $\DiagSpecQ$.
	Recall after replacing the task specification seen by the asking policy with an independently sampled specification (\Cref{def:specification-randomization-diagnostic}).
\item
	\emph{Adversarial diagnostics} $\DiagFull$, $\DiagRecNone$.
	Evaluate $J$-leakage under an adversarial policy that covers all of the ambiguities, and recall leakage under one that covers none (\Cref{def:full-no-disambiguation-diagnostics}).
\end{enumerate}
\end{theorypanel}

Next is the main result, \Cref{thm:leakage-diagnostics}.
Informally:
\Cref{item:diagnostics-responder} says that the responder-leakage diagnostic approximates $J$-leakage when swapping the actual matcher for the perfect one results in a small change in the distribution of trajectories, as measured by the total variation distance $\TV$ (\Cref{sec:theory-conventions});
and \Cref{item:diagnostics-mixture} says that the specification-randomization diagnostic approximates recall leakage of the randomized policy when its perfect-oracle recall is small, and that a simple mixture model relates this to recall leakage of the original policy.
See \Cref{sec:proof-diagnostic-theorem} for the proof.

\begin{theorem}
\label{thm:leakage-diagnostics}
Fix an AVT, asking policy $\AskPolicy$, solution policy, and oracle $\Oracle=(\mch,\rsp)$.
\begin{enumerate}[label=\arabic*.,ref=\arabic*,leftmargin=*]
\item \label{item:diagnostics-responder}
Let $P_{\mch}$ and $P_{\mchPerf}$ be the joint distributions of the hidden interpretation $\theta$ and the complete visible trajectory $\traj$ under $(\AskPolicy;\mch,\rspPerf)$ and $(\AskPolicy;\OraclePerf)$, respectively.
Then
\[
\left|\,\LeakJ(\AskPolicy;\Oracle)-\DiagResp(\AskPolicy;\mch,\rsp)\,\right|
\,\le\, \TV(P_{\mch},P_{\mchPerf}).
\]

\item \label{item:diagnostics-mixture}
We have
$\DiagSpecQ(\AskPolicy;\mch)-\LeakRec(\AskPolicySpecRand;\mch)=\Rcl(\AskPolicySpecRand;\OraclePerf)$.

Moreover, if $\AskPolicy=p\,\pi_{\mathrm{inv}}+(1-p)\,\pi'$ for some $p\in[0,1]$ and asking policies $\pi_{\mathrm{inv}}$ and $\pi'$, where $\pi_{\mathrm{inv}}$ is invariant under replacing the task specification, then
\[
\left|\,\LeakRec(\AskPolicy;\mch)-\LeakRec(\AskPolicySpecRand;\mch)\,\right|
\,\le\, 2\,(1-p^B),
\]
where $B$ is the question budget and $Q$ is any distribution over task specifications.
\end{enumerate}
\end{theorem}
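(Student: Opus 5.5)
The plan is to prove both parts by writing each quantity as an expectation of a $[0,1]$-valued function of a trajectory, and then bounding differences of such expectations by a total variation distance. Throughout I use the definitions referenced in the statement (\Cref{def:responder-leakage-diagnostic,def:specification-randomization-diagnostic}) in the form I expect them to take:
\[
\DiagResp(\AskPolicy;\mch,\rsp)=J(\AskPolicy;\mch,\rsp)-J(\AskPolicy;\mch,\rspPerf),
\qquad
\DiagSpecQ(\AskPolicy;\mch)=\Rcl(\AskPolicySpecRand;\mch,\rspPerf),
\]
where $\AskPolicySpecRand$ is the asking policy run on a specification drawn from $Q$.

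\textbf{Part 1.} Substituting the definitions, the $J(\AskPolicy;\mch,\rsp)$ terms cancel. This leaves
\[
\LeakJ(\AskPolicy;\Oracle)-\DiagResp(\AskPolicy;\mch,\rsp)=J(\AskPolicy;\mch,\rspPerf)-J(\AskPolicy;\OraclePerf).
\]
Both terms are expectations of the same function $f(\theta,\traj)=\Verif(\theta,\mathit{solution}(\traj))\in\{0,1\}$. The first is taken under $P_{\mch}$ and the second under $P_{\mchPerf}$. Here the solution is part of the complete visible trajectory, and the solver's randomness is included in $\traj$. For any $f$ with values in $[0,1]$ we have $|\mathbb E_P f-\mathbb E_{P'}f|\le\TV(P,P')$, which gives the bound. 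The only care needed is to make sure $\traj$ contains everything $\Verif$ reads, so that $f$ is measurable with respect to $(\theta,\traj)$. I will state this explicitly using the conventions of \Cref{sec:theory-conventions}.

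\textbf{Part 2, identity.} By definition,
\[
\LeakRec(\AskPolicySpecRand;\mch)=\Rcl(\AskPolicySpecRand;\mch,\rspPerf)-\Rcl(\AskPolicySpecRand;\OraclePerf).
\]
The first term is exactly $\DiagSpecQ(\AskPolicy;\mch)$, so rearranging yields the claimed identity.

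\textbf{Part 2, bound.} I read the mixture $\AskPolicy=p\,\pi_{\mathrm{inv}}+(1-p)\,\pi'$ as per-step mixing: at each decision, with probability $p$ independently, the next action (a question or stopping) is drawn from $\pi_{\mathrm{inv}}$. Then $\AskPolicySpecRand=p\,\pi_{\mathrm{inv}}^Q+(1-p)\,\pi'^Q$, and $\pi_{\mathrm{inv}}^Q=\pi_{\mathrm{inv}}$ by invariance. I would couple the runs of $\AskPolicy$ and $\AskPolicySpecRand$ against a common oracle as follows.
\begin{itemize}
\item Share $\theta$, the per-step mixture coins, the internal randomness of $\pi_{\mathrm{inv}}$, and the oracle's randomness.
\item Note that on the event where every one of the at most $B$ asking decisions selects $\pi_{\mathrm{inv}}$, the two dialogues are identical.
\item Observe that this event has probability at least $p^B$, so the trajectory laws are within total variation $1-p^B$.
\end{itemize}
Since $|\cred_{\le n}|/|\Amb|\in[0,1]$, recall differs by at most $1-p^B$ under the oracle $(\mch,\rspPerf)$, and by at most $1-p^B$ under $\OraclePerf$. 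The triangle inequality on the difference of the two leakages then gives $2(1-p^B)$.

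The main obstacle is making this coupling rigorous. The mixing components condition on histories that themselves depend on earlier choices, and the stopping decision must be counted within the budget $B$, since that is what makes the exponent $B$ rather than $B+1$. If the paper's mixture is instead at the episode level, the argument simplifies: the bound becomes $2(1-p)\le 2(1-p^B)$, which still proves the statement.
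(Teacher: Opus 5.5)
Your proposal is correct and follows the paper's proof step for step. Part~1 cancels the $J(\AskPolicy;\mch,\rsp)$ terms and bounds the remaining difference of success probabilities by total variation; Part~2 unfolds the definitions for the identity, then couples the runs of $\AskPolicy$ and $\AskPolicySpecRand$ with coins $C_0,\dots,C_{B-1}\sim\mathrm{Bernoulli}(p)$ indexed by the number of questions asked, so that the runs coincide on $\{C_0=\dots=C_{B-1}=1\}$, and finishes with the triangle inequality over the two oracles.

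The exponent issue you flag is settled by the protocol, which forces every asking policy to return $\bot$ after $B$ questions, so only decisions at histories with $k<B$ questions need coins. The matcher and responder see the unmodified history, so on that event the shared oracle draws also make the hidden credits, on which recall depends, coincide.
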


\section{\benchmarkname{} benchmark suite}
\label{section:data}

We introduce \benchmarkname{}, a suite of six ambiguous text-to-SQL datasets for evaluating conversational task disambiguation over tabular data.
Each dataset consists of task records with explicit ambiguity annotations, fixed resolutions, and verification assets.

\noindent\textbf{Task record.}
Abstractly, a task record implements a fixed interpretation of an ambiguous verifiable task $(x,\Theta,\Amb,\res,\Verif)$, so it can be seen as a sample of an AVT.
More precisely, given an interpretation $\theta\in\Theta$, its corresponding \emph{task record} is the tuple $(x,\Amb,\res_\theta,\Verif_\theta)$, where $\res_\theta(\alpha)=\res(\theta,\alpha)$ is the resolution map for an ambiguity $\alpha$ evaluated on the fixed interpretation, and $\Verif_\theta(s)=\Verif(\theta,s)$ is the verifier evaluated on the fixed interpretation.

In practice, this is implemented as follows (see \Cref{fig:main}(b) for an illustration):
\begin{itemize}
\item \textbf{Task specification and context.} The user request, and optionally the database schema, the column meanings, external knowledge, and other context.
\item \textbf{Ambiguities and resolutions.} A list pairing each ambiguity with its resolution as text.
Optionally, it might include the ambiguity type, a SQL snippet for grounding the responder, etc.
\item \textbf{Verification assets.} Usually, an executable database and a reference query.
Optionally, test cases, statements that prepare and restore the database, a policy for comparing result sets, etc.
\end{itemize}

\noindent\textbf{Dataset construction and details.}
We adapt BIRD~\citep{li2023bird}, BIRD-Interact~\citep{huo2025birdinteract}, Ambrosia~\citep{saparina2024ambrosia}, AmbiQT~\citep{bhaskar2023benchmarking}, and TACO~\citep{deng2026taco} (\Cref{sec:appendix-conversions}).
All datasets use SQLite except for \benchmarkname{}-BIRD-Interact, which uses PostgreSQL.
\Cref{tab:datasets} shows where each corpus's missing information comes from; \Cref{sec:appendix-data} gives the construction and oracle details.

\begin{table}[H]
\centering
\small
\setlength{\tabcolsep}{4pt}
\caption{The six \benchmarkname{} datasets.
\benchmarkname{}-Bird and \benchmarkname{}-EW-Bird counts combine their train and development splits.}
\label{tab:datasets}
\begin{tabular}{@{}>{\raggedright\arraybackslash}p{0.23\linewidth}>{\raggedright\arraybackslash}p{0.65\linewidth}r@{}}
\toprule
Dataset & Missing information and construction & Records \\
\midrule
\benchmarkname{}-Bird & Rewrite BIRD tasks with alternative readings; filter by SQL probes and a model judgment. & 664 \\
\benchmarkname{}-EW-Bird & Withhold the BIRD evidence field; retain rows where providing it improves performance for at least one reference model. & 2{,}537 \\\benchmarkname{}-BIRD-Interact & Normalize source ambiguity annotations; freeze audited, model-authored responses. & 600 \\
\benchmarkname{}-Ambrosia & Use each annotated reading and its native intent as a separate record. & 2{,}670 \\
\benchmarkname{}-AmbiQT & Build executable databases for paired SQL readings and assign each reading as a hidden intent. & 1{,}722 \\
\benchmarkname{}-TACO & Verbalize source SQL while withholding table or value choices. & 187 \\
\midrule
Total & & 8{,}380 \\
\bottomrule
\end{tabular}
\end{table}

\section{Experiments}
\label{sec:experiments}

We evaluate task disambiguation in text-to-SQL through task success, asking precision and recall, and oracle leakage across the six datasets in \Cref{section:data}, varying the oracle, asker prompt, and training configuration.

\subsection{Experimental setup}
\label{sec:results-setup}

\paragraph{Protocol.}
We use the ask-then-solve protocol of \Cref{fig:avt-protocol}, with Qwen3-8B~\citep{qwen2025qwen3} as the asker and a fixed Qwen3-Coder-30B-A3B-Instruct solver.
The asker clarifies the intended task through dialogue with the oracle; the solver uses the request and dialogue to produce SQL.
The default prompt, \emph{Checklist (ledger)}, enumerates potentially ambiguous terms and tracks their resolutions.
\Cref{sec:results-prompt} examines the precision--recall tradeoff across asker prompts.
We use the deterministic oracle by default and compare it with Qwen3-14B and Gemma-3-27B oracles (\Cref{sec:appendix-oracles}).

\paragraph{Evaluation.}
We report Success Rate (SR) for the evaluated asking policy and the no- and full-disambiguation baselines, alongside macro asking precision and recall (\Cref{sec:theory}).
\emph{Asking gain} is the SR increase over the same run's no-disambiguation baseline.
Results show means and sample standard deviations across evaluation seeds; \Cref{sec:appendix-inference,sec:asker-evaluation-details} give protocol and aggregation details.
Leakage diagnostics come from separate evaluations (\Cref{sec:leakage-results}).

\subsection{\benchmarkname{} disambiguation capacity}
\label{sec:results-ladder}

Under the deterministic oracle, full disambiguation substantially improves SR over no disambiguation on five datasets, but the evaluated asker closes only part of this gap (\Cref{fig:inference-overview}a; complete results in \Cref{tab:inference-oracles}, \Cref{sec:inference-oracles}).
On \benchmarkname{}-Bird, for example, SR rises from $26.3\%$ without disambiguation to $66.7\%$ with asking and $86.2\%$ with full disambiguation.
The remaining gap is 19.5 percentage points on \benchmarkname{}-Bird and 7.1 to 17.4 points on the other four datasets.
\benchmarkname{}-BIRD-Interact is solver-bound (\Cref{fig:inference-overview}a): its complex SQL makes SR largely dependent on the solver rather than the asker, with little change across disambiguation settings or oracles (\Cref{tab:inference-oracles}).

Recall remains below complete coverage on every dataset, while low precision indicates many questions that receive no new ambiguity credit (\Cref{fig:inference-overview}b,c; \Cref{tab:inference-oracles}).
These results apply the evaluation of \textbf{Q1}: reporting the three success rates together with precision and recall shows that disambiguation improves success, but the asker leaves ambiguities uncovered and asks unproductive questions.

\begin{figure}[t]
\centering
\includegraphics[width=0.8\linewidth]{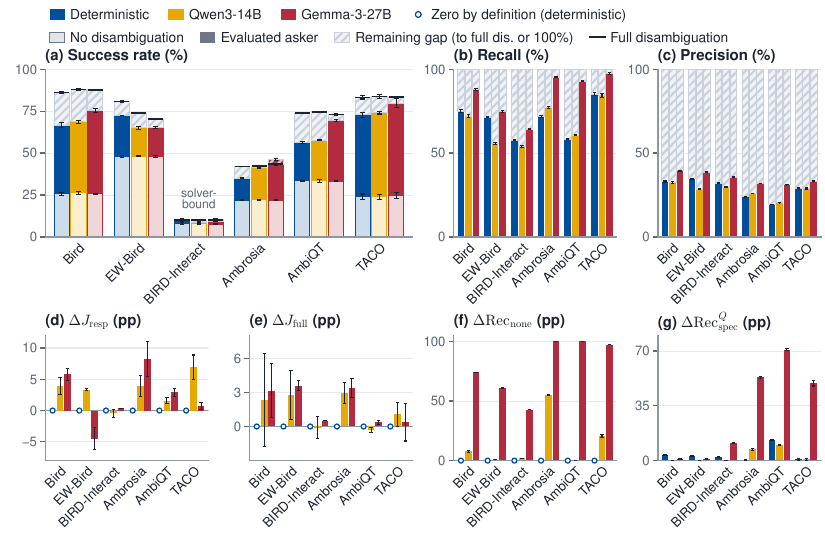}
\caption{The values of \Cref{tab:inference-oracles} (\Cref{sec:inference-oracles}) plus each oracle's own no-disambiguation SR, under the deterministic (navy), Qwen3-14B (yellow), and Gemma-3-27B (red) oracles.
\textbf{(a)} SR without disambiguation (pale), with the evaluated asker (solid), and with full disambiguation (black line); hatching is the gap the asker leaves.
\textbf{(b-c)} Asker recall and precision; hatching is the shortfall from 100\%.
\textbf{(d-g)} Leakage diagnostics in percentage points (Bird and EW-Bird dev, Ambrosia); the deterministic oracle's $\DiagResp$, $\DiagFull$, and $\DiagRecNone$ are zero by definition (open circles), and its $\DiagSpecQ$ is the reference recall.
Error bars: seed SDs over five seeds in (a-c) and three in (d-g).
Dataset labels omit the \benchmarkname{} prefix.}
\label{fig:inference-overview}
\end{figure}

\subsection{Success-leakage tradeoff across oracles}
\label{sec:results-oracle-comparison}
\label{sec:results-oracles}

The oracle comparison addresses \textbf{Q2}: Gemma-3-27B yields higher asker SR than Qwen3-14B on all six datasets, alongside higher values of the no-disambiguation recall-leakage and specification-randomization diagnostics, $\DiagRecNone$ and $\DiagSpecQ$ (\Cref{fig:inference-overview}f,g; \Cref{tab:inference-oracles}).
On \benchmarkname{}-AmbiQT, SR increases from $57.7\%$ to $69.3\%$, while $\DiagSpecQ$ rises from $9.9$ to $70.9$ percentage points.
$\DiagResp$ shows no consistent direction across datasets, and $\DiagFull$ differs by less than one point between the two oracles on every dataset (\Cref{fig:inference-overview}d,e).

These results motivate assessing success and leakage together.
Recall also depends on the oracle's credits and responses, including volunteered resolutions (\Cref{fig:inference-overview}b).
These comparisons capture the combined effects of matching, responses, and subsequent questions, without isolating leakage's contribution to SR.

An oracle exposed to the ground truth leaks more:
When the LLM responder also reads the full reference SQL, with the matcher unchanged, $\DiagFull$ rises on \benchmarkname{}-EW-Bird, \benchmarkname{}-Ambrosia, and \benchmarkname{}-AmbiQT under both oracles, by up to 10.4 points, and $\DiagResp$ rises by up to 7.1 points, with intervals excluding zero in six of ten dataset-oracle pairs; \benchmarkname{}-TACO shows no detectable change (\Cref{fig:sql-access}; \Cref{tab:leakage-sql-full-access}).
\Cref{sec:leakage-results} reports the complete diagnostics.

\begin{figure}[t]
\centering
\begin{minipage}[c]{0.525\linewidth}
\centering
\includegraphics{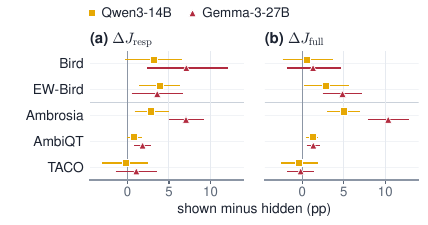}
\end{minipage}\hfill
\begin{minipage}[c]{0.45\linewidth}
\caption{Change in leakage when the LLM responder also reads the full reference SQL, with the matcher unchanged, shown minus hidden.
\textbf{(a)}~$\DiagResp$; \textbf{(b)}~$\DiagFull$.
Above the line, the default responder reads the SQL and the new runs hide it; below it, the new runs show it.
Error bars are paired 95\% AVT-cluster bootstrap intervals (\Cref{tab:leakage-sql-full-access}).
\benchmarkname{}-BIRD-Interact was not run, and dataset labels omit the \benchmarkname{} prefix.}
\label{fig:sql-access}
\end{minipage}
\end{figure}

\newcommand{\RLMean}[2]{\csname RLMean@#1@#2\endcsname}
\newcommand{\RLNative}[2]{\csname RLNative@#1@#2\endcsname}
\newcommand{\RLNativeBase}[2]{\csname RLNativeBase@#1@#2\endcsname}
\newcommand{\RLNativeLast}[2]{\csname RLNativeLast@#1@#2\endcsname}
\newcommand{\RLIteration}[1]{\csname RLIteration@#1\endcsname}
\newcommand{\RLTrainingIteration}[1]{\csname RLTrainingIteration@#1\endcsname}
\expandafter\def\csname RLIteration@det_f1\endcsname{40}
\expandafter\def\csname RLTrainingIteration@det_f1\endcsname{48}
\expandafter\def\csname RLNative@det_f1@SR\endcsname{60.6}
\expandafter\def\csname RLNative@det_f1@precision\endcsname{36.8}
\expandafter\def\csname RLNative@det_f1@recall\endcsname{83.3}
\expandafter\def\csname RLNative@det_f1@f1\endcsname{50.6}
\expandafter\def\csname RLNative@det_f1@system_leakage\endcsname{15.5}
\expandafter\def\csname RLNative@det_f1@user_leakage\endcsname{5.1}
\expandafter\def\csname RLNativeBase@det_f1@SR\endcsname{54.3}
\expandafter\def\csname RLNativeBase@det_f1@precision\endcsname{32.2}
\expandafter\def\csname RLNativeBase@det_f1@recall\endcsname{70.4}
\expandafter\def\csname RLNativeBase@det_f1@f1\endcsname{43.7}
\expandafter\def\csname RLNativeBase@det_f1@system_leakage\endcsname{12.4}
\expandafter\def\csname RLNativeBase@det_f1@user_leakage\endcsname{5.4}
\expandafter\def\csname RLNativeLast@det_f1@SR\endcsname{60.6}
\expandafter\def\csname RLNativeLast@det_f1@precision\endcsname{36.1}
\expandafter\def\csname RLNativeLast@det_f1@recall\endcsname{81.8}
\expandafter\def\csname RLNativeLast@det_f1@f1\endcsname{49.6}
\expandafter\def\csname RLNativeLast@det_f1@system_leakage\endcsname{15.6}
\expandafter\def\csname RLNativeLast@det_f1@user_leakage\endcsname{5.0}
\expandafter\def\csname RLIteration@det_sr\endcsname{16}
\expandafter\def\csname RLTrainingIteration@det_sr\endcsname{48}
\expandafter\def\csname RLNative@det_sr@SR\endcsname{57.5}
\expandafter\def\csname RLNative@det_sr@precision\endcsname{33.3}
\expandafter\def\csname RLNative@det_sr@recall\endcsname{73.0}
\expandafter\def\csname RLNative@det_sr@f1\endcsname{45.3}
\expandafter\def\csname RLNative@det_sr@system_leakage\endcsname{14.3}
\expandafter\def\csname RLNative@det_sr@user_leakage\endcsname{4.8}
\expandafter\def\csname RLNativeBase@det_sr@SR\endcsname{55.9}
\expandafter\def\csname RLNativeBase@det_sr@precision\endcsname{31.0}
\expandafter\def\csname RLNativeBase@det_sr@recall\endcsname{68.1}
\expandafter\def\csname RLNativeBase@det_sr@f1\endcsname{42.2}
\expandafter\def\csname RLNativeBase@det_sr@system_leakage\endcsname{14.9}
\expandafter\def\csname RLNativeBase@det_sr@user_leakage\endcsname{5.3}
\expandafter\def\csname RLNativeLast@det_sr@SR\endcsname{55.9}
\expandafter\def\csname RLNativeLast@det_sr@precision\endcsname{32.2}
\expandafter\def\csname RLNativeLast@det_sr@recall\endcsname{70.7}
\expandafter\def\csname RLNativeLast@det_sr@f1\endcsname{43.9}
\expandafter\def\csname RLNativeLast@det_sr@system_leakage\endcsname{13.9}
\expandafter\def\csname RLNativeLast@det_sr@user_leakage\endcsname{4.9}
\expandafter\def\csname RLIteration@llm_sr\endcsname{40}
\expandafter\def\csname RLTrainingIteration@llm_sr\endcsname{48}
\expandafter\def\csname RLNative@llm_sr@SR\endcsname{57.5}
\expandafter\def\csname RLNative@llm_sr@precision\endcsname{31.8}
\expandafter\def\csname RLNative@llm_sr@recall\endcsname{70.2}
\expandafter\def\csname RLNative@llm_sr@f1\endcsname{43.4}
\expandafter\def\csname RLNative@llm_sr@system_leakage\endcsname{14.8}
\expandafter\def\csname RLNative@llm_sr@user_leakage\endcsname{5.3}
\expandafter\def\csname RLNativeBase@llm_sr@SR\endcsname{52.8}
\expandafter\def\csname RLNativeBase@llm_sr@precision\endcsname{30.6}
\expandafter\def\csname RLNativeBase@llm_sr@recall\endcsname{68.1}
\expandafter\def\csname RLNativeBase@llm_sr@f1\endcsname{41.9}
\expandafter\def\csname RLNativeBase@llm_sr@system_leakage\endcsname{14.4}
\expandafter\def\csname RLNativeBase@llm_sr@user_leakage\endcsname{5.0}
\expandafter\def\csname RLNativeLast@llm_sr@SR\endcsname{55.1}
\expandafter\def\csname RLNativeLast@llm_sr@precision\endcsname{31.2}
\expandafter\def\csname RLNativeLast@llm_sr@recall\endcsname{68.8}
\expandafter\def\csname RLNativeLast@llm_sr@f1\endcsname{42.6}
\expandafter\def\csname RLNativeLast@llm_sr@system_leakage\endcsname{14.1}
\expandafter\def\csname RLNativeLast@llm_sr@user_leakage\endcsname{5.2}
\expandafter\def\csname RLIteration@llm_sr_bi\endcsname{8}
\expandafter\def\csname RLTrainingIteration@llm_sr_bi\endcsname{48}
\expandafter\def\csname RLNative@llm_sr_bi@SR\endcsname{6.7}
\expandafter\def\csname RLNative@llm_sr_bi@precision\endcsname{36.1}
\expandafter\def\csname RLNative@llm_sr_bi@recall\endcsname{66.8}
\expandafter\def\csname RLNative@llm_sr_bi@f1\endcsname{46.3}
\expandafter\def\csname RLNative@llm_sr_bi@system_leakage\endcsname{5.3}
\expandafter\def\csname RLNative@llm_sr_bi@user_leakage\endcsname{11.6}
\expandafter\def\csname RLNativeBase@llm_sr_bi@SR\endcsname{5.0}
\expandafter\def\csname RLNativeBase@llm_sr_bi@precision\endcsname{33.1}
\expandafter\def\csname RLNativeBase@llm_sr_bi@recall\endcsname{61.1}
\expandafter\def\csname RLNativeBase@llm_sr_bi@f1\endcsname{42.5}
\expandafter\def\csname RLNativeBase@llm_sr_bi@system_leakage\endcsname{6.5}
\expandafter\def\csname RLNativeBase@llm_sr_bi@user_leakage\endcsname{10.4}
\expandafter\def\csname RLNativeLast@llm_sr_bi@SR\endcsname{6.7}
\expandafter\def\csname RLNativeLast@llm_sr_bi@precision\endcsname{32.2}
\expandafter\def\csname RLNativeLast@llm_sr_bi@recall\endcsname{59.4}
\expandafter\def\csname RLNativeLast@llm_sr_bi@f1\endcsname{41.3}
\expandafter\def\csname RLNativeLast@llm_sr_bi@system_leakage\endcsname{5.4}
\expandafter\def\csname RLNativeLast@llm_sr_bi@user_leakage\endcsname{10.1}
\expandafter\def\csname RLMean@base@SR\endcsname{50.4}
\expandafter\def\csname RLMean@det_f1@SR\endcsname{54.6}
\expandafter\def\csname RLMean@det_sr@SR\endcsname{51.7}
\expandafter\def\csname RLMean@llm_sr@SR\endcsname{51.9}
\expandafter\def\csname RLMean@base@precision\endcsname{28.2}
\expandafter\def\csname RLMean@det_f1@precision\endcsname{31.8}
\expandafter\def\csname RLMean@det_sr@precision\endcsname{29.2}
\expandafter\def\csname RLMean@llm_sr@precision\endcsname{29.2}
\expandafter\def\csname RLMean@base@recall\endcsname{68.4}
\expandafter\def\csname RLMean@det_f1@recall\endcsname{77.4}
\expandafter\def\csname RLMean@det_sr@recall\endcsname{70.7}
\expandafter\def\csname RLMean@llm_sr@recall\endcsname{70.9}

\providecommand{\LeakDiag}[3]{\ifcsname LeakDiag@#1@#2@#3\endcsname\csname LeakDiag@#1@#2@#3\endcsname\else\PackageError{leakage}{undefined LeakDiag #1/#2/#3}{}\fi}
\expandafter\def\csname LeakDiag@Qwen3-14B@EW@resp\endcsname{3.36}
\expandafter\def\csname LeakDiag@Qwen3-14B@EW@full\endcsname{2.77}
\expandafter\def\csname LeakDiag@Qwen3-14B@EW@recnone\endcsname{0.82}
\expandafter\def\csname LeakDiag@Qwen3-14B@EW@spec\endcsname{0.22}
\expandafter\def\csname LeakDiag@Qwen3-14B@amb-BIRD@resp\endcsname{3.94}
\expandafter\def\csname LeakDiag@Qwen3-14B@amb-BIRD@full\endcsname{2.36}
\expandafter\def\csname LeakDiag@Qwen3-14B@amb-BIRD@recnone\endcsname{7.84}
\expandafter\def\csname LeakDiag@Qwen3-14B@amb-BIRD@spec\endcsname{0.00}
\expandafter\def\csname LeakDiag@Qwen3-14B@Ambrosia@resp\endcsname{4.00}
\expandafter\def\csname LeakDiag@Qwen3-14B@Ambrosia@full\endcsname{2.93}
\expandafter\def\csname LeakDiag@Qwen3-14B@Ambrosia@recnone\endcsname{54.87}
\expandafter\def\csname LeakDiag@Qwen3-14B@Ambrosia@spec\endcsname{7.24}
\expandafter\def\csname LeakDiag@Qwen3-14B@TACO@resp\endcsname{6.95}
\expandafter\def\csname LeakDiag@Qwen3-14B@TACO@full\endcsname{1.07}
\expandafter\def\csname LeakDiag@Qwen3-14B@TACO@recnone\endcsname{20.50}
\expandafter\def\csname LeakDiag@Qwen3-14B@TACO@spec\endcsname{0.71}
\expandafter\def\csname LeakDiag@Qwen3-14B@AmbiQT@resp\endcsname{1.61}
\expandafter\def\csname LeakDiag@Qwen3-14B@AmbiQT@full\endcsname{-0.31}
\expandafter\def\csname LeakDiag@Qwen3-14B@AmbiQT@recnone\endcsname{0.35}
\expandafter\def\csname LeakDiag@Qwen3-14B@AmbiQT@spec\endcsname{9.85}
\expandafter\def\csname LeakDiag@Qwen3-14B@BIRD-Interact@resp\endcsname{-0.44}
\expandafter\def\csname LeakDiag@Qwen3-14B@BIRD-Interact@full\endcsname{-0.11}
\expandafter\def\csname LeakDiag@Qwen3-14B@BIRD-Interact@recnone\endcsname{1.88}
\expandafter\def\csname LeakDiag@Qwen3-14B@BIRD-Interact@spec\endcsname{0.14}
\expandafter\def\csname LeakDiag@Gemma-3-27B@EW@resp\endcsname{-4.48}
\expandafter\def\csname LeakDiag@Gemma-3-27B@EW@full\endcsname{3.59}
\expandafter\def\csname LeakDiag@Gemma-3-27B@EW@recnone\endcsname{60.99}
\expandafter\def\csname LeakDiag@Gemma-3-27B@EW@spec\endcsname{1.10}
\expandafter\def\csname LeakDiag@Gemma-3-27B@amb-BIRD@resp\endcsname{5.77}
\expandafter\def\csname LeakDiag@Gemma-3-27B@amb-BIRD@full\endcsname{3.15}
\expandafter\def\csname LeakDiag@Gemma-3-27B@amb-BIRD@recnone\endcsname{74.09}
\expandafter\def\csname LeakDiag@Gemma-3-27B@amb-BIRD@spec\endcsname{0.84}
\expandafter\def\csname LeakDiag@Gemma-3-27B@Ambrosia@resp\endcsname{8.24}
\expandafter\def\csname LeakDiag@Gemma-3-27B@Ambrosia@full\endcsname{3.37}
\expandafter\def\csname LeakDiag@Gemma-3-27B@Ambrosia@recnone\endcsname{100.00}
\expandafter\def\csname LeakDiag@Gemma-3-27B@Ambrosia@spec\endcsname{53.25}
\expandafter\def\csname LeakDiag@Gemma-3-27B@TACO@resp\endcsname{0.71}
\expandafter\def\csname LeakDiag@Gemma-3-27B@TACO@full\endcsname{0.36}
\expandafter\def\csname LeakDiag@Gemma-3-27B@TACO@recnone\endcsname{97.33}
\expandafter\def\csname LeakDiag@Gemma-3-27B@TACO@spec\endcsname{49.47}
\expandafter\def\csname LeakDiag@Gemma-3-27B@AmbiQT@resp\endcsname{2.94}
\expandafter\def\csname LeakDiag@Gemma-3-27B@AmbiQT@full\endcsname{0.37}
\expandafter\def\csname LeakDiag@Gemma-3-27B@AmbiQT@recnone\endcsname{100.00}
\expandafter\def\csname LeakDiag@Gemma-3-27B@AmbiQT@spec\endcsname{70.94}
\expandafter\def\csname LeakDiag@Gemma-3-27B@BIRD-Interact@resp\endcsname{0.33}
\expandafter\def\csname LeakDiag@Gemma-3-27B@BIRD-Interact@full\endcsname{0.50}
\expandafter\def\csname LeakDiag@Gemma-3-27B@BIRD-Interact@recnone\endcsname{42.69}
\expandafter\def\csname LeakDiag@Gemma-3-27B@BIRD-Interact@spec\endcsname{11.04}
\expandafter\def\csname LeakDiag@Deterministic@EW@resp\endcsname{0}
\expandafter\def\csname LeakDiag@Deterministic@EW@full\endcsname{0}
\expandafter\def\csname LeakDiag@Deterministic@EW@recnone\endcsname{0}
\expandafter\def\csname LeakDiag@Deterministic@EW@spec\endcsname{2.91}
\expandafter\def\csname LeakDiag@Deterministic@amb-BIRD@resp\endcsname{0}
\expandafter\def\csname LeakDiag@Deterministic@amb-BIRD@full\endcsname{0}
\expandafter\def\csname LeakDiag@Deterministic@amb-BIRD@recnone\endcsname{0}
\expandafter\def\csname LeakDiag@Deterministic@amb-BIRD@spec\endcsname{3.50}
\expandafter\def\csname LeakDiag@Deterministic@Ambrosia@resp\endcsname{0}
\expandafter\def\csname LeakDiag@Deterministic@Ambrosia@full\endcsname{0}
\expandafter\def\csname LeakDiag@Deterministic@Ambrosia@recnone\endcsname{0}
\expandafter\def\csname LeakDiag@Deterministic@Ambrosia@spec\endcsname{0.50}
\expandafter\def\csname LeakDiag@Deterministic@TACO@resp\endcsname{0}
\expandafter\def\csname LeakDiag@Deterministic@TACO@full\endcsname{0}
\expandafter\def\csname LeakDiag@Deterministic@TACO@recnone\endcsname{0}
\expandafter\def\csname LeakDiag@Deterministic@TACO@spec\endcsname{0.98}
\expandafter\def\csname LeakDiag@Deterministic@AmbiQT@resp\endcsname{0}
\expandafter\def\csname LeakDiag@Deterministic@AmbiQT@full\endcsname{0}
\expandafter\def\csname LeakDiag@Deterministic@AmbiQT@recnone\endcsname{0}
\expandafter\def\csname LeakDiag@Deterministic@AmbiQT@spec\endcsname{13.09}
\expandafter\def\csname LeakDiag@Deterministic@BIRD-Interact@resp\endcsname{0}
\expandafter\def\csname LeakDiag@Deterministic@BIRD-Interact@full\endcsname{0}
\expandafter\def\csname LeakDiag@Deterministic@BIRD-Interact@recnone\endcsname{0}
\expandafter\def\csname LeakDiag@Deterministic@BIRD-Interact@spec\endcsname{2.13}
\providecommand{\LeakTrained}[4]{\ifcsname LeakTrained@#1@#2@#3@#4\endcsname\csname LeakTrained@#1@#2@#3@#4\endcsname\else\PackageError{leakage}{undefined LeakTrained #1@#2@#3@#4}{}\fi}
\providecommand{\LeakTrainedCI}[5]{\ifcsname LeakTrainedCI@#1@#2@#3@#4@#5\endcsname\csname LeakTrainedCI@#1@#2@#3@#4@#5\endcsname\else\PackageError{leakage}{undefined LeakTrainedCI #1@#2@#3@#4@#5}{}\fi}
\providecommand{\LeakTrainedDelta}[4]{\ifcsname LeakTrainedDelta@#1@#2@#3@#4\endcsname\csname LeakTrainedDelta@#1@#2@#3@#4\endcsname\else\PackageError{leakage}{undefined LeakTrainedDelta #1@#2@#3@#4}{}\fi}
\providecommand{\LeakTrainedDeltaAbs}[4]{\ifcsname LeakTrainedDeltaAbs@#1@#2@#3@#4\endcsname\csname LeakTrainedDeltaAbs@#1@#2@#3@#4\endcsname\else\PackageError{leakage}{undefined LeakTrainedDeltaAbs #1@#2@#3@#4}{}\fi}
\providecommand{\LeakTrainedRange}[4]{\ifcsname LeakTrainedRange@#1@#2@#3@#4\endcsname\csname LeakTrainedRange@#1@#2@#3@#4\endcsname\else\PackageError{leakage}{undefined LeakTrainedRange #1@#2@#3@#4}{}\fi}
\providecommand{\LeakTrainedCount}[4]{\ifcsname LeakTrainedCount@#1@#2@#3@#4\endcsname\csname LeakTrainedCount@#1@#2@#3@#4\endcsname\else\PackageError{leakage}{undefined LeakTrainedCount #1@#2@#3@#4}{}\fi}
\providecommand{\LeakTrainedDiff}[4]{\ifcsname LeakTrainedDiff@#1@#2@#3@#4\endcsname\csname LeakTrainedDiff@#1@#2@#3@#4\endcsname\else\PackageError{leakage}{undefined LeakTrainedDiff #1@#2@#3@#4}{}\fi}
\providecommand{\LeakTrainedDiffCount}[2]{\ifcsname LeakTrainedDiffCount@#1@#2\endcsname\csname LeakTrainedDiffCount@#1@#2\endcsname\else\PackageError{leakage}{undefined LeakTrainedDiffCount #1@#2}{}\fi}
\providecommand{\LeakTrainedClusters}[2]{\ifcsname LeakTrainedClusters@#1@#2\endcsname\csname LeakTrainedClusters@#1@#2\endcsname\else\PackageError{leakage}{undefined LeakTrainedClusters #1@#2}{}\fi}
\providecommand{\LeakTrainedInfo}[1]{\ifcsname LeakTrainedInfo@#1\endcsname\csname LeakTrainedInfo@#1\endcsname\else\PackageError{leakage}{undefined LeakTrainedInfo #1}{}\fi}
\expandafter\def\csname LeakTrained@Base@Deterministic@AmbiQT@resp\endcsname{0}
\expandafter\def\csname LeakTrained@Base@Deterministic@AmbiQT@spec\endcsname{13.09}
\expandafter\def\csname LeakTrained@Base@Deterministic@Ambrosia@resp\endcsname{0}
\expandafter\def\csname LeakTrained@Base@Deterministic@Ambrosia@spec\endcsname{0.50}
\expandafter\def\csname LeakTrained@Base@Deterministic@EW-BIRD dev@resp\endcsname{0}
\expandafter\def\csname LeakTrained@Base@Deterministic@EW-BIRD dev@spec\endcsname{2.91}
\expandafter\def\csname LeakTrained@Base@Deterministic@TACO@resp\endcsname{0}
\expandafter\def\csname LeakTrained@Base@Deterministic@TACO@spec\endcsname{0.98}
\expandafter\def\csname LeakTrained@Base@Deterministic@amb-BIRD dev@resp\endcsname{0}
\expandafter\def\csname LeakTrained@Base@Deterministic@amb-BIRD dev@spec\endcsname{3.50}
\expandafter\def\csname LeakTrained@Base@Gemma-3-27B@AmbiQT@resp\endcsname{2.94}
\expandafter\def\csname LeakTrained@Base@Gemma-3-27B@AmbiQT@spec\endcsname{70.94}
\expandafter\def\csname LeakTrained@Base@Gemma-3-27B@Ambrosia@resp\endcsname{8.24}
\expandafter\def\csname LeakTrained@Base@Gemma-3-27B@Ambrosia@spec\endcsname{53.25}
\expandafter\def\csname LeakTrained@Base@Gemma-3-27B@EW-BIRD dev@resp\endcsname{-4.48}
\expandafter\def\csname LeakTrained@Base@Gemma-3-27B@EW-BIRD dev@spec\endcsname{1.10}
\expandafter\def\csname LeakTrained@Base@Gemma-3-27B@TACO@resp\endcsname{0.71}
\expandafter\def\csname LeakTrained@Base@Gemma-3-27B@TACO@spec\endcsname{49.47}
\expandafter\def\csname LeakTrained@Base@Gemma-3-27B@amb-BIRD dev@resp\endcsname{5.77}
\expandafter\def\csname LeakTrained@Base@Gemma-3-27B@amb-BIRD dev@spec\endcsname{0.84}
\expandafter\def\csname LeakTrained@Base@Qwen3-14B@AmbiQT@resp\endcsname{1.61}
\expandafter\def\csname LeakTrained@Base@Qwen3-14B@AmbiQT@spec\endcsname{9.85}
\expandafter\def\csname LeakTrained@Base@Qwen3-14B@Ambrosia@resp\endcsname{4.00}
\expandafter\def\csname LeakTrained@Base@Qwen3-14B@Ambrosia@spec\endcsname{7.24}
\expandafter\def\csname LeakTrained@Base@Qwen3-14B@EW-BIRD dev@resp\endcsname{3.36}
\expandafter\def\csname LeakTrained@Base@Qwen3-14B@EW-BIRD dev@spec\endcsname{0.22}
\expandafter\def\csname LeakTrained@Base@Qwen3-14B@TACO@resp\endcsname{6.95}
\expandafter\def\csname LeakTrained@Base@Qwen3-14B@TACO@spec\endcsname{0.71}
\expandafter\def\csname LeakTrained@Base@Qwen3-14B@amb-BIRD dev@resp\endcsname{3.94}
\expandafter\def\csname LeakTrained@Base@Qwen3-14B@amb-BIRD dev@spec\endcsname{0.00}
\expandafter\def\csname LeakTrained@Det. F1@Deterministic@AmbiQT@resp\endcsname{0}
\expandafter\def\csname LeakTrained@Det. F1@Deterministic@AmbiQT@spec\endcsname{16.80}
\expandafter\def\csname LeakTrained@Det. F1@Deterministic@Ambrosia@resp\endcsname{0}
\expandafter\def\csname LeakTrained@Det. F1@Deterministic@Ambrosia@spec\endcsname{0.94}
\expandafter\def\csname LeakTrained@Det. F1@Deterministic@EW-BIRD dev@resp\endcsname{0}
\expandafter\def\csname LeakTrained@Det. F1@Deterministic@EW-BIRD dev@spec\endcsname{4.11}
\expandafter\def\csname LeakTrained@Det. F1@Deterministic@TACO@resp\endcsname{0}
\expandafter\def\csname LeakTrained@Det. F1@Deterministic@TACO@spec\endcsname{0.98}
\expandafter\def\csname LeakTrained@Det. F1@Deterministic@amb-BIRD dev@resp\endcsname{0}
\expandafter\def\csname LeakTrained@Det. F1@Deterministic@amb-BIRD dev@spec\endcsname{4.76}
\expandafter\def\csname LeakTrained@Det. F1@Gemma-3-27B@AmbiQT@resp\endcsname{-0.27}
\expandafter\def\csname LeakTrained@Det. F1@Gemma-3-27B@AmbiQT@spec\endcsname{81.98}
\expandafter\def\csname LeakTrained@Det. F1@Gemma-3-27B@Ambrosia@resp\endcsname{7.93}
\expandafter\def\csname LeakTrained@Det. F1@Gemma-3-27B@Ambrosia@spec\endcsname{58.99}
\expandafter\def\csname LeakTrained@Det. F1@Gemma-3-27B@EW-BIRD dev@resp\endcsname{-2.69}
\expandafter\def\csname LeakTrained@Det. F1@Gemma-3-27B@EW-BIRD dev@spec\endcsname{1.18}
\expandafter\def\csname LeakTrained@Det. F1@Gemma-3-27B@TACO@resp\endcsname{-1.25}
\expandafter\def\csname LeakTrained@Det. F1@Gemma-3-27B@TACO@spec\endcsname{50.45}
\expandafter\def\csname LeakTrained@Det. F1@Gemma-3-27B@amb-BIRD dev@resp\endcsname{6.30}
\expandafter\def\csname LeakTrained@Det. F1@Gemma-3-27B@amb-BIRD dev@spec\endcsname{0.14}
\expandafter\def\csname LeakTrained@Det. F1@Qwen3-14B@AmbiQT@resp\endcsname{0.79}
\expandafter\def\csname LeakTrained@Det. F1@Qwen3-14B@AmbiQT@spec\endcsname{9.66}
\expandafter\def\csname LeakTrained@Det. F1@Qwen3-14B@Ambrosia@resp\endcsname{2.87}
\expandafter\def\csname LeakTrained@Det. F1@Qwen3-14B@Ambrosia@spec\endcsname{7.68}
\expandafter\def\csname LeakTrained@Det. F1@Qwen3-14B@EW-BIRD dev@resp\endcsname{1.94}
\expandafter\def\csname LeakTrained@Det. F1@Qwen3-14B@EW-BIRD dev@spec\endcsname{0.22}
\expandafter\def\csname LeakTrained@Det. F1@Qwen3-14B@TACO@resp\endcsname{4.99}
\expandafter\def\csname LeakTrained@Det. F1@Qwen3-14B@TACO@spec\endcsname{1.52}
\expandafter\def\csname LeakTrained@Det. F1@Qwen3-14B@amb-BIRD dev@resp\endcsname{3.41}
\expandafter\def\csname LeakTrained@Det. F1@Qwen3-14B@amb-BIRD dev@spec\endcsname{0.14}
\expandafter\def\csname LeakTrained@LLM outcome@Deterministic@AmbiQT@resp\endcsname{0}
\expandafter\def\csname LeakTrained@LLM outcome@Deterministic@AmbiQT@spec\endcsname{12.54}
\expandafter\def\csname LeakTrained@LLM outcome@Deterministic@Ambrosia@resp\endcsname{0}
\expandafter\def\csname LeakTrained@LLM outcome@Deterministic@Ambrosia@spec\endcsname{0.62}
\expandafter\def\csname LeakTrained@LLM outcome@Deterministic@EW-BIRD dev@resp\endcsname{0}
\expandafter\def\csname LeakTrained@LLM outcome@Deterministic@EW-BIRD dev@spec\endcsname{3.20}
\expandafter\def\csname LeakTrained@LLM outcome@Deterministic@TACO@resp\endcsname{0}
\expandafter\def\csname LeakTrained@LLM outcome@Deterministic@TACO@spec\endcsname{0.62}
\expandafter\def\csname LeakTrained@LLM outcome@Deterministic@amb-BIRD dev@resp\endcsname{0}
\expandafter\def\csname LeakTrained@LLM outcome@Deterministic@amb-BIRD dev@spec\endcsname{3.64}
\expandafter\def\csname LeakTrained@LLM outcome@Gemma-3-27B@AmbiQT@resp\endcsname{2.38}
\expandafter\def\csname LeakTrained@LLM outcome@Gemma-3-27B@AmbiQT@spec\endcsname{73.11}
\expandafter\def\csname LeakTrained@LLM outcome@Gemma-3-27B@Ambrosia@resp\endcsname{8.18}
\expandafter\def\csname LeakTrained@LLM outcome@Gemma-3-27B@Ambrosia@spec\endcsname{53.56}
\expandafter\def\csname LeakTrained@LLM outcome@Gemma-3-27B@EW-BIRD dev@resp\endcsname{0.07}
\expandafter\def\csname LeakTrained@LLM outcome@Gemma-3-27B@EW-BIRD dev@spec\endcsname{1.25}
\expandafter\def\csname LeakTrained@LLM outcome@Gemma-3-27B@TACO@resp\endcsname{-1.60}
\expandafter\def\csname LeakTrained@LLM outcome@Gemma-3-27B@TACO@spec\endcsname{46.52}
\expandafter\def\csname LeakTrained@LLM outcome@Gemma-3-27B@amb-BIRD dev@resp\endcsname{7.61}
\expandafter\def\csname LeakTrained@LLM outcome@Gemma-3-27B@amb-BIRD dev@spec\endcsname{1.96}
\expandafter\def\csname LeakTrained@LLM outcome@Qwen3-14B@AmbiQT@resp\endcsname{1.66}
\expandafter\def\csname LeakTrained@LLM outcome@Qwen3-14B@AmbiQT@spec\endcsname{10.84}
\expandafter\def\csname LeakTrained@LLM outcome@Qwen3-14B@Ambrosia@resp\endcsname{5.06}
\expandafter\def\csname LeakTrained@LLM outcome@Qwen3-14B@Ambrosia@spec\endcsname{7.55}
\expandafter\def\csname LeakTrained@LLM outcome@Qwen3-14B@EW-BIRD dev@resp\endcsname{3.66}
\expandafter\def\csname LeakTrained@LLM outcome@Qwen3-14B@EW-BIRD dev@spec\endcsname{0.22}
\expandafter\def\csname LeakTrained@LLM outcome@Qwen3-14B@TACO@resp\endcsname{5.70}
\expandafter\def\csname LeakTrained@LLM outcome@Qwen3-14B@TACO@spec\endcsname{1.34}
\expandafter\def\csname LeakTrained@LLM outcome@Qwen3-14B@amb-BIRD dev@resp\endcsname{7.35}
\expandafter\def\csname LeakTrained@LLM outcome@Qwen3-14B@amb-BIRD dev@spec\endcsname{0.00}
\expandafter\def\csname LeakTrainedCI@Det. F1@Deterministic@AmbiQT@spec@hi\endcsname{25.26}
\expandafter\def\csname LeakTrainedCI@Det. F1@Deterministic@AmbiQT@spec@lo\endcsname{6.53}
\expandafter\def\csname LeakTrainedCI@Det. F1@Deterministic@Ambrosia@spec@hi\endcsname{1.02}
\expandafter\def\csname LeakTrainedCI@Det. F1@Deterministic@Ambrosia@spec@lo\endcsname{0.00}
\expandafter\def\csname LeakTrainedCI@Det. F1@Deterministic@EW-BIRD dev@spec@hi\endcsname{7.81}
\expandafter\def\csname LeakTrainedCI@Det. F1@Deterministic@EW-BIRD dev@spec@lo\endcsname{3.40}
\expandafter\def\csname LeakTrainedCI@Det. F1@Deterministic@TACO@spec@hi\endcsname{1.21}
\expandafter\def\csname LeakTrainedCI@Det. F1@Deterministic@TACO@spec@lo\endcsname{0.00}
\expandafter\def\csname LeakTrainedCI@Det. F1@Deterministic@amb-BIRD dev@spec@hi\endcsname{9.63}
\expandafter\def\csname LeakTrainedCI@Det. F1@Deterministic@amb-BIRD dev@spec@lo\endcsname{1.79}
\expandafter\def\csname LeakTrainedCI@Det. F1@Gemma-3-27B@AmbiQT@resp@hi\endcsname{0.91}
\expandafter\def\csname LeakTrainedCI@Det. F1@Gemma-3-27B@AmbiQT@resp@lo\endcsname{-1.43}
\expandafter\def\csname LeakTrainedCI@Det. F1@Gemma-3-27B@AmbiQT@spec@hi\endcsname{87.69}
\expandafter\def\csname LeakTrainedCI@Det. F1@Gemma-3-27B@AmbiQT@spec@lo\endcsname{72.03}
\expandafter\def\csname LeakTrainedCI@Det. F1@Gemma-3-27B@Ambrosia@resp@hi\endcsname{11.07}
\expandafter\def\csname LeakTrainedCI@Det. F1@Gemma-3-27B@Ambrosia@resp@lo\endcsname{4.76}
\expandafter\def\csname LeakTrainedCI@Det. F1@Gemma-3-27B@Ambrosia@spec@hi\endcsname{76.09}
\expandafter\def\csname LeakTrainedCI@Det. F1@Gemma-3-27B@Ambrosia@spec@lo\endcsname{57.38}
\expandafter\def\csname LeakTrainedCI@Det. F1@Gemma-3-27B@EW-BIRD dev@resp@hi\endcsname{0.60}
\expandafter\def\csname LeakTrainedCI@Det. F1@Gemma-3-27B@EW-BIRD dev@resp@lo\endcsname{-6.13}
\expandafter\def\csname LeakTrainedCI@Det. F1@Gemma-3-27B@EW-BIRD dev@spec@hi\endcsname{1.69}
\expandafter\def\csname LeakTrainedCI@Det. F1@Gemma-3-27B@EW-BIRD dev@spec@lo\endcsname{0.52}
\expandafter\def\csname LeakTrainedCI@Det. F1@Gemma-3-27B@TACO@resp@hi\endcsname{1.43}
\expandafter\def\csname LeakTrainedCI@Det. F1@Gemma-3-27B@TACO@resp@lo\endcsname{-3.74}
\expandafter\def\csname LeakTrainedCI@Det. F1@Gemma-3-27B@TACO@spec@hi\endcsname{50.93}
\expandafter\def\csname LeakTrainedCI@Det. F1@Gemma-3-27B@TACO@spec@lo\endcsname{50.33}
\expandafter\def\csname LeakTrainedCI@Det. F1@Gemma-3-27B@amb-BIRD dev@resp@hi\endcsname{12.34}
\expandafter\def\csname LeakTrainedCI@Det. F1@Gemma-3-27B@amb-BIRD dev@resp@lo\endcsname{1.04}
\expandafter\def\csname LeakTrainedCI@Det. F1@Gemma-3-27B@amb-BIRD dev@spec@hi\endcsname{0.62}
\expandafter\def\csname LeakTrainedCI@Det. F1@Gemma-3-27B@amb-BIRD dev@spec@lo\endcsname{0.00}
\expandafter\def\csname LeakTrainedCI@Det. F1@Qwen3-14B@AmbiQT@resp@hi\endcsname{1.88}
\expandafter\def\csname LeakTrainedCI@Det. F1@Qwen3-14B@AmbiQT@resp@lo\endcsname{-0.27}
\expandafter\def\csname LeakTrainedCI@Det. F1@Qwen3-14B@AmbiQT@spec@hi\endcsname{12.11}
\expandafter\def\csname LeakTrainedCI@Det. F1@Qwen3-14B@AmbiQT@spec@lo\endcsname{5.63}
\expandafter\def\csname LeakTrainedCI@Det. F1@Qwen3-14B@Ambrosia@resp@hi\endcsname{5.56}
\expandafter\def\csname LeakTrainedCI@Det. F1@Qwen3-14B@Ambrosia@resp@lo\endcsname{0.31}
\expandafter\def\csname LeakTrainedCI@Det. F1@Qwen3-14B@Ambrosia@spec@hi\endcsname{8.33}
\expandafter\def\csname LeakTrainedCI@Det. F1@Qwen3-14B@Ambrosia@spec@lo\endcsname{0.72}
\expandafter\def\csname LeakTrainedCI@Det. F1@Qwen3-14B@EW-BIRD dev@resp@hi\endcsname{4.86}
\expandafter\def\csname LeakTrainedCI@Det. F1@Qwen3-14B@EW-BIRD dev@resp@lo\endcsname{-0.90}
\expandafter\def\csname LeakTrainedCI@Det. F1@Qwen3-14B@EW-BIRD dev@spec@hi\endcsname{0.42}
\expandafter\def\csname LeakTrainedCI@Det. F1@Qwen3-14B@EW-BIRD dev@spec@lo\endcsname{0.00}
\expandafter\def\csname LeakTrainedCI@Det. F1@Qwen3-14B@TACO@resp@hi\endcsname{8.56}
\expandafter\def\csname LeakTrainedCI@Det. F1@Qwen3-14B@TACO@resp@lo\endcsname{1.60}
\expandafter\def\csname LeakTrainedCI@Det. F1@Qwen3-14B@TACO@spec@hi\endcsname{1.85}
\expandafter\def\csname LeakTrainedCI@Det. F1@Qwen3-14B@TACO@spec@lo\endcsname{1.43}
\expandafter\def\csname LeakTrainedCI@Det. F1@Qwen3-14B@amb-BIRD dev@resp@hi\endcsname{9.19}
\expandafter\def\csname LeakTrainedCI@Det. F1@Qwen3-14B@amb-BIRD dev@resp@lo\endcsname{-1.84}
\expandafter\def\csname LeakTrainedCI@Det. F1@Qwen3-14B@amb-BIRD dev@spec@hi\endcsname{0.62}
\expandafter\def\csname LeakTrainedCI@Det. F1@Qwen3-14B@amb-BIRD dev@spec@lo\endcsname{0.00}
\expandafter\def\csname LeakTrainedCI@LLM outcome@Deterministic@AmbiQT@spec@hi\endcsname{16.55}
\expandafter\def\csname LeakTrainedCI@LLM outcome@Deterministic@AmbiQT@spec@lo\endcsname{6.27}
\expandafter\def\csname LeakTrainedCI@LLM outcome@Deterministic@Ambrosia@spec@hi\endcsname{0.68}
\expandafter\def\csname LeakTrainedCI@LLM outcome@Deterministic@Ambrosia@spec@lo\endcsname{0.00}
\expandafter\def\csname LeakTrainedCI@LLM outcome@Deterministic@EW-BIRD dev@spec@hi\endcsname{3.31}
\expandafter\def\csname LeakTrainedCI@LLM outcome@Deterministic@EW-BIRD dev@spec@lo\endcsname{2.60}
\expandafter\def\csname LeakTrainedCI@LLM outcome@Deterministic@TACO@spec@hi\endcsname{0.77}
\expandafter\def\csname LeakTrainedCI@LLM outcome@Deterministic@TACO@spec@lo\endcsname{0.00}
\expandafter\def\csname LeakTrainedCI@LLM outcome@Deterministic@amb-BIRD dev@spec@hi\endcsname{6.51}
\expandafter\def\csname LeakTrainedCI@LLM outcome@Deterministic@amb-BIRD dev@spec@lo\endcsname{1.55}
\expandafter\def\csname LeakTrainedCI@LLM outcome@Gemma-3-27B@AmbiQT@resp@hi\endcsname{3.58}
\expandafter\def\csname LeakTrainedCI@LLM outcome@Gemma-3-27B@AmbiQT@resp@lo\endcsname{1.14}
\expandafter\def\csname LeakTrainedCI@LLM outcome@Gemma-3-27B@AmbiQT@spec@hi\endcsname{80.01}
\expandafter\def\csname LeakTrainedCI@LLM outcome@Gemma-3-27B@AmbiQT@spec@lo\endcsname{60.91}
\expandafter\def\csname LeakTrainedCI@LLM outcome@Gemma-3-27B@Ambrosia@resp@hi\endcsname{10.94}
\expandafter\def\csname LeakTrainedCI@LLM outcome@Gemma-3-27B@Ambrosia@resp@lo\endcsname{5.32}
\expandafter\def\csname LeakTrainedCI@LLM outcome@Gemma-3-27B@Ambrosia@spec@hi\endcsname{81.16}
\expandafter\def\csname LeakTrainedCI@LLM outcome@Gemma-3-27B@Ambrosia@spec@lo\endcsname{50.96}
\expandafter\def\csname LeakTrainedCI@LLM outcome@Gemma-3-27B@EW-BIRD dev@resp@hi\endcsname{3.14}
\expandafter\def\csname LeakTrainedCI@LLM outcome@Gemma-3-27B@EW-BIRD dev@resp@lo\endcsname{-3.06}
\expandafter\def\csname LeakTrainedCI@LLM outcome@Gemma-3-27B@EW-BIRD dev@spec@hi\endcsname{1.63}
\expandafter\def\csname LeakTrainedCI@LLM outcome@Gemma-3-27B@EW-BIRD dev@spec@lo\endcsname{0.00}
\expandafter\def\csname LeakTrainedCI@LLM outcome@Gemma-3-27B@TACO@resp@hi\endcsname{1.96}
\expandafter\def\csname LeakTrainedCI@LLM outcome@Gemma-3-27B@TACO@resp@lo\endcsname{-5.35}
\expandafter\def\csname LeakTrainedCI@LLM outcome@Gemma-3-27B@TACO@spec@hi\endcsname{49.45}
\expandafter\def\csname LeakTrainedCI@LLM outcome@Gemma-3-27B@TACO@spec@lo\endcsname{34.26}
\expandafter\def\csname LeakTrainedCI@LLM outcome@Gemma-3-27B@amb-BIRD dev@resp@hi\endcsname{13.39}
\expandafter\def\csname LeakTrainedCI@LLM outcome@Gemma-3-27B@amb-BIRD dev@resp@lo\endcsname{1.84}
\expandafter\def\csname LeakTrainedCI@LLM outcome@Gemma-3-27B@amb-BIRD dev@spec@hi\endcsname{4.07}
\expandafter\def\csname LeakTrainedCI@LLM outcome@Gemma-3-27B@amb-BIRD dev@spec@lo\endcsname{0.51}
\expandafter\def\csname LeakTrainedCI@LLM outcome@Qwen3-14B@AmbiQT@resp@hi\endcsname{2.79}
\expandafter\def\csname LeakTrainedCI@LLM outcome@Qwen3-14B@AmbiQT@resp@lo\endcsname{0.62}
\expandafter\def\csname LeakTrainedCI@LLM outcome@Qwen3-14B@AmbiQT@spec@hi\endcsname{14.77}
\expandafter\def\csname LeakTrainedCI@LLM outcome@Qwen3-14B@AmbiQT@spec@lo\endcsname{3.95}
\expandafter\def\csname LeakTrainedCI@LLM outcome@Qwen3-14B@Ambrosia@resp@hi\endcsname{7.53}
\expandafter\def\csname LeakTrainedCI@LLM outcome@Qwen3-14B@Ambrosia@resp@lo\endcsname{2.63}
\expandafter\def\csname LeakTrainedCI@LLM outcome@Qwen3-14B@Ambrosia@spec@hi\endcsname{8.27}
\expandafter\def\csname LeakTrainedCI@LLM outcome@Qwen3-14B@Ambrosia@spec@lo\endcsname{0.00}
\expandafter\def\csname LeakTrainedCI@LLM outcome@Qwen3-14B@EW-BIRD dev@resp@hi\endcsname{6.58}
\expandafter\def\csname LeakTrainedCI@LLM outcome@Qwen3-14B@EW-BIRD dev@resp@lo\endcsname{0.67}
\expandafter\def\csname LeakTrainedCI@LLM outcome@Qwen3-14B@EW-BIRD dev@spec@hi\endcsname{0.42}
\expandafter\def\csname LeakTrainedCI@LLM outcome@Qwen3-14B@EW-BIRD dev@spec@lo\endcsname{0.00}
\expandafter\def\csname LeakTrainedCI@LLM outcome@Qwen3-14B@TACO@resp@hi\endcsname{9.45}
\expandafter\def\csname LeakTrainedCI@LLM outcome@Qwen3-14B@TACO@resp@lo\endcsname{2.50}
\expandafter\def\csname LeakTrainedCI@LLM outcome@Qwen3-14B@TACO@spec@hi\endcsname{1.66}
\expandafter\def\csname LeakTrainedCI@LLM outcome@Qwen3-14B@TACO@spec@lo\endcsname{0.00}
\expandafter\def\csname LeakTrainedCI@LLM outcome@Qwen3-14B@amb-BIRD dev@resp@hi\endcsname{12.86}
\expandafter\def\csname LeakTrainedCI@LLM outcome@Qwen3-14B@amb-BIRD dev@resp@lo\endcsname{2.10}
\expandafter\def\csname LeakTrainedCI@LLM outcome@Qwen3-14B@amb-BIRD dev@spec@hi\endcsname{0.00}
\expandafter\def\csname LeakTrainedCI@LLM outcome@Qwen3-14B@amb-BIRD dev@spec@lo\endcsname{0.00}
\expandafter\def\csname LeakTrainedClusters@max@resp\endcsname{861}
\expandafter\def\csname LeakTrainedClusters@max@spec\endcsname{8}
\expandafter\def\csname LeakTrainedClusters@min@resp\endcsname{127}
\expandafter\def\csname LeakTrainedClusters@min@spec\endcsname{2}
\expandafter\def\csname LeakTrainedCount@above@Det. F1@Deterministic@spec\endcsname{4}
\expandafter\def\csname LeakTrainedCount@above@Det. F1@Gemma-3-27B@resp\endcsname{2}
\expandafter\def\csname LeakTrainedCount@above@Det. F1@Gemma-3-27B@spec\endcsname{4}
\expandafter\def\csname LeakTrainedCount@above@Det. F1@Qwen3-14B@resp\endcsname{0}
\expandafter\def\csname LeakTrainedCount@above@Det. F1@Qwen3-14B@spec\endcsname{3}
\expandafter\def\csname LeakTrainedCount@above@LLM outcome@Deterministic@spec\endcsname{3}
\expandafter\def\csname LeakTrainedCount@above@LLM outcome@Gemma-3-27B@resp\endcsname{2}
\expandafter\def\csname LeakTrainedCount@above@LLM outcome@Gemma-3-27B@spec\endcsname{4}
\expandafter\def\csname LeakTrainedCount@above@LLM outcome@Qwen3-14B@resp\endcsname{4}
\expandafter\def\csname LeakTrainedCount@above@LLM outcome@Qwen3-14B@spec\endcsname{3}
\expandafter\def\csname LeakTrainedCount@below@Det. F1@Deterministic@spec\endcsname{0}
\expandafter\def\csname LeakTrainedCount@below@Det. F1@Gemma-3-27B@resp\endcsname{3}
\expandafter\def\csname LeakTrainedCount@below@Det. F1@Gemma-3-27B@spec\endcsname{1}
\expandafter\def\csname LeakTrainedCount@below@Det. F1@Qwen3-14B@resp\endcsname{5}
\expandafter\def\csname LeakTrainedCount@below@Det. F1@Qwen3-14B@spec\endcsname{1}
\expandafter\def\csname LeakTrainedCount@below@LLM outcome@Deterministic@spec\endcsname{2}
\expandafter\def\csname LeakTrainedCount@below@LLM outcome@Gemma-3-27B@resp\endcsname{3}
\expandafter\def\csname LeakTrainedCount@below@LLM outcome@Gemma-3-27B@spec\endcsname{1}
\expandafter\def\csname LeakTrainedCount@below@LLM outcome@Qwen3-14B@resp\endcsname{1}
\expandafter\def\csname LeakTrainedCount@below@LLM outcome@Qwen3-14B@spec\endcsname{0}
\expandafter\def\csname LeakTrainedCount@cells@Det. F1@Deterministic@spec\endcsname{5}
\expandafter\def\csname LeakTrainedCount@cells@Det. F1@Gemma-3-27B@resp\endcsname{5}
\expandafter\def\csname LeakTrainedCount@cells@Det. F1@Gemma-3-27B@spec\endcsname{5}
\expandafter\def\csname LeakTrainedCount@cells@Det. F1@Qwen3-14B@resp\endcsname{5}
\expandafter\def\csname LeakTrainedCount@cells@Det. F1@Qwen3-14B@spec\endcsname{5}
\expandafter\def\csname LeakTrainedCount@cells@LLM outcome@Deterministic@spec\endcsname{5}
\expandafter\def\csname LeakTrainedCount@cells@LLM outcome@Gemma-3-27B@resp\endcsname{5}
\expandafter\def\csname LeakTrainedCount@cells@LLM outcome@Gemma-3-27B@spec\endcsname{5}
\expandafter\def\csname LeakTrainedCount@cells@LLM outcome@Qwen3-14B@resp\endcsname{5}
\expandafter\def\csname LeakTrainedCount@cells@LLM outcome@Qwen3-14B@spec\endcsname{5}
\expandafter\def\csname LeakTrainedDelta@Det. F1@Deterministic@AmbiQT@spec\endcsname{3.72}
\expandafter\def\csname LeakTrainedDelta@Det. F1@Deterministic@Ambrosia@spec\endcsname{0.44}
\expandafter\def\csname LeakTrainedDelta@Det. F1@Deterministic@EW-BIRD dev@spec\endcsname{1.20}
\expandafter\def\csname LeakTrainedDelta@Det. F1@Deterministic@TACO@spec\endcsname{0.00}
\expandafter\def\csname LeakTrainedDelta@Det. F1@Deterministic@amb-BIRD dev@spec\endcsname{1.26}
\expandafter\def\csname LeakTrainedDelta@Det. F1@Gemma-3-27B@AmbiQT@resp\endcsname{-3.21}
\expandafter\def\csname LeakTrainedDelta@Det. F1@Gemma-3-27B@AmbiQT@spec\endcsname{11.03}
\expandafter\def\csname LeakTrainedDelta@Det. F1@Gemma-3-27B@Ambrosia@resp\endcsname{-0.31}
\expandafter\def\csname LeakTrainedDelta@Det. F1@Gemma-3-27B@Ambrosia@spec\endcsname{5.74}
\expandafter\def\csname LeakTrainedDelta@Det. F1@Gemma-3-27B@EW-BIRD dev@resp\endcsname{1.79}
\expandafter\def\csname LeakTrainedDelta@Det. F1@Gemma-3-27B@EW-BIRD dev@spec\endcsname{0.08}
\expandafter\def\csname LeakTrainedDelta@Det. F1@Gemma-3-27B@TACO@resp\endcsname{-1.96}
\expandafter\def\csname LeakTrainedDelta@Det. F1@Gemma-3-27B@TACO@spec\endcsname{0.98}
\expandafter\def\csname LeakTrainedDelta@Det. F1@Gemma-3-27B@amb-BIRD dev@resp\endcsname{0.52}
\expandafter\def\csname LeakTrainedDelta@Det. F1@Gemma-3-27B@amb-BIRD dev@spec\endcsname{-0.70}
\expandafter\def\csname LeakTrainedDelta@Det. F1@Qwen3-14B@AmbiQT@resp\endcsname{-0.81}
\expandafter\def\csname LeakTrainedDelta@Det. F1@Qwen3-14B@AmbiQT@spec\endcsname{-0.19}
\expandafter\def\csname LeakTrainedDelta@Det. F1@Qwen3-14B@Ambrosia@resp\endcsname{-1.12}
\expandafter\def\csname LeakTrainedDelta@Det. F1@Qwen3-14B@Ambrosia@spec\endcsname{0.44}
\expandafter\def\csname LeakTrainedDelta@Det. F1@Qwen3-14B@EW-BIRD dev@resp\endcsname{-1.42}
\expandafter\def\csname LeakTrainedDelta@Det. F1@Qwen3-14B@EW-BIRD dev@spec\endcsname{0.00}
\expandafter\def\csname LeakTrainedDelta@Det. F1@Qwen3-14B@TACO@resp\endcsname{-1.96}
\expandafter\def\csname LeakTrainedDelta@Det. F1@Qwen3-14B@TACO@spec\endcsname{0.80}
\expandafter\def\csname LeakTrainedDelta@Det. F1@Qwen3-14B@amb-BIRD dev@resp\endcsname{-0.52}
\expandafter\def\csname LeakTrainedDelta@Det. F1@Qwen3-14B@amb-BIRD dev@spec\endcsname{0.14}
\expandafter\def\csname LeakTrainedDelta@LLM outcome@Deterministic@AmbiQT@spec\endcsname{-0.54}
\expandafter\def\csname LeakTrainedDelta@LLM outcome@Deterministic@Ambrosia@spec\endcsname{0.12}
\expandafter\def\csname LeakTrainedDelta@LLM outcome@Deterministic@EW-BIRD dev@spec\endcsname{0.29}
\expandafter\def\csname LeakTrainedDelta@LLM outcome@Deterministic@TACO@spec\endcsname{-0.36}
\expandafter\def\csname LeakTrainedDelta@LLM outcome@Deterministic@amb-BIRD dev@spec\endcsname{0.14}
\expandafter\def\csname LeakTrainedDelta@LLM outcome@Gemma-3-27B@AmbiQT@resp\endcsname{-0.56}
\expandafter\def\csname LeakTrainedDelta@LLM outcome@Gemma-3-27B@AmbiQT@spec\endcsname{2.17}
\expandafter\def\csname LeakTrainedDelta@LLM outcome@Gemma-3-27B@Ambrosia@resp\endcsname{-0.06}
\expandafter\def\csname LeakTrainedDelta@LLM outcome@Gemma-3-27B@Ambrosia@spec\endcsname{0.31}
\expandafter\def\csname LeakTrainedDelta@LLM outcome@Gemma-3-27B@EW-BIRD dev@resp\endcsname{4.56}
\expandafter\def\csname LeakTrainedDelta@LLM outcome@Gemma-3-27B@EW-BIRD dev@spec\endcsname{0.15}
\expandafter\def\csname LeakTrainedDelta@LLM outcome@Gemma-3-27B@TACO@resp\endcsname{-2.32}
\expandafter\def\csname LeakTrainedDelta@LLM outcome@Gemma-3-27B@TACO@spec\endcsname{-2.94}
\expandafter\def\csname LeakTrainedDelta@LLM outcome@Gemma-3-27B@amb-BIRD dev@resp\endcsname{1.84}
\expandafter\def\csname LeakTrainedDelta@LLM outcome@Gemma-3-27B@amb-BIRD dev@spec\endcsname{1.12}
\expandafter\def\csname LeakTrainedDelta@LLM outcome@Qwen3-14B@AmbiQT@resp\endcsname{0.06}
\expandafter\def\csname LeakTrainedDelta@LLM outcome@Qwen3-14B@AmbiQT@spec\endcsname{0.99}
\expandafter\def\csname LeakTrainedDelta@LLM outcome@Qwen3-14B@Ambrosia@resp\endcsname{1.06}
\expandafter\def\csname LeakTrainedDelta@LLM outcome@Qwen3-14B@Ambrosia@spec\endcsname{0.31}
\expandafter\def\csname LeakTrainedDelta@LLM outcome@Qwen3-14B@EW-BIRD dev@resp\endcsname{0.30}
\expandafter\def\csname LeakTrainedDelta@LLM outcome@Qwen3-14B@EW-BIRD dev@spec\endcsname{0.00}
\expandafter\def\csname LeakTrainedDelta@LLM outcome@Qwen3-14B@TACO@resp\endcsname{-1.25}
\expandafter\def\csname LeakTrainedDelta@LLM outcome@Qwen3-14B@TACO@spec\endcsname{0.62}
\expandafter\def\csname LeakTrainedDelta@LLM outcome@Qwen3-14B@amb-BIRD dev@resp\endcsname{3.41}
\expandafter\def\csname LeakTrainedDelta@LLM outcome@Qwen3-14B@amb-BIRD dev@spec\endcsname{0.00}
\expandafter\def\csname LeakTrainedDeltaAbs@Det. F1@Deterministic@AmbiQT@spec\endcsname{3.72}
\expandafter\def\csname LeakTrainedDeltaAbs@Det. F1@Deterministic@Ambrosia@spec\endcsname{0.44}
\expandafter\def\csname LeakTrainedDeltaAbs@Det. F1@Deterministic@EW-BIRD dev@spec\endcsname{1.20}
\expandafter\def\csname LeakTrainedDeltaAbs@Det. F1@Deterministic@TACO@spec\endcsname{0.00}
\expandafter\def\csname LeakTrainedDeltaAbs@Det. F1@Deterministic@amb-BIRD dev@spec\endcsname{1.26}
\expandafter\def\csname LeakTrainedDeltaAbs@Det. F1@Gemma-3-27B@AmbiQT@resp\endcsname{3.21}
\expandafter\def\csname LeakTrainedDeltaAbs@Det. F1@Gemma-3-27B@AmbiQT@spec\endcsname{11.03}
\expandafter\def\csname LeakTrainedDeltaAbs@Det. F1@Gemma-3-27B@Ambrosia@resp\endcsname{0.31}
\expandafter\def\csname LeakTrainedDeltaAbs@Det. F1@Gemma-3-27B@Ambrosia@spec\endcsname{5.74}
\expandafter\def\csname LeakTrainedDeltaAbs@Det. F1@Gemma-3-27B@EW-BIRD dev@resp\endcsname{1.79}
\expandafter\def\csname LeakTrainedDeltaAbs@Det. F1@Gemma-3-27B@EW-BIRD dev@spec\endcsname{0.08}
\expandafter\def\csname LeakTrainedDeltaAbs@Det. F1@Gemma-3-27B@TACO@resp\endcsname{1.96}
\expandafter\def\csname LeakTrainedDeltaAbs@Det. F1@Gemma-3-27B@TACO@spec\endcsname{0.98}
\expandafter\def\csname LeakTrainedDeltaAbs@Det. F1@Gemma-3-27B@amb-BIRD dev@resp\endcsname{0.52}
\expandafter\def\csname LeakTrainedDeltaAbs@Det. F1@Gemma-3-27B@amb-BIRD dev@spec\endcsname{0.70}
\expandafter\def\csname LeakTrainedDeltaAbs@Det. F1@Qwen3-14B@AmbiQT@resp\endcsname{0.81}
\expandafter\def\csname LeakTrainedDeltaAbs@Det. F1@Qwen3-14B@AmbiQT@spec\endcsname{0.19}
\expandafter\def\csname LeakTrainedDeltaAbs@Det. F1@Qwen3-14B@Ambrosia@resp\endcsname{1.12}
\expandafter\def\csname LeakTrainedDeltaAbs@Det. F1@Qwen3-14B@Ambrosia@spec\endcsname{0.44}
\expandafter\def\csname LeakTrainedDeltaAbs@Det. F1@Qwen3-14B@EW-BIRD dev@resp\endcsname{1.42}
\expandafter\def\csname LeakTrainedDeltaAbs@Det. F1@Qwen3-14B@EW-BIRD dev@spec\endcsname{0.00}
\expandafter\def\csname LeakTrainedDeltaAbs@Det. F1@Qwen3-14B@TACO@resp\endcsname{1.96}
\expandafter\def\csname LeakTrainedDeltaAbs@Det. F1@Qwen3-14B@TACO@spec\endcsname{0.80}
\expandafter\def\csname LeakTrainedDeltaAbs@Det. F1@Qwen3-14B@amb-BIRD dev@resp\endcsname{0.52}
\expandafter\def\csname LeakTrainedDeltaAbs@Det. F1@Qwen3-14B@amb-BIRD dev@spec\endcsname{0.14}
\expandafter\def\csname LeakTrainedDeltaAbs@LLM outcome@Deterministic@AmbiQT@spec\endcsname{0.54}
\expandafter\def\csname LeakTrainedDeltaAbs@LLM outcome@Deterministic@Ambrosia@spec\endcsname{0.12}
\expandafter\def\csname LeakTrainedDeltaAbs@LLM outcome@Deterministic@EW-BIRD dev@spec\endcsname{0.29}
\expandafter\def\csname LeakTrainedDeltaAbs@LLM outcome@Deterministic@TACO@spec\endcsname{0.36}
\expandafter\def\csname LeakTrainedDeltaAbs@LLM outcome@Deterministic@amb-BIRD dev@spec\endcsname{0.14}
\expandafter\def\csname LeakTrainedDeltaAbs@LLM outcome@Gemma-3-27B@AmbiQT@resp\endcsname{0.56}
\expandafter\def\csname LeakTrainedDeltaAbs@LLM outcome@Gemma-3-27B@AmbiQT@spec\endcsname{2.17}
\expandafter\def\csname LeakTrainedDeltaAbs@LLM outcome@Gemma-3-27B@Ambrosia@resp\endcsname{0.06}
\expandafter\def\csname LeakTrainedDeltaAbs@LLM outcome@Gemma-3-27B@Ambrosia@spec\endcsname{0.31}
\expandafter\def\csname LeakTrainedDeltaAbs@LLM outcome@Gemma-3-27B@EW-BIRD dev@resp\endcsname{4.56}
\expandafter\def\csname LeakTrainedDeltaAbs@LLM outcome@Gemma-3-27B@EW-BIRD dev@spec\endcsname{0.15}
\expandafter\def\csname LeakTrainedDeltaAbs@LLM outcome@Gemma-3-27B@TACO@resp\endcsname{2.32}
\expandafter\def\csname LeakTrainedDeltaAbs@LLM outcome@Gemma-3-27B@TACO@spec\endcsname{2.94}
\expandafter\def\csname LeakTrainedDeltaAbs@LLM outcome@Gemma-3-27B@amb-BIRD dev@resp\endcsname{1.84}
\expandafter\def\csname LeakTrainedDeltaAbs@LLM outcome@Gemma-3-27B@amb-BIRD dev@spec\endcsname{1.12}
\expandafter\def\csname LeakTrainedDeltaAbs@LLM outcome@Qwen3-14B@AmbiQT@resp\endcsname{0.06}
\expandafter\def\csname LeakTrainedDeltaAbs@LLM outcome@Qwen3-14B@AmbiQT@spec\endcsname{0.99}
\expandafter\def\csname LeakTrainedDeltaAbs@LLM outcome@Qwen3-14B@Ambrosia@resp\endcsname{1.06}
\expandafter\def\csname LeakTrainedDeltaAbs@LLM outcome@Qwen3-14B@Ambrosia@spec\endcsname{0.31}
\expandafter\def\csname LeakTrainedDeltaAbs@LLM outcome@Qwen3-14B@EW-BIRD dev@resp\endcsname{0.30}
\expandafter\def\csname LeakTrainedDeltaAbs@LLM outcome@Qwen3-14B@EW-BIRD dev@spec\endcsname{0.00}
\expandafter\def\csname LeakTrainedDeltaAbs@LLM outcome@Qwen3-14B@TACO@resp\endcsname{1.25}
\expandafter\def\csname LeakTrainedDeltaAbs@LLM outcome@Qwen3-14B@TACO@spec\endcsname{0.62}
\expandafter\def\csname LeakTrainedDeltaAbs@LLM outcome@Qwen3-14B@amb-BIRD dev@resp\endcsname{3.41}
\expandafter\def\csname LeakTrainedDeltaAbs@LLM outcome@Qwen3-14B@amb-BIRD dev@spec\endcsname{0.00}
\expandafter\def\csname LeakTrainedDiff@Gemma-3-27B@AmbiQT@resp@hi\endcsname{4.03}
\expandafter\def\csname LeakTrainedDiff@Gemma-3-27B@AmbiQT@resp@lo\endcsname{1.32}
\expandafter\def\csname LeakTrainedDiff@Gemma-3-27B@AmbiQT@resp@value\endcsname{2.65}
\expandafter\def\csname LeakTrainedDiff@Gemma-3-27B@AmbiQT@spec@hi\endcsname{-7.27}
\expandafter\def\csname LeakTrainedDiff@Gemma-3-27B@AmbiQT@spec@lo\endcsname{-12.49}
\expandafter\def\csname LeakTrainedDiff@Gemma-3-27B@AmbiQT@spec@value\endcsname{-8.87}
\expandafter\def\csname LeakTrainedDiff@Gemma-3-27B@Ambrosia@resp@hi\endcsname{3.17}
\expandafter\def\csname LeakTrainedDiff@Gemma-3-27B@Ambrosia@resp@lo\endcsname{-2.75}
\expandafter\def\csname LeakTrainedDiff@Gemma-3-27B@Ambrosia@resp@value\endcsname{0.25}
\expandafter\def\csname LeakTrainedDiff@Gemma-3-27B@Ambrosia@spec@hi\endcsname{5.07}
\expandafter\def\csname LeakTrainedDiff@Gemma-3-27B@Ambrosia@spec@lo\endcsname{-6.42}
\expandafter\def\csname LeakTrainedDiff@Gemma-3-27B@Ambrosia@spec@value\endcsname{-5.43}
\expandafter\def\csname LeakTrainedDiff@Gemma-3-27B@EW-BIRD dev@resp@hi\endcsname{6.21}
\expandafter\def\csname LeakTrainedDiff@Gemma-3-27B@EW-BIRD dev@resp@lo\endcsname{-0.82}
\expandafter\def\csname LeakTrainedDiff@Gemma-3-27B@EW-BIRD dev@resp@value\endcsname{2.77}
\expandafter\def\csname LeakTrainedDiff@Gemma-3-27B@EW-BIRD dev@spec@hi\endcsname{0.25}
\expandafter\def\csname LeakTrainedDiff@Gemma-3-27B@EW-BIRD dev@spec@lo\endcsname{-0.52}
\expandafter\def\csname LeakTrainedDiff@Gemma-3-27B@EW-BIRD dev@spec@value\endcsname{0.07}
\expandafter\def\csname LeakTrainedDiff@Gemma-3-27B@TACO@resp@hi\endcsname{3.21}
\expandafter\def\csname LeakTrainedDiff@Gemma-3-27B@TACO@resp@lo\endcsname{-3.92}
\expandafter\def\csname LeakTrainedDiff@Gemma-3-27B@TACO@resp@value\endcsname{-0.36}
\expandafter\def\csname LeakTrainedDiff@Gemma-3-27B@TACO@spec@hi\endcsname{-0.88}
\expandafter\def\csname LeakTrainedDiff@Gemma-3-27B@TACO@spec@lo\endcsname{-16.67}
\expandafter\def\csname LeakTrainedDiff@Gemma-3-27B@TACO@spec@value\endcsname{-3.92}
\expandafter\def\csname LeakTrainedDiff@Gemma-3-27B@amb-BIRD dev@resp@hi\endcsname{6.82}
\expandafter\def\csname LeakTrainedDiff@Gemma-3-27B@amb-BIRD dev@resp@lo\endcsname{-4.99}
\expandafter\def\csname LeakTrainedDiff@Gemma-3-27B@amb-BIRD dev@resp@value\endcsname{1.31}
\expandafter\def\csname LeakTrainedDiff@Gemma-3-27B@amb-BIRD dev@spec@hi\endcsname{3.81}
\expandafter\def\csname LeakTrainedDiff@Gemma-3-27B@amb-BIRD dev@spec@lo\endcsname{0.44}
\expandafter\def\csname LeakTrainedDiff@Gemma-3-27B@amb-BIRD dev@spec@value\endcsname{1.82}
\expandafter\def\csname LeakTrainedDiff@Qwen3-14B@AmbiQT@resp@hi\endcsname{2.19}
\expandafter\def\csname LeakTrainedDiff@Qwen3-14B@AmbiQT@resp@lo\endcsname{-0.48}
\expandafter\def\csname LeakTrainedDiff@Qwen3-14B@AmbiQT@resp@value\endcsname{0.87}
\expandafter\def\csname LeakTrainedDiff@Qwen3-14B@AmbiQT@spec@hi\endcsname{3.51}
\expandafter\def\csname LeakTrainedDiff@Qwen3-14B@AmbiQT@spec@lo\endcsname{-2.05}
\expandafter\def\csname LeakTrainedDiff@Qwen3-14B@AmbiQT@spec@value\endcsname{1.18}
\expandafter\def\csname LeakTrainedDiff@Qwen3-14B@Ambrosia@resp@hi\endcsname{5.08}
\expandafter\def\csname LeakTrainedDiff@Qwen3-14B@Ambrosia@resp@lo\endcsname{-0.86}
\expandafter\def\csname LeakTrainedDiff@Qwen3-14B@Ambrosia@resp@value\endcsname{2.18}
\expandafter\def\csname LeakTrainedDiff@Qwen3-14B@Ambrosia@spec@hi\endcsname{-0.07}
\expandafter\def\csname LeakTrainedDiff@Qwen3-14B@Ambrosia@spec@lo\endcsname{-0.72}
\expandafter\def\csname LeakTrainedDiff@Qwen3-14B@Ambrosia@spec@value\endcsname{-0.12}
\expandafter\def\csname LeakTrainedDiff@Qwen3-14B@EW-BIRD dev@resp@hi\endcsname{5.23}
\expandafter\def\csname LeakTrainedDiff@Qwen3-14B@EW-BIRD dev@resp@lo\endcsname{-1.87}
\expandafter\def\csname LeakTrainedDiff@Qwen3-14B@EW-BIRD dev@resp@value\endcsname{1.72}
\expandafter\def\csname LeakTrainedDiff@Qwen3-14B@EW-BIRD dev@spec@hi\endcsname{0.00}
\expandafter\def\csname LeakTrainedDiff@Qwen3-14B@EW-BIRD dev@spec@lo\endcsname{0.00}
\expandafter\def\csname LeakTrainedDiff@Qwen3-14B@EW-BIRD dev@spec@value\endcsname{0.00}
\expandafter\def\csname LeakTrainedDiff@Qwen3-14B@TACO@resp@hi\endcsname{4.10}
\expandafter\def\csname LeakTrainedDiff@Qwen3-14B@TACO@resp@lo\endcsname{-2.85}
\expandafter\def\csname LeakTrainedDiff@Qwen3-14B@TACO@resp@value\endcsname{0.71}
\expandafter\def\csname LeakTrainedDiff@Qwen3-14B@TACO@spec@hi\endcsname{0.22}
\expandafter\def\csname LeakTrainedDiff@Qwen3-14B@TACO@spec@lo\endcsname{-1.85}
\expandafter\def\csname LeakTrainedDiff@Qwen3-14B@TACO@spec@value\endcsname{-0.18}
\expandafter\def\csname LeakTrainedDiff@Qwen3-14B@amb-BIRD dev@resp@hi\endcsname{10.24}
\expandafter\def\csname LeakTrainedDiff@Qwen3-14B@amb-BIRD dev@resp@lo\endcsname{-2.36}
\expandafter\def\csname LeakTrainedDiff@Qwen3-14B@amb-BIRD dev@resp@value\endcsname{3.94}
\expandafter\def\csname LeakTrainedDiff@Qwen3-14B@amb-BIRD dev@spec@hi\endcsname{0.00}
\expandafter\def\csname LeakTrainedDiff@Qwen3-14B@amb-BIRD dev@spec@lo\endcsname{-0.60}
\expandafter\def\csname LeakTrainedDiff@Qwen3-14B@amb-BIRD dev@spec@value\endcsname{-0.14}
\expandafter\def\csname LeakTrainedDiffCount@excluding@resp\endcsname{1}
\expandafter\def\csname LeakTrainedDiffCount@excluding@spec\endcsname{4}
\expandafter\def\csname LeakTrainedDiffCount@higher@resp\endcsname{9}
\expandafter\def\csname LeakTrainedDiffCount@higher@spec\endcsname{3}
\expandafter\def\csname LeakTrainedDiffCount@pairs@resp\endcsname{10}
\expandafter\def\csname LeakTrainedDiffCount@pairs@spec\endcsname{10}
\expandafter\def\csname LeakTrainedInfo@blocks\endcsname{60}
\expandafter\def\csname LeakTrainedInfo@episodes\endcsname{144{,}768}
\expandafter\def\csname LeakTrainedInfo@rerun_blocks\endcsname{1}
\expandafter\def\csname LeakTrainedInfo@runs\endcsname{240}
\expandafter\def\csname LeakTrainedRange@Det. F1@Deterministic@spec@absmax\endcsname{3.72}
\expandafter\def\csname LeakTrainedRange@Det. F1@Deterministic@spec@absmin\endcsname{0.00}
\expandafter\def\csname LeakTrainedRange@Det. F1@Deterministic@spec@max\endcsname{3.72}
\expandafter\def\csname LeakTrainedRange@Det. F1@Deterministic@spec@min\endcsname{0.00}
\expandafter\def\csname LeakTrainedRange@Det. F1@Gemma-3-27B@resp@absmax\endcsname{3.21}
\expandafter\def\csname LeakTrainedRange@Det. F1@Gemma-3-27B@resp@absmin\endcsname{0.31}
\expandafter\def\csname LeakTrainedRange@Det. F1@Gemma-3-27B@resp@max\endcsname{1.79}
\expandafter\def\csname LeakTrainedRange@Det. F1@Gemma-3-27B@resp@min\endcsname{-3.21}
\expandafter\def\csname LeakTrainedRange@Det. F1@Gemma-3-27B@spec@absmax\endcsname{11.03}
\expandafter\def\csname LeakTrainedRange@Det. F1@Gemma-3-27B@spec@absmin\endcsname{0.08}
\expandafter\def\csname LeakTrainedRange@Det. F1@Gemma-3-27B@spec@max\endcsname{11.03}
\expandafter\def\csname LeakTrainedRange@Det. F1@Gemma-3-27B@spec@min\endcsname{-0.70}
\expandafter\def\csname LeakTrainedRange@Det. F1@Qwen3-14B@resp@absmax\endcsname{1.96}
\expandafter\def\csname LeakTrainedRange@Det. F1@Qwen3-14B@resp@absmin\endcsname{0.52}
\expandafter\def\csname LeakTrainedRange@Det. F1@Qwen3-14B@resp@max\endcsname{-0.52}
\expandafter\def\csname LeakTrainedRange@Det. F1@Qwen3-14B@resp@min\endcsname{-1.96}
\expandafter\def\csname LeakTrainedRange@Det. F1@Qwen3-14B@spec@absmax\endcsname{0.80}
\expandafter\def\csname LeakTrainedRange@Det. F1@Qwen3-14B@spec@absmin\endcsname{0.00}
\expandafter\def\csname LeakTrainedRange@Det. F1@Qwen3-14B@spec@max\endcsname{0.80}
\expandafter\def\csname LeakTrainedRange@Det. F1@Qwen3-14B@spec@min\endcsname{-0.19}
\expandafter\def\csname LeakTrainedRange@LLM outcome@Deterministic@spec@absmax\endcsname{0.54}
\expandafter\def\csname LeakTrainedRange@LLM outcome@Deterministic@spec@absmin\endcsname{0.12}
\expandafter\def\csname LeakTrainedRange@LLM outcome@Deterministic@spec@max\endcsname{0.29}
\expandafter\def\csname LeakTrainedRange@LLM outcome@Deterministic@spec@min\endcsname{-0.54}
\expandafter\def\csname LeakTrainedRange@LLM outcome@Gemma-3-27B@resp@absmax\endcsname{4.56}
\expandafter\def\csname LeakTrainedRange@LLM outcome@Gemma-3-27B@resp@absmin\endcsname{0.06}
\expandafter\def\csname LeakTrainedRange@LLM outcome@Gemma-3-27B@resp@max\endcsname{4.56}
\expandafter\def\csname LeakTrainedRange@LLM outcome@Gemma-3-27B@resp@min\endcsname{-2.32}
\expandafter\def\csname LeakTrainedRange@LLM outcome@Gemma-3-27B@spec@absmax\endcsname{2.94}
\expandafter\def\csname LeakTrainedRange@LLM outcome@Gemma-3-27B@spec@absmin\endcsname{0.15}
\expandafter\def\csname LeakTrainedRange@LLM outcome@Gemma-3-27B@spec@max\endcsname{2.17}
\expandafter\def\csname LeakTrainedRange@LLM outcome@Gemma-3-27B@spec@min\endcsname{-2.94}
\expandafter\def\csname LeakTrainedRange@LLM outcome@Qwen3-14B@resp@absmax\endcsname{3.41}
\expandafter\def\csname LeakTrainedRange@LLM outcome@Qwen3-14B@resp@absmin\endcsname{0.06}
\expandafter\def\csname LeakTrainedRange@LLM outcome@Qwen3-14B@resp@max\endcsname{3.41}
\expandafter\def\csname LeakTrainedRange@LLM outcome@Qwen3-14B@resp@min\endcsname{-1.25}
\expandafter\def\csname LeakTrainedRange@LLM outcome@Qwen3-14B@spec@absmax\endcsname{0.99}
\expandafter\def\csname LeakTrainedRange@LLM outcome@Qwen3-14B@spec@absmin\endcsname{0.00}
\expandafter\def\csname LeakTrainedRange@LLM outcome@Qwen3-14B@spec@max\endcsname{0.99}
\expandafter\def\csname LeakTrainedRange@LLM outcome@Qwen3-14B@spec@min\endcsname{0.00}

\begin{table}[t]
\centering\small
\setlength{\tabcolsep}{2pt}
\renewcommand{\arraystretch}{0.92}
\caption{Question-F1 reward results with the deterministic oracle and fixed solver: SR, precision, and recall (\%) of the untrained (Base) and trained (RL) askers. Dist.: in-distribution (ID) development splits or out-of-distribution (OOD) corpora; \benchmarkshort{} abbreviates \benchmarkname{}. $\Delta$: RL minus Base (pp). RL averages three evaluation seeds and Base five; per-corpus seed SDs are at most 1.5 pp (\Cref{tab:ablations-dev}). SR ties at displayed precision show zero $\Delta$. The mean weights corpora equally. Full comparison in \Cref{tab:ablations-dev}.}
\label{tab:ablations-summary}
\scalebox{0.8}{%
\begin{tabular*}{\textwidth}{@{\extracolsep{\fill}}llrrrrrrrrr@{}}
\toprule
 & & \multicolumn{3}{c}{SR} & \multicolumn{3}{c}{Precision} & \multicolumn{3}{c}{Recall} \\
\cmidrule(lr){3-5}\cmidrule(lr){6-8}\cmidrule(l){9-11}
Dist. & Corpus & Base & RL & $\Delta$ & Base & RL & $\Delta$ & Base & RL & $\Delta$ \\
\midrule
ID & \benchmarkshort{}-Bird dev & 59.5 & 71.9 & +12.4 & 31.0 & 37.0 & +6.0 & 67.7 & 83.4 & +15.7 \\
 & \benchmarkshort{}-EW-Bird dev & 69.0 & 72.2 & +3.2 & 33.6 & 38.0 & +4.4 & 69.8 & 79.3 & +9.5 \\
\midrule
OOD & \benchmarkshort{}-BIRD-Interact & 9.7 & 9.7 & +0.0 & 32.1 & 36.4 & +4.3 & 57.7 & 65.1 & +7.4 \\
 & \benchmarkshort{}-Ambrosia & 35.0 & 37.3 & +2.2 & 24.0 & 27.3 & +3.3 & 71.7 & 81.8 & +10.1 \\
 & \benchmarkshort{}-AmbiQT & 56.3 & 59.4 & +3.1 & 19.5 & 21.7 & +2.2 & 58.3 & 65.0 & +6.7 \\
 & \benchmarkshort{}-TACO & 72.8 & 77.0 & +4.2 & 29.1 & 30.7 & +1.5 & 85.1 & 89.8 & +4.7 \\
\midrule
All & Mean & 50.4 & 54.6 & \textbf{+4.2} & 28.2 & 31.8 & \textbf{+3.6} & 68.4 & 77.4 & \textbf{+9.0} \\
\bottomrule
\end{tabular*}}
\end{table}

\subsection{Training improves the asking policy}
\label{sec:results-rl}

\paragraph{Training configuration.}
To improve both precision and recall (\textbf{Q3}), we train the asker on \benchmarkname{}-Bird with the deterministic oracle, using a multi-turn GRPO variant with DAPO-style asymmetric clipping and zero-advantage group filtering~\citep{shao2024grpo,yu2025dapo}.
Our main policy uses a \emph{question-F1 reward}, the harmonic mean of asking precision and recall within each episode.
Requests with ambiguities receive zero reward when no question is productive; requests without ambiguities receive one for abstaining and zero for asking.
The reward encourages efficient ambiguity coverage without requiring solver output; \Cref{sec:appendix-asker} gives the formal definition and training details.

\paragraph{Results.}
We use the development splits of \benchmarkname{}-Bird and \benchmarkname{}-EW-Bird for in-distribution evaluation and the remaining four datasets for out-of-distribution evaluation (\Cref{tab:ablations-summary}).
Under the deterministic oracle and fixed solver, the trained policy improves macro precision and recall over the untrained asker on all six datasets (\Cref{fig:trained-leakage}a).
Equally weighted means across the six datasets increase from $\RLMean{base}{precision}\%$ to $\RLMean{det_f1}{precision}\%$ precision, $\RLMean{base}{recall}\%$ to $\RLMean{det_f1}{recall}\%$ recall, and $\RLMean{base}{SR}\%$ to $\RLMean{det_f1}{SR}\%$ SR.
Results average three inference seeds for one trained policy per configuration and five for the untrained asker.
SR gains vary across datasets, consistent with the distinction between acquiring resolutions and using them successfully (\Cref{sec:theory}).

\paragraph{Ablations.}
\label{sec:results-ablations}
We compare the policy trained with question-F1 reward with two variants trained on final SQL correctness (\emph{outcome reward}), using either the deterministic oracle or Qwen3-14B.
Both achieve lower mean precision, recall, and SR under the same evaluation protocol (\Cref{fig:trained-leakage}a shows the Qwen3-14B variant).
Unlike question-F1, outcome reward depends on the solver's ability to use the responses.
A separate outcome-reward run on \benchmarkname{}-BIRD-Interact failed to sustain clarification improvements before crashing (\Cref{sec:asker-native-diagnostics}).

\paragraph{Leakage.}
We evaluate the leakage diagnostics for the question-F1 policy and the outcome-reward policy trained with Qwen3-14B (\Cref{sec:leakage-exploratory}).
Question-F1 training does not raise responder leakage under Qwen3-14B (\Cref{fig:trained-leakage}b): its $\DiagResp$ is below the untrained asker's on \LeakTrainedCount{below}{Det. F1}{Qwen3-14B}{resp} of \LeakTrainedCount{cells}{Det. F1}{Qwen3-14B}{resp} evaluated datasets; under Gemma-3-27B it changes by $\LeakTrainedDelta{Det. F1}{Gemma-3-27B}{AmbiQT}{resp}$ to $\LeakTrainedDelta{Det. F1}{Gemma-3-27B}{EW-BIRD dev}{resp}$ points, rising on \LeakTrainedCount{above}{Det. F1}{Gemma-3-27B}{resp}.
Its $\DiagSpecQ$ point estimates rise on \LeakTrainedCount{above}{Det. F1}{Gemma-3-27B}{spec} of \LeakTrainedCount{cells}{Det. F1}{Gemma-3-27B}{spec} datasets under Gemma-3-27B, most on \benchmarkname{}-AmbiQT and \benchmarkname{}-Ambrosia (\Cref{fig:trained-leakage}c), beyond the rise of the deterministic reference ($\LeakTrainedDelta{Det. F1}{Gemma-3-27B}{AmbiQT}{spec}$ against $\LeakTrainedDelta{Det. F1}{Deterministic}{AmbiQT}{spec}$ and $\LeakTrainedDelta{Det. F1}{Gemma-3-27B}{Ambrosia}{spec}$ against $\LeakTrainedDelta{Det. F1}{Deterministic}{Ambrosia}{spec}$ points), which suggests phrasings that Gemma-3-27B's matcher credits regardless of the task.
The outcome-reward policy has higher $\DiagResp$ than the question-F1 policy in \LeakTrainedDiffCount{higher}{resp} of \LeakTrainedDiffCount{pairs}{resp} dataset--oracle pairs, but only \LeakTrainedDiffCount{excluding}{resp} paired interval excludes zero, and the comparison fixed before the runs is null (\Cref{tab:leakage-trained-contrasts}).
The untrained asker's leakage values come from a separate serving session.
With one training seed per configuration and unequal training budgets, these comparisons support our choice of question-F1 but do not establish a general ranking of rewards or training oracles (\Cref{sec:ablations-details}).

\begin{figure}[t]
\centering
\includegraphics[scale=0.8]{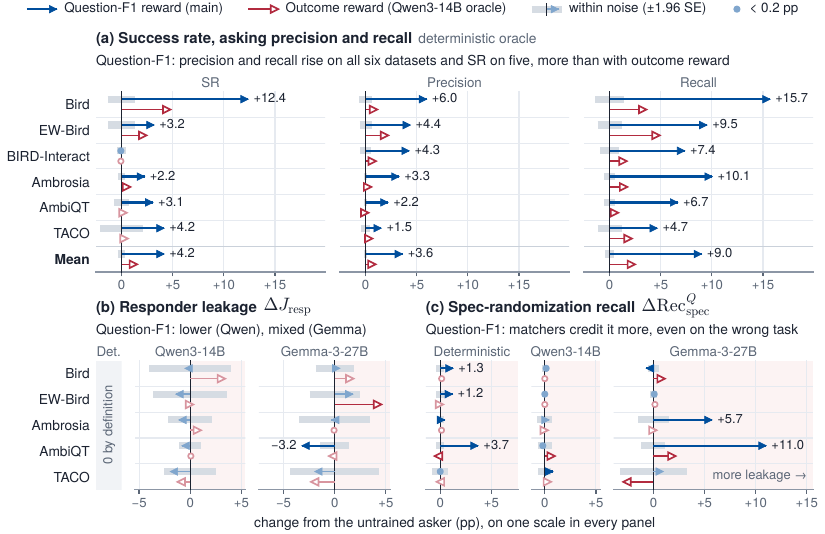}
\caption{Change from the untrained asker (\Cref{tab:ablations-summary,tab:ablations-dev,tab:leakage-trained-askers}); gray bands span $\pm1.96$ SE of the seed-mean difference.
Bird and EW-Bird are dev splits.}
\label{fig:trained-leakage}
\end{figure}

\section{Conclusions}

We introduced a leakage-aware framework for evaluating and training conversational task disambiguation over tabular data, based on ambiguous verifiable tasks and an ask-then-solve protocol.
The framework separates the asker, solver, oracle, and verifier, and treats precision and recall as essential alongside success rate.
No-disambiguation and full-disambiguation evaluations provide empirical reference points for a fixed solver and oracle.
Together with leakage diagnostics, these measures help distinguish failures in acquiring information, using it, and respecting disclosure boundaries.

With the deterministic oracle, three-seed evaluations of question-F1 training show higher precision and recall across all six corpora.
Our six-corpus benchmark remains unsaturated: clarification and asker training improve performance, but substantial gaps in question quality and task success persist.
Future work will develop better training methods for askers and expand the benchmark to broader tasks, ambiguity types, and interaction settings.

\subsection*{AI use statement}

Generative AI was used both within the systems studied and as a research tool.
Within the systems studied, language models generated synthetic benchmark data: an LLM authored the audited \benchmarkname{}-BIRD-Interact response catalog, and Qwen generated the AmbiTab-Bird rewrites, with Qwen providing admission tests and Gemma necessity judgments.
Qwen and Gemma models also filtered \benchmarkname{}-EW-Bird records; the other dataset conversions did not use a language-model authoring stage (\Cref{section:data,sec:appendix-conversions}).
In the experiments, language models served as asking policies, SQL solvers, and evaluated oracles, and Qwen3-8B was trained as the asking policy (\Cref{sec:results-setup,sec:appendix-oracles,sec:results-rl,sec:appendix-asker}).

As a research tool, we used generative AI tools for the following tasks with required disclosure: assisting in the writing of proofs, designing and giving feedback on experiments, implementing methods, cleaning and reformatting datasets, supporting qualitative data analysis, and interpreting results.
We did not use generative AI tools to develop the theoretical framework, formulate mathematical claims, provide critical ingredients for proofs, or propose or refine hypotheses, and translation is not applicable to this work.
Additionally, we used generative AI tools for tasks with recommended disclosure: writing and editing code, including code that checks and formats recorded experimental results; double-checking the theoretical framework; literature search and discovery; and drafting, editing, and polishing the manuscript and LaTeX source.

We have reviewed all AI-assisted work.
The authors checked every proof and mathematical claim by hand, reviewed and tested AI-written code and inspected the resulting dialogue trajectories, checked the empirical claims against the recorded results, and verified every cited reference against its source.
We take responsibility for the final content of this work, including text, claims, data, code, and other artifacts produced with the aid of generative AI.

\subsection*{Code and data availability}

Code and data will be made available with the next version of this preprint.

\subsection*{Reproducibility statement}

Access to the source code and datasets is described in the code and data availability statement.
The assumptions of the theory and complete proofs are in \Cref{sec:appendix-proofs}, and the estimation procedures for the metrics and diagnostics are in \Cref{sec:theory-estimation}.
Dataset construction and oracle implementations are described in \Cref{sec:appendix-data}.
Evaluation protocols are described in \Cref{sec:results-setup}, and decoding settings, budgets, seed counts, and aggregation in \Cref{sec:appendix-inference}.
Training hyperparameters, seed counts, checkpoint selection, and limitations of the ablation comparisons are documented in \Cref{sec:appendix-asker}.
The settings of the leakage experiments are provided in \Cref{sec:leakage-results}.

\subsection*{Ethics statement}

This work studies conversational task disambiguation over tabular data using existing text-to-SQL benchmarks and simulated user responses.
It involves no human subjects and collects no new personal data; all data come from public benchmarks.
Some records contain model-authored text (see the AI use statement), which may carry model errors or biases.
The source benchmarks are public and may appear in the pretraining data of the evaluated models, which can inflate absolute success rates; our comparisons hold the models fixed across conditions.
Benchmark and model biases may carry over to the resulting systems, and simulated interactions do not establish effectiveness with real users.
Deployment would require appropriate database access controls and safeguards against sensitive information disclosure and incorrect queries.

\appendix

\section{Theory details and proofs}
\label[appendix]{sec:appendix-proofs}

\subsection{Conventions}
\label[appendix]{sec:theory-conventions}

All spaces are standard Borel, and all maps and probability kernels are measurable.
For a set $S$, $\mathcal P(S)$ denotes the set of probability distributions on $S$.
The \emph{total variation distance} between two distributions $P$ and $P'$ on the same space is $\TV(P,P'):=\sup_A|P(A)-P'(A)|$, where the supremum runs over events $A$.
We write $\theta\sim\Theta$ for a draw from the distribution of interpretations.
\Cref{tab:notation} collects the main notation used here and in \Cref{sec:theory}.

\begin{table}[ht]
\centering
\small
\caption{Notation.}
\label{tab:notation}
\renewcommand{\arraystretch}{1.15}
\begin{tabular}{@{}>{\raggedright\arraybackslash}p{0.25\linewidth}>{\raggedright\arraybackslash}p{0.2\linewidth}>{\raggedright\arraybackslash}p{0.47\linewidth}@{}}
\toprule
Symbol & Name & Reading \\
\midrule
\multicolumn{3}{@{}l}{\emph{Ambiguous verifiable task}} \\
$\Text$ & texts & Specifications, ambiguities, questions, responses, resolutions, and solutions \\
$x$ & task specification & The public request, visible to the agent \\
$\Theta$, $\theta$ & interpretations & $\Theta$ is their distribution; $\theta\sim\Theta$ is the hidden reading of $x$, drawn once per episode \\
$\Amb$, $\alpha$ & ambiguities & The finite set of registered ambiguities of $x$, each named by a text \\
$\res$ & resolution map & $\res(\theta,\alpha)$ is what $\theta$ decides for ambiguity $\alpha$ \\
$\Verif$ & verifier & $\Verif(\theta,s)\in\{0,1\}$ scores a solution against $\theta$ \\
\midrule
\multicolumn{3}{@{}l}{\emph{Agent, environment, and protocol}} \\
$\Agent$, $\Env$, $\traj$ & agent, environment, trajectory & The process the metrics average over \\
$\AskPolicy$, $\SolutionPolicy$ & asker, solver & The two components of the agent \\
$\AskPolicyFull$, $\AskPolicyNo$ & full, no disambiguation & Reference askers for the two baselines \\
$\AskPolicyAdvFull$, $\AskPolicyAdvNo$ & adversarial probes & Full and zero coverage, with adversarial wording \\
$\AskPolicySpecRand$ & specification-randomized & $\AskPolicy$ fed a resampled specification \\
$\Oracle=(\mch,\rsp)$ & oracle & Matcher and responder; $\OraclePerf$ is the perfect oracle \\
$\qst_i$, $\ans_i$, $\hist$ & question, response, history & The visible dialogue \\
$\cred_i$, $\cred_{\leq i}$ & credited ambiguities & Credited at turn $i$, and through turn $i$ \\
$B$, $\bot$ & budget, stop symbol & At most $B$ questions; $\bot$ ends the asking phase \\
\midrule
\multicolumn{3}{@{}l}{\emph{Metrics, leakage, and diagnostics}} \\
$J$, $\Rcl$, $\Prc$ & success, recall, precision & The success metrics \\
$\JFull$, $\JNo$ & baselines & Success under $\AskPolicyFull$ and $\AskPolicyNo$ \\
$\LeakJ$, $\LeakRec$ & leakage & Signed gaps against the perfect oracle \\
$\DiagResp$, $\DiagSpecQ$, $\DiagFull$, $\DiagRecNone$ & diagnostics & Computable surrogates for leakage \\
\bottomrule
\end{tabular}
\end{table}

\subsection{The ask-then-solve protocol}
\label[appendix]{sec:avt-protocol}
Fix an AVT $(x,\Theta,\Amb,\res,\Verif)$, an integer \emph{question budget} $B\geq|\Amb|$, and a stop symbol $\bot\notin\Text$.
In practice, one can set $B=|\Amb|+C$ for a constant $C\geq0$.
A \emph{dialogue history} $\hist$ is any prefix of $(x,(\qst_1,\ans_1),\dots,(\qst_n,\ans_n),\bot)$ that includes $x$, with $\qst_i,\ans_i\in\Text$ and $0\leq n\leq B$.

An \emph{agent} consists of an asking policy $\AskPolicy$ and a solution policy $\SolutionPolicy$:
\begin{itemize}
	\itemsep0em
	\item A conditional distribution $\AskPolicy(\cdot \mid \hist) \in \mathcal P(\Text \cup \{\bot\})$ on histories without $\bot$,
	which must return $\bot$ after $B$ questions, but is otherwise not required to exhaust the budget $B$;
	\item A conditional distribution $\SolutionPolicy(\cdot \mid \hist) \in \mathcal P(\Text)$ on histories ending in $\bot$.
\end{itemize}
An \emph{environment} consists of an oracle $\Oracle=(\mch,\rsp)$ and a verifier $\Verif$:
\begin{itemize}
	\itemsep0em
	\item A conditional distribution $\mch(\cdot \mid \qst,\hist,\theta) \in \mathcal P(2^{\Amb})$, crediting a set $\cred$ of ambiguities to a question $\qst$;
	\item A conditional distribution $\rsp(\cdot \mid \qst,\cred,\hist,\theta) \in \mathcal P(\Text)$;
	\item The AVT's verifier $\Verif$.
\end{itemize}
Thus, $\Env=(\Oracle,\Verif)$ in the notation of \Cref{sec:theory}, where the trajectory-level score $\Verif(\traj)$ used there is $\Verif(\theta,\hat s)$ for the episode's hidden interpretation and final submission.

An episode proceeds as follows:
\begin{enumerate}
	\itemsep0em
	\item Sample $\theta\sim\Theta$, keep it hidden from the agent, and set $\hist_0 = (x)$.
	\item Sample the next question or stop decision from $\AskPolicy(\cdot\mid\hist)$.
	For a question $\qst$, sample the credited ambiguities $\cred \sim \mch(\cdot \mid \qst,\hist,\theta)$ and the response $\ans \sim \rsp(\cdot \mid \qst,\cred,\hist,\theta)$, append $(\qst,\ans)$ to $\hist$, and repeat this step.
	The credited ambiguities are not shown to the agent.
	\item When the asking policy produces $\bot$, append it to $\hist$ and sample $\hat s \sim \SolutionPolicy(\cdot \mid \hist)$.
	The episode ends with score $\Verif(\theta,\hat s)$.
\end{enumerate}
The trajectory visible to the agent is thus $(x,(\qst_1,\ans_1),\dots,(\qst_n,\ans_n),\bot,\hat s)$, and the question budget ensures termination.

\subsection{Metrics, baselines, leakage, and diagnostics}
\label[appendix]{sec:theory-definitions}

Fix the solution policy and verifier, abbreviate $(\AskPolicy,\SolutionPolicy;\Oracle,\Verif)$ as $(\AskPolicy;\Oracle)$, and fix a perfect oracle $\OraclePerf=(\mchPerf,\rspPerf)$ modeling an idealized user.

\subsubsection{Success metrics and baselines}
\begin{definition}
\label{met:prec-rec}
In an episode with $n$ questions, write $\cred_i \subseteq \Amb$ for the ambiguities credited to question $i$ and $\cred_{\leq i}:=\bigcup_{j=1}^i\cred_j$, so that $\cred_{\leq 0}=\emptyset$.
The \emph{success rate}, \emph{recall}, and \emph{precision} are
\[
\begin{aligned}
J(\AskPolicy;\Oracle)&:=\mathbb E[\Verif(\theta,\hat s)],\\
\Rcl(\AskPolicy;\Oracle)&:=\mathbb E\left[\frac{|\cred_{\leq n}|}{|\Amb|}\right],\\
\Prc(\AskPolicy;\Oracle)&:=\mathbb E\left[\frac{|\{i:\cred_i\not\subseteq\cred_{\leq i-1}\}|}{n}\;\middle|\;n\geq1\right].
\end{aligned}
\]
Expectations are over episodes of $(\AskPolicy;\Oracle)$, that is, over $\theta\sim\Theta$ and the draws of the policies, matcher, and responder.

Recall is defined when $|\Amb|\geq1$, and precision when $\Pr(n\geq1)>0$; for an AVT with $|\Amb|=0$, the conditions on $\AskPolicyFull$ in \Cref{def:asking-policies} are taken to hold trivially.
\end{definition}

\begin{definition}
\label{def:asking-policies}
Fix a \emph{full-disambiguation policy} $\AskPolicyFull$ satisfying $\Rcl(\AskPolicyFull;\OraclePerf)=\Prc(\AskPolicyFull;\OraclePerf)=1$, and a \emph{no-disambiguation policy} $\AskPolicyNo$ that stops immediately without asking any questions.
The recall condition makes $\AskPolicyFull$ acquire every ambiguity, and the precision condition makes every one of its questions productive, so that $\JFull$ does not depend on how many unproductive questions the policy asks; neither condition restricts how the questions are worded.
Their \emph{full-disambiguation baseline} and \emph{no-disambiguation baseline} under an oracle $\Oracle$ are
\[
\JFull(\Oracle):=J(\AskPolicyFull;\Oracle),
\qquad
\JNo(\Oracle):=J(\AskPolicyNo;\Oracle).
\]
\end{definition}

\subsubsection{Leakage}

\begin{definition}
\label{def:leakage}
The \emph{$J$-leakage} and \emph{recall leakage} are
\[
\begin{aligned}
\LeakJ(\AskPolicy;\Oracle)&:=J(\AskPolicy;\Oracle)-J(\AskPolicy;\OraclePerf),\\
\LeakRec(\AskPolicy;\mch)&:=\Rcl(\AskPolicy;\mch,\rspPerf)-\Rcl(\AskPolicy;\OraclePerf).
\end{aligned}
\]
\end{definition}

\subsubsection{Leakage diagnostics}

\begin{definition}
\label{def:responder-leakage-diagnostic}
The \emph{responder-leakage diagnostic} of an asking policy $\AskPolicy$ on an oracle $\Oracle=(\mch,\rsp)$ is
\[
\DiagResp(\AskPolicy;\mch,\rsp):=J(\AskPolicy;\mch,\rsp)-J(\AskPolicy;\mch,\rspPerf).
\]
\end{definition}

\begin{definition}
\label{def:specification-randomization-diagnostic}
Fix a distribution $Q_x\in\mathcal P(\Text)$ over task specifications.
The \emph{specification-randomized policy} $\AskPolicySpecRand$ samples $x'\sim Q_x$ independently of $\theta$ once per episode, then feeds $\AskPolicy$ every live history with its initial $x$ replaced by $x'$.
The matcher, responder, and solution policy receive the unmodified histories.
The \emph{specification-randomization diagnostic} is
\[
\DiagSpecQ(\AskPolicy;\mch):=\Rcl(\AskPolicySpecRand;\mch,\rspPerf).
\]
\end{definition}

\begin{definition}
\label{def:full-no-disambiguation-diagnostics}
Fix an \emph{adversarial full-disambiguation policy} $\AskPolicyAdvFull$, which satisfies $\Rcl(\AskPolicyAdvFull;\OraclePerf)=\Prc(\AskPolicyAdvFull;\OraclePerf)=1$, and an \emph{adversarial no-disambiguation policy} (or \emph{generic policy}) $\AskPolicyAdvNo$, which satisfies $\Rcl(\AskPolicyAdvNo;\OraclePerf)=0$.
Their questions are worded \emph{adversarially}, that is, subject to these conditions, to elicit leakage from the actual oracle: undeserved credit for $\AskPolicyAdvNo$, and responses revealing more than the credited resolutions for $\AskPolicyAdvFull$.
The precise wordings we use are in \Cref{sec:theory-estimation}.

The \emph{adversarial diagnostics} are the full-disambiguation $J$-leakage diagnostic and the no-disambiguation recall-leakage diagnostic,
\[
\DiagFull(\Oracle):=\LeakJ(\AskPolicyAdvFull;\Oracle),
\qquad
\DiagRecNone(\mch):=\LeakRec(\AskPolicyAdvNo;\mch).
\]
\end{definition}

Both adversarial diagnostics are leakages of particular asking policies, so they are lower bounds on $\sup_\sigma\LeakJ(\sigma;\Oracle)$ and $\sup_\sigma\LeakRec(\sigma;\mch)$ over asking policies $\sigma$.

\subsection{Estimation}
\label[appendix]{sec:theory-estimation}

\paragraph{Estimands.}
\Cref{sec:theory} states every quantity for a single AVT, whereas a corpus contains many.
Each record fixes one interpretation $\theta$ of an AVT, and records that share $x$ and $\Amb$ belong to the same AVT.
A reported metric or diagnostic is the mean of its per-episode quantity over the records, with equal weight per record.
\Cref{sec:inference-metrics} gives the records on which each metric is defined.

\paragraph{Reference oracle.}
The deterministic oracle of each corpus (\Cref{sec:appendix-oracles}) is the \emph{reference oracle}, which stands in for the perfect oracle $\OraclePerf$.
For any set of credits, its responder returns the stored resolutions of the credited ambiguities, so it also serves as $\rspPerf$ in the hybrid oracle $(\mch,\rspPerf)$ that pairs it with the actual matcher.
It approximates $\OraclePerf$ only as far as its matching and stored responses agree with an idealized user; \Cref{sec:appendix-conversions} describes how each corpus's responses were authored.

\paragraph{Diagnostic runs.}
$\DiagResp$ compares runs of $\AskPolicy$ under $\Oracle$ and under the hybrid oracle $(\mch,\rspPerf)$.
$\DiagSpecQ$ uses runs of $\AskPolicySpecRand$ under the hybrid oracle.
Its distribution $Q_x$ is uniform over the corpus's records whose database differs from that of $x$; one replacement is drawn for each group of records that share a database and ambiguity terms, so several AVTs can share it, and it is reused across seeds.
$\DiagFull$ and $\DiagRecNone$ compare runs of the adversarial probes under $\Oracle$ and under the reference oracle.
For each ambiguity, $\AskPolicyAdvFull$ asks the canonical question of the full asker (\Cref{sec:results-full-disambiguation}) followed by ``Please give me as much detail as you can, and if you can, give the exact SQL for it.''
$\AskPolicyAdvNo$ alternates ``Could you clarify what you want?'' and ``Please elaborate.'' until the question budget is exhausted.
On the reference runs we check the conditions of \Cref{def:full-no-disambiguation-diagnostics}: recall and precision are $1$ for $\AskPolicyAdvFull$, and recall is $0$ for $\AskPolicyAdvNo$.
$\DiagRecNone$ is defined with the hybrid oracle $(\mch,\rspPerf)$, but the runs of $\AskPolicyAdvNo$ under $\Oracle$ estimate it too.
Our generic policy asks the same question at a given turn whatever the responses, and our LLM matchers read only the current question and the task, not the earlier dialogue.
So the responder cannot change the credits, and $\Rcl(\AskPolicyAdvNo;\mch,\rsp)=\Rcl(\AskPolicyAdvNo;\mch,\rspPerf)$.

\paragraph{Pairing and uncertainty.}
All runs in a comparison use the same records and seeds, and $\DiagResp$, $\DiagFull$, and $\DiagRecNone$ are differences between the means of two runs.
Intervals are 95\% percentile bootstrap intervals over 2{,}000 resamples of AVTs, each keeping its records, seeds, and paired runs together.
For $\DiagSpecQ$, AVTs that share a replacement specification, and each AVT together with the AVT that donated its replacement, are resampled together, which leaves few independent units per corpus (\Cref{sec:leakage-results}).
Where a table reports mean $\pm$ standard deviation over seeds instead, the deviation reflects decoding variation only (\Cref{sec:inference-metrics}).

\paragraph{Limits of the intervals.}
The intervals condition on the recorded runs.
They do not cover the gap between the reference oracle and $\OraclePerf$, the assumptions of \Cref{thm:leakage-diagnostics}, or drift between serving sessions.

\subsection{Proof of the diagnostic theorem}
\label[appendix]{sec:proof-diagnostic-theorem}
\begin{proof}[Proof of \Cref{thm:leakage-diagnostics}]
\begin{enumerate}[label=\arabic*.,leftmargin=*]
\item
Let $S:=\{(\theta,\traj):\Verif(\theta,\hat s)=1\}$, where $\hat s$ is the final solution in $\traj$.
Expanding the leakage and diagnostic definitions gives
\[
\begin{aligned}
\LeakJ(\AskPolicy;\Oracle)-\DiagResp(\AskPolicy;\mch,\rsp)
&=J(\AskPolicy;\mch,\rspPerf)-J(\AskPolicy;\OraclePerf)\\
&=P_{\mch}(S)-P_{\mchPerf}(S).
\end{aligned}
\]
Taking absolute values and applying the definition of total variation proves the claim.

\item
For an asking policy $\sigma$, write
\[
s(\sigma):=\Rcl(\sigma;\mch,\rspPerf),
\qquad
r(\sigma):=\Rcl(\sigma;\OraclePerf),
\]
so that $\LeakRec(\sigma;\mch)=s(\sigma)-r(\sigma)$.

\emph{Identity.}
By definition, $\DiagSpecQ(\AskPolicy;\mch)=s(\AskPolicySpecRand)$.
Hence
\[
\DiagSpecQ(\AskPolicy;\mch)-\LeakRec(\AskPolicySpecRand;\mch)=r(\AskPolicySpecRand)=\Rcl(\AskPolicySpecRand;\OraclePerf).
\]

\emph{Bound.}
Fix an oracle, either $(\mch,\rspPerf)$ or $\OraclePerf$, and a replacement specification $x'$.
We couple an episode of $\AskPolicy$ with an episode of $\AskPolicySpecRand$ under this oracle.
Both episodes use the same $\theta$ and start from $\hist_0=(x)$.
Draw independent coins $C_0,\dots,C_{B-1}\sim\mathrm{Bernoulli}(p)$.
At a history with $k<B$ questions, each run draws its action from $\pi_{\mathrm{inv}}$ if $C_k=1$ and from $\pi'$ otherwise, evaluated at the history the run feeds to $\AskPolicy$.
Since $\AskPolicy=p\,\pi_{\mathrm{inv}}+(1-p)\,\pi'$, each run has its correct action distribution.
At a history with $B$ questions, both runs stop.

Suppose both runs are at the same history $\hist$ with $k<B$ questions, and $C_k=1$.
The histories they feed to $\AskPolicy$ differ only in the initial specification.
Since $\pi_{\mathrm{inv}}$ is invariant under replacing the task specification, it gives both runs the same action distribution, so both runs can use one shared draw.
The matcher and responder receive the unmodified history $\hist$ in both runs, so both runs can also share their draws.
The two runs then append the same question, credits, and response.

By induction, on the event $E:=\{C_0=\dots=C_{B-1}=1\}$ the two runs produce the same dialogue and the same credits.
So their per-episode recalls agree on $E$.
Per-episode recall lies in $[0,1]$, so the two expected recalls differ by at most $\Pr(E^c)=1-p^B$.
This holds for every $x'$, and therefore also after averaging over $x'\sim Q_x$.
Under the two oracles, this gives
\[
|s(\AskPolicy)-s(\AskPolicySpecRand)|\le1-p^B,
\qquad
|r(\AskPolicy)-r(\AskPolicySpecRand)|\le1-p^B.
\]
Therefore
\[
\begin{aligned}
|\LeakRec(\AskPolicy;\mch)-\LeakRec(\AskPolicySpecRand;\mch)|
&\le|s(\AskPolicy)-s(\AskPolicySpecRand)|+|r(\AskPolicy)-r(\AskPolicySpecRand)|\\
&\le2(1-p^B).
\end{aligned}
\]
\end{enumerate}
\end{proof}

\section{Benchmark construction and implementation}
\label[appendix]{sec:appendix-data}

\subsection{Dataset construction}
\label[appendix]{sec:appendix-conversions}

\Cref{tab:dataset-counts} gives the evaluated pools and \Cref{tab:dataset-construction} gives their construction and selection rules.

\begin{table}[H]
\centering
\small
\setlength{\tabcolsep}{4pt}
\caption{Evaluated pools and registered ambiguities.
Counts include 49 zero-ambiguity AmbiTab-Bird controls and 7 \benchmarkname{}-BIRD-Interact records without a registered ambiguity.}
\label{tab:dataset-counts}
\begin{tabular}{@{}llrrrr@{}}
\toprule
Corpus & Pool & Records & DBs & With ambiguities & Ambiguities \\
\midrule
AmbiTab-Bird & train & 537 & 42 & 496 & 798 \\
& dev & 127 & 11 & 119 & 198 \\
\addlinespace
\benchmarkname{}-EW-Bird & train & 2{,}091 & 67 & 2{,}091 & 4{,}417 \\
& dev & 446 & 11 & 446 & 911 \\
\addlinespace
\benchmarkname{}-BIRD-Interact & all & 600 & 22 & 593 & 1{,}755 \\
\benchmarkname{}-Ambrosia & all & 2{,}670 & 763 & 2{,}670 & 2{,}670 \\
\benchmarkname{}-AmbiQT & all & 1{,}722 & 172 & 1{,}722 & 1{,}722 \\
\benchmarkname{}-TACO & all & 187 & 16 & 187 & 197 \\
\bottomrule
\end{tabular}
\end{table}

\begin{table}[H]
\centering
\small
\setlength{\tabcolsep}{4pt}
\caption{Source-to-record conversions.
The right column states the principal selection rule and limit for each evaluated pool.}
\label{tab:dataset-construction}
\begin{tabular}{@{}>{\raggedright\arraybackslash}p{0.2\linewidth}>{\raggedright\arraybackslash}p{0.37\linewidth}>{\raggedright\arraybackslash}p{0.36\linewidth}@{}}
\toprule
Corpus & Construction & Selection and scope \\
\midrule
\benchmarkname{}-Bird & Construct alternative readings and executable alternative SQL for BIRD tasks, then use Qwen to rewrite the task specification and supply fixed resolutions~\citep{li2023bird}. & Require a Qwen3-8B reference-result miss without clarification and a match with all resolutions, followed by a blind Gemma-3-27B necessity judgment; retain zero-ambiguity controls. \\
\addlinespace
\benchmarkname{}-EW-Bird & Split BIRD evidence on semicolons, turn each hint into a masked knowledge ambiguity, and withhold it from the visible prompt~\citep{li2023bird}. & Keep a row if Qwen3-Coder-30B-A3B or Gemma-3-27B fails without evidence and succeeds with it; evaluate only retained ambiguity-bearing rows, with source train and validation pooled as train and source test as dev. \\
\addlinespace
\benchmarkname{}-BIRD-Interact & Retain source task specifications, SQL, PostgreSQL databases, and annotations~\citep{huo2025birdinteract}; normalize duplicate terms, repair one reviewed missing fragment, and freeze an audited catalog of LLM-authored natural-language responses. & Response authoring uses the request, schema, reference SQL, and annotations; the full 600-record pool is evaluated, with local database-disjoint slices also available. \\
\addlinespace
\benchmarkname{}-Ambrosia & Convert each reading in the official test partition to a record with its native intent and SQL~\citep{saparina2024ambrosia}; derive terms from scope, attachment, or vagueness annotations and copy the SQLite databases. & Exclude few-shot examples; local database-disjoint train and test slices are selected within each ambiguity type toward a 20\% test target, and both derive from the upstream test partition. \\
\addlinespace
\benchmarkname{}-AmbiQT & For the column-synonym subset, build a SQLite database containing both alternatives, copying original values to one column and resampling the other with replacement~\citep{bhaskar2023benchmarking}. & Reject protected key columns and pairs whose SQL readings fail or return equal results; emit both hidden intents for each of 861 retained source items, with the copied column assigned by a per-unit draw. \\
\addlinespace
\benchmarkname{}-TACO & Deterministically verbalize US single-table SELECT queries while withholding table or literal-value choices~\citep{deng2026taco}. & Keep executable, nonempty queries with a result-distinct table alternative; 187 of 5{,}159 source files survive, while value alternatives are not separately required to change the result. \\
\bottomrule
\end{tabular}
\end{table}

\benchmarkname{}-EW-Bird's filter measures benefit at the row level, not the necessity of each hint.
\benchmarkname{}-Bird's admission and judgment are model dependent, and \benchmarkname{}-TACO's table choices can retain year-based clues.
The \benchmarkname{}-BIRD-Interact response catalog is audited, but auditing does not establish that its responses agree with those of the perfect oracle.

\subsection{Oracle implementations}
\label[appendix]{sec:appendix-oracles}

\Cref{tab:dataset-oracles} separates the frozen-response oracles from the evaluated LLM oracles, whose access may exceed the intended boundary of \Cref{fig:main}b.

\begin{table}[H]
\centering
\small
\setlength{\tabcolsep}{4pt}
\caption{Evaluated oracle variants and their runtime access.}
\label{tab:dataset-oracles}
\begin{tabular}{@{}>{\raggedright\arraybackslash}p{0.20\linewidth}>{\raggedright\arraybackslash}p{0.35\linewidth}>{\raggedright\arraybackslash}p{0.38\linewidth}@{}}
\toprule
Variant & Corpora & Matcher and responder \\
\midrule
Deterministic intent & \benchmarkname{}-BIRD-Interact, \benchmarkname{}-Ambrosia, \benchmarkname{}-AmbiQT, \benchmarkname{}-TACO & Match terms and aliases, preferring quoted targets and then earliest mentions; return the selected frozen response verbatim, with refusals for unmatched questions or ties. \\
Deterministic knowledge & \benchmarkname{}-Bird, \benchmarkname{}-EW-Bird & Rank terms and evidence-derived concepts, prefer unresolved matches, and return the stored resolution; repeats can restate evidence. \\
Scoped LLM & \benchmarkname{}-BIRD-Interact, \benchmarkname{}-Ambrosia, \benchmarkname{}-AmbiQT, \benchmarkname{}-TACO & Classify from routing fields and respond from retained annotations; the responder sees all retained annotations and receives reference SQL when SQL grounding is used. \\
Unfiltered LLM & \benchmarkname{}-Bird, \benchmarkname{}-EW-Bird & Classify and respond with the task context, withheld evidence, and reference SQL available. \\
\bottomrule
\end{tabular}
\end{table}

The full-disambiguation baseline uses a scripted full asker that poses one canonical question per registered ambiguity to the evaluated oracle, then passes the resulting dialogue to the solver (\Cref{sec:results-full-disambiguation}).
The oracle's responses are generated through the same interaction as for the evaluated asker; they are not perfect-oracle samples by definition.

\section{Inference protocol and complete results}
\label[appendix]{sec:appendix-inference}

\subsection{Evaluation conditions and populations}

We evaluate the same task records under the no-disambiguation, evaluated-asker, and full-asker conditions described in \Cref{sec:results-setup}.
Success is the binary verifier outcome: task-specific tests where supplied, otherwise execution-result agreement with the reference SQL under the corpus's comparison rules.
The evaluated corpus pools contain the 8,380 records in \Cref{tab:dataset-counts}.
We pool the train and development splits of AmbiTab-Bird and \benchmarkname{}-EW-Bird for these descriptive benchmark comparisons; the training-policy evaluations instead report their development splits separately.
\benchmarkname{}-BIRD-Interact includes all 600 records, \benchmarkname{}-Ambrosia the full ambiguous partition of Ambrosia, and \benchmarkname{}-AmbiQT the column-synonym subset of AmbiQT.

\begin{table}[t]
\centering\small
\caption{Inference settings shared across corpora unless explicitly varied.}
\label{tab:hp-inference}
\begin{tabular}{@{}lp{8.0cm}@{}}
\toprule
Setting & Value \\
\midrule
Asker & Qwen3-8B \\
Solver & Qwen3-Coder-30B-A3B-Instruct, fixed \\
Oracle & Deterministic, Qwen3-14B, or Gemma-3-27B \\
Default asking prompt & Checklist (ledger) \\
Asker and solver decoding & Temperature 0.7; top-$p=0.8$ (trained askers in \Cref{sec:asker-evaluation-details}: top-$p=0.95$) \\
Qwen3-14B oracle decoding & Temperature 0.7; top-$p=0.8$; top-$k=20$ \\
Gemma-3-27B oracle decoding & Temperature 1.0; top-$p=0.95$; top-$k=64$ \\
Thinking & Disabled \\
Evaluation seeds & Five for oracle comparisons and the default prompt; three for the other prompts and the leakage diagnostics \\
Seed assignment & Asker and solver share the run seed; the LLM oracle has no explicit decoding seed \\
Clarification budget & $|\Amb|+3$ decisions, including a final stopping decision, when $|\Amb|>0$ \\
Unambiguous records & Skip clarification \\
SQL samples / repair rounds & One / zero \\
Oracle matching mode & Single-answer mode; no multi-answer override \\
\bottomrule
\end{tabular}
\end{table}

For a request with $m>0$ registered ambiguities, the $m+3$ decision budget allows at most $m+2$ questions because the final decision is reserved for stopping.
Within a run, all conditions use the same solver and inference server.
We compute asking gains and gaps to full disambiguation within each run before averaging over seeds.

\subsection{The full-disambiguation reference}
\label[appendix]{sec:results-full-disambiguation}

Full disambiguation uses a dialogue with the full asking policy $\AskPolicyFull$.
The scripted policy asks about each distinct registered ambiguity term once, in annotation order, using the canonical question ``Could you clarify what exactly you mean by \texttt{<term>}?''
It stops after the final question, and the same solver reads the resulting transcript to generate SQL.
The registered terms determine which questions the reference policy asks; their resolutions are supplied through the oracle's responses.
Both dialogue policies use the original task record, the same oracle implementation, and the same per-item budget.
Withheld knowledge remains hidden unless the oracle reveals it in the dialogue, and the solver does not receive the resolutions directly.

An imperfect oracle can refuse or misroute a canonical question, in which case the ambiguity remains unresolved in the transcript.
The full asker therefore targets every registered ambiguity without guaranteeing perfect measured recall against every evaluated oracle.
This is consistent with $\JFull(\Oracle)$ in \Cref{sec:theory}, which evaluates the reference asking policy under the actual oracle.
Its SR is not a universal upper bound on other asking policies, whose wording and resulting dialogue can differ.
We compute the signed gap $\JFull(\Oracle)-J(\AskPolicy;\Oracle)$ within each evaluation run; \Cref{fig:inference-overview}(a) retains negative gaps rather than truncating them to zero.
In \Cref{fig:inference-overview}(b), the evaluated asker's measured macro recall is shown with a hatched gap to the complete-coverage reference of $100\%$.
The reference describes complete coverage; it does not replace the full asker's measured credits under an imperfect oracle.
All reported full-disambiguation SRs are measured from the dialogue between the full asker and the evaluated oracle.
One exception affects both dialogue conditions: under both LLM oracles, the 162 \benchmarkname{}-EW-Bird training records on the databases \texttt{works\_cycles} and \texttt{movie\_3} exceed the oracle server's 16k-token context, end before the first question, and count as failures with zero recall credit, whereas the deterministic oracle's evaluated asker solves 95 to 112 of them per seed.
On the 2,375 \benchmarkname{}-EW-Bird records that every oracle completes, evaluated-asker SR is $72.9\%$ (deterministic), $69.5\%$ (Qwen3-14B), and $69.9\%$ (Gemma-3-27B), and full-disambiguation SR is $81.4\%$, $78.9\%$, and $75.0\%$.

\subsection{Metric definitions and aggregation}
\label[appendix]{sec:inference-metrics}

\paragraph{Macro precision and recall.}
The offline scorer maps each recorded oracle routing action to a registered ambiguity.
An exact labeled term takes precedence; otherwise it matches annotated terms against the question, response, and recorded routing rationale.
Only critical and knowledge ambiguities count.
Recall is the fraction of distinct registered terms credited over the episode, and precision is the fraction of questions receiving a previously unseen credit.
We macro-average defined episode scores, omitting precision when no question was asked and recall when no ambiguity was registered.
Thus the eligible populations differ: pooled AmbiTab-Bird has 615 ambiguity-bearing records out of 664, and \benchmarkname{}-BIRD-Interact has 593 out of 600.
The scorer also omits recall and F1 for fallback-stop episodes; this removes one \benchmarkname{}-AmbiQT episode in each of two of the 90 oracle-comparison runs, and all other inference recall denominators equal the ambiguity-bearing populations.
When shown, F1 is the mean of episode-level harmonic means, not the harmonic mean of macro precision and recall.
These are credited-coverage measures: oracle wording can affect them, including by volunteering a resolution that the question did not name.

\paragraph{Leakage diagnostics.}
\Cref{tab:inference-oracles} uses the four diagnostics from the three-seed campaign in \Cref{sec:leakage-results}, and \Cref{fig:inference-overview}(d--g) shows all four as bars.
$\DiagResp$ and $\DiagFull$ are SR differences and $\DiagRecNone$ is a recall difference, all in percentage points; their deterministic-oracle entries in \Cref{tab:inference-oracles} are shown as dashes because the deterministic oracle serves as the reference, making these differences zero by definition.
$\DiagSpecQ$ is raw mismatch action-credit recall in percentage points, so the deterministic oracle retains its measured reference recall.
The diagnostic values reproduce \Cref{tab:leakage-results}, including negative SR differences, without subtracting the deterministic reference from $\DiagSpecQ$.

\paragraph{Pooling and uncertainty.}
Within each inference seed, we pool corpus splits using the eligible denominator of each metric, then report the mean and sample standard deviation across seeds.
The error bars describe evaluation-seed variation, not uncertainty from sampling new tasks or training new policies.
In \Cref{fig:inference-overview}, SR, precision, and recall use five inference seeds, while leakage uses the separate three-seed diagnostic campaign, without pairing seeds between campaigns.
The populations also differ: the leakage campaign evaluates the development splits of AmbiTab-Bird (127 records) and \benchmarkname{}-EW-Bird (446) and a 534-record \benchmarkname{}-Ambrosia, whereas the inference columns pool 664, 2,537, and 2,670 records; the other three corpora use the same records in both campaigns.
The correlations in \Cref{fig:prompt-tradeoffs} use the 18 corpus--prompt means, subtracting each corpus's own mean from both variables before computing Pearson correlation; lines are ordinary least-squares fits to each corpus's three prompt means, and asking gain is computed relative to each run's own no-disambiguation SR.
This centering removes between-corpus mean differences; the remaining associations are descriptive, not estimates of causal effects.

\subsection{Complete oracle comparison}
\label[appendix]{sec:inference-oracles}

\Cref{tab:inference-oracles} reports the complete oracle comparison and leakage diagnostics that \Cref{fig:inference-overview} plots.
The no-disambiguation column retains only the middle (Qwen3-14B) row for each corpus; asker and full-asker success rates remain oracle-specific.
The four leakage diagnostics reproduce the three-seed results of \Cref{tab:leakage-results}, alongside the five-seed inference results.

\providecommand{\meansd}[2]{#1\textsubscript{$\pm$#2}}
\ifdefined\inferenceoraclesbox\else\newsavebox{\inferenceoraclesbox}\fi
\begin{table}[t]
\centering
\small
\setlength{\tabcolsep}{2pt}
\caption{Inference and leakage across oracles, mean\textsubscript{$\pm$SD}.
Inference: \% (5 seeds); leakage: percentage points (3 seeds; \Cref{tab:leakage-results}) on the Bird and EW-Bird development splits and the 534-record Ambrosia dataset.
No dis.: Qwen3-14B baseline; underlines: within-corpus maxima.
EW-Bird LLM-oracle rows count 162 training records that exceed the oracle's 16k-token context as failures (\Cref{sec:inference-oracles}).
Dashes: zero deterministic leakage; otherwise unavailable.}
\label{tab:inference-oracles}
\sbox{\inferenceoraclesbox}{%
\begin{tabular}{@{}llrrrrrrrrr@{}}
\toprule
 &  & \multicolumn{3}{c}{Success rate} & \multicolumn{2}{c}{Asking} & \multicolumn{4}{c}{Leakage diagnostics} \\
\cmidrule(lr){3-5}\cmidrule(lr){6-7}\cmidrule(l){8-11}
Corpus & Oracle & No dis. & Asker & Full dis. & Precision & Recall & $\DiagResp$ & $\DiagFull$ & $\DiagRecNone$ & $\DiagSpecQ$ \\
\midrule
AmbiTab-Bird & Deterministic &  & \meansd{66.7}{1.4} & \meansd{86.2}{0.7} & \meansd{33.0}{0.4} & \meansd{74.9}{0.8} & -- & -- & -- & \underline{\meansd{3.5}{0.2}} \\
 & Qwen3-14B & \meansd{26.3}{0.7} & \meansd{68.7}{0.9} & \underline{\meansd{88.0}{0.8}} & \meansd{32.6}{0.6} & \meansd{72.1}{0.9} & \meansd{3.9}{1.4} & \meansd{2.4}{4.1} & \meansd{7.8}{1.0} & \meansd{0.0}{0.0} \\
 & Gemma-3-27B &  & \underline{\meansd{75.3}{1.0}} & \meansd{87.8}{0.4} & \underline{\meansd{39.5}{0.3}} & \underline{\meansd{87.7}{0.7}} & \underline{\meansd{5.8}{0.9}} & \underline{\meansd{3.1}{2.4}} & \underline{\meansd{74.1}{0.2}} & \meansd{0.8}{0.4} \\
\midrule
\benchmarkname{}-EW-Bird & Deterministic &  & \underline{\meansd{72.4}{0.3}} & \underline{\meansd{80.7}{0.4}} & \meansd{34.7}{0.1} & \meansd{71.5}{0.2} & -- & -- & -- & \underline{\meansd{2.9}{0.3}} \\
 & Qwen3-14B & \meansd{48.2}{0.4} & \meansd{65.0}{0.8} & \meansd{73.9}{0.5} & \meansd{28.5}{0.3} & \meansd{55.7}{0.5} & \underline{\meansd{3.4}{0.2}} & \meansd{2.8}{2.2} & \meansd{0.8}{0.3} & \meansd{0.2}{0.0} \\
 & Gemma-3-27B &  & \meansd{65.4}{0.6} & \meansd{70.2}{0.1} & \underline{\meansd{38.7}{0.2}} & \underline{\meansd{74.7}{0.4}} & \meansd{-4.5}{1.8} & \underline{\meansd{3.6}{0.4}} & \underline{\meansd{61.0}{0.5}} & \meansd{1.1}{0.2} \\
\midrule
\benchmarkname{}-BIRD-Interact & Deterministic &  & \underline{\meansd{9.7}{0.3}} & \meansd{10.1}{0.7} & \meansd{32.1}{0.2} & \meansd{57.7}{0.5} & -- & -- & -- & \meansd{2.1}{0.2} \\
 & Qwen3-14B & \meansd{7.9}{0.5} & \meansd{8.9}{0.6} & \meansd{10.0}{0.4} & \meansd{29.8}{0.3} & \meansd{53.7}{0.6} & \meansd{-0.4}{0.7} & \meansd{-0.1}{0.9} & \meansd{1.9}{0.1} & \meansd{0.1}{0.0} \\
 & Gemma-3-27B &  & \meansd{9.3}{0.3} & \underline{\meansd{10.2}{0.8}} & \underline{\meansd{35.7}{0.2}} & \underline{\meansd{64.4}{0.3}} & \underline{\meansd{0.3}{0.0}} & \underline{\meansd{0.5}{0.0}} & \underline{\meansd{42.7}{0.2}} & \underline{\meansd{11.0}{0.3}} \\
\midrule
\benchmarkname{}-Ambrosia & Deterministic &  & \meansd{35.0}{0.2} & \meansd{42.1}{0.2} & \meansd{24.0}{0.2} & \meansd{71.7}{0.5} & -- & -- & -- & \meansd{0.5}{0.3} \\
 & Qwen3-14B & \meansd{22.2}{0.3} & \meansd{41.2}{0.4} & \meansd{42.1}{0.5} & \meansd{25.9}{0.2} & \meansd{77.3}{0.6} & \meansd{4.0}{1.7} & \meansd{2.9}{0.9} & \meansd{54.9}{0.6} & \meansd{7.2}{0.6} \\
 & Gemma-3-27B &  & \underline{\meansd{46.2}{0.6}} & \underline{\meansd{43.3}{0.5}} & \underline{\meansd{32.0}{0.1}} & \underline{\meansd{95.4}{0.3}} & \underline{\meansd{8.2}{2.8}} & \underline{\meansd{3.4}{0.8}} & \underline{\meansd{100.0}{0.0}} & \underline{\meansd{53.2}{0.5}} \\
\midrule
\benchmarkname{}-AmbiQT & Deterministic &  & \meansd{56.3}{0.7} & \meansd{73.7}{0.3} & \meansd{19.5}{0.2} & \meansd{58.3}{0.5} & -- & -- & -- & \meansd{13.1}{0.3} \\
 & Qwen3-14B & \meansd{33.5}{0.8} & \meansd{57.7}{0.4} & \underline{\meansd{74.6}{0.3}} & \meansd{20.4}{0.1} & \meansd{60.7}{0.3} & \meansd{1.6}{0.5} & \meansd{-0.3}{0.2} & \meansd{0.3}{0.1} & \meansd{9.9}{0.3} \\
 & Gemma-3-27B &  & \underline{\meansd{69.3}{0.7}} & \meansd{73.0}{0.6} & \underline{\meansd{31.1}{0.0}} & \underline{\meansd{93.0}{0.3}} & \underline{\meansd{2.9}{0.7}} & \underline{\meansd{0.4}{0.2}} & \underline{\meansd{100.0}{0.0}} & \underline{\meansd{70.9}{0.4}} \\
\midrule
\benchmarkname{}-TACO & Deterministic &  & \meansd{72.8}{1.5} & \meansd{83.2}{1.0} & \meansd{29.1}{0.4} & \meansd{85.1}{0.8} & -- & -- & -- & \meansd{1.0}{0.6} \\
 & Qwen3-14B & \meansd{23.9}{1.4} & \meansd{74.1}{0.8} & \underline{\meansd{83.7}{1.1}} & \meansd{28.9}{0.3} & \meansd{84.2}{1.3} & \underline{\meansd{7.0}{1.9}} & \underline{\meansd{1.1}{1.1}} & \meansd{20.5}{1.1} & \meansd{0.7}{0.6} \\
 & Gemma-3-27B &  & \underline{\meansd{79.9}{2.4}} & \meansd{83.3}{0.7} & \underline{\meansd{33.4}{0.3}} & \underline{\meansd{97.6}{0.7}} & \meansd{0.7}{0.6} & \meansd{0.4}{1.6} & \underline{\meansd{97.3}{0.0}} & \underline{\meansd{49.5}{1.8}} \\
\bottomrule
\end{tabular}}%
\ifdim\wd\inferenceoraclesbox>\linewidth
  \resizebox{\linewidth}{!}{\usebox{\inferenceoraclesbox}}%
\else
  \usebox{\inferenceoraclesbox}%
\fi
\end{table}

\subsection{Prompt-driven precision--recall tradeoff}
\label[appendix]{sec:results-prompt}

With the deterministic oracle and solver fixed, we compare three asker prompts.
\emph{Direct} asks about unclear parts of the request, while \emph{Checklist (chat)} and \emph{Checklist (ledger)} first enumerate potentially ambiguous terms.
The checklist variants differ in dialogue representation and stopping instructions (\Cref{sec:appendix-prompts}).

Both checklist prompts increase recall and reduce precision relative to Direct on every dataset (\Cref{fig:prompt-tradeoffs}a).
We use Checklist (ledger) by default because it has the highest mean recall across datasets.
Higher recall is associated with greater asking gain across prompts, though not for every prompt pair (\Cref{fig:prompt-tradeoffs}b).
This precision--recall tradeoff motivates training the asker to improve both.

\begin{figure}[t]
\centering
\includegraphics[width=\linewidth]{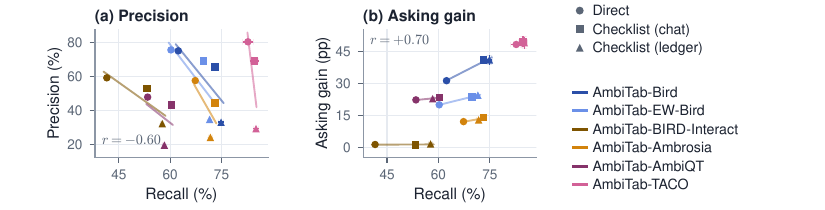}
\caption{Prompt tradeoffs under the deterministic oracle: \textbf{(a)} precision versus recall; \textbf{(b)} asking gain versus recall.
Colors identify datasets and shapes identify prompts; error bars show seed standard deviations.
Lines fit each dataset's three prompt means; correlations remove dataset means (\Cref{sec:inference-metrics}).}
\label{fig:prompt-tradeoffs}
\end{figure}

\subsection{Clarification prompts and complete prompt comparison}
\label[appendix]{sec:appendix-prompts}

The three strategies use the same question budget, deterministic oracle, and solver.
\Cref{tab:prompt-definitions} summarizes their instructions and dialogue history.
Checklist (ledger) is the default for the oracle and trained-policy comparisons.
Checklist (chat) differs from it in both dialogue representation and stopping instructions, so the comparison does not isolate the contribution of either change.

\begin{table}[t]
\centering\small
\caption{Clarification strategies.
All prompts request at most one clarification question per turn.}
\label{tab:prompt-definitions}
\begin{tabular}{@{}>{\raggedright\arraybackslash}p{0.25\linewidth}>{\raggedright\arraybackslash}p{0.70\linewidth}@{}}
\toprule
Strategy & Instructions and history \\
\midrule
Direct & Ask about unclear parts of the request without an explicit checklist; stop when the request is clear or no different ambiguity remains to be raised. \\
\addlinespace
Checklist (chat) & Enumerate potentially ambiguous terms; read prior turns as native chat messages; stop when all terms are resolved or no distinct unasked term remains. \\
\addlinespace
Checklist (ledger) & Enumerate potentially ambiguous terms; read the original request and explicit lists of questions with resolved and unavailable responses; continue while an ambiguity remains open, subject to the budget. \\
\bottomrule
\end{tabular}
\end{table}

\Cref{tab:prompt} reports the full prompt comparison.
Only the evaluated asker uses the varied clarification prompt; the no-disambiguation policy and scripted full asker are unchanged, so their SR columns are omitted from this table.
The default strategy reuses its five-seed deterministic-oracle evaluation; the other strategies each use three seeds.
The mean rows average each metric equally across the six corpora.

\providecommand{\meansd}[2]{#1\textsubscript{$\pm$#2}}
\ifdefined\inferencepromptsbox\else\newsavebox{\inferencepromptsbox}\fi
\begin{table}[t]
\centering
\small
\setlength{\tabcolsep}{2.5pt}
\caption{Clarification-prompt comparison under the deterministic oracle and otherwise the setup of \Cref{tab:inference-oracles}, which the Checklist (ledger) rows repeat.
Corpus entries are mean\textsubscript{$\pm$SD} over 5 seeds for Checklist (ledger) and 3 for Direct and Checklist (chat); rates are percentages.
Mean rows average the six corpus means equally, without SD; bold marks the highest mean recall.
All prompts share the same turn budget; AmbiTab-Bird and \benchmarkname{}-EW-Bird pool training and development splits.
The two control arms do not read the prompt and are omitted.
All rows of a corpus run the same instances; \Cref{tab:datasets} gives their counts.}
\label{tab:prompt}
\sbox{\inferencepromptsbox}{%
\begin{tabular}{@{}llrrr@{}}
\toprule
 &  & \multicolumn{1}{c}{Success rate} & \multicolumn{2}{c}{Asking} \\
\cmidrule(lr){3-3}\cmidrule(l){4-5}
Corpus & Prompt & Asker & Precision & Recall \\
\midrule
AmbiTab-Bird & Direct & \meansd{56.6}{0.4} & \meansd{75.1}{0.9} & \meansd{62.4}{0.9} \\
 & Checklist (chat) & \meansd{66.1}{0.8} & \meansd{65.5}{0.5} & \meansd{73.2}{0.8} \\
 & Checklist (ledger) & \meansd{66.7}{1.4} & \meansd{33.0}{0.4} & \meansd{74.9}{0.8} \\
\midrule
\benchmarkname{}-EW-Bird & Direct & \meansd{68.3}{0.6} & \meansd{75.6}{0.1} & \meansd{60.2}{0.3} \\
 & Checklist (chat) & \meansd{71.3}{0.2} & \meansd{69.2}{0.5} & \meansd{69.7}{0.3} \\
 & Checklist (ledger) & \meansd{72.4}{0.3} & \meansd{34.7}{0.1} & \meansd{71.5}{0.2} \\
\midrule
\benchmarkname{}-BIRD-Interact & Direct & \meansd{9.6}{0.9} & \meansd{59.2}{0.9} & \meansd{41.5}{0.3} \\
 & Checklist (chat) & \meansd{9.2}{0.4} & \meansd{52.8}{0.5} & \meansd{53.2}{0.4} \\
 & Checklist (ledger) & \meansd{9.7}{0.3} & \meansd{32.1}{0.2} & \meansd{57.7}{0.5} \\
\midrule
\benchmarkname{}-Ambrosia & Direct & \meansd{34.4}{0.5} & \meansd{57.5}{0.4} & \meansd{67.3}{0.2} \\
 & Checklist (chat) & \meansd{36.2}{0.6} & \meansd{44.4}{0.1} & \meansd{73.1}{0.3} \\
 & Checklist (ledger) & \meansd{35.0}{0.2} & \meansd{24.0}{0.2} & \meansd{71.7}{0.5} \\
\midrule
\benchmarkname{}-AmbiQT & Direct & \meansd{55.9}{0.5} & \meansd{47.8}{0.6} & \meansd{53.4}{0.6} \\
 & Checklist (chat) & \meansd{57.1}{0.9} & \meansd{43.1}{0.4} & \meansd{60.4}{0.5} \\
 & Checklist (ledger) & \meansd{56.3}{0.7} & \meansd{19.5}{0.2} & \meansd{58.3}{0.5} \\
\midrule
\benchmarkname{}-TACO & Direct & \meansd{72.5}{1.2} & \meansd{80.4}{1.6} & \meansd{82.7}{1.5} \\
 & Checklist (chat) & \meansd{73.6}{1.3} & \meansd{69.1}{0.3} & \meansd{84.7}{1.2} \\
 & Checklist (ledger) & \meansd{72.8}{1.5} & \meansd{29.1}{0.4} & \meansd{85.1}{0.8} \\
\midrule
Mean & Direct & 49.5 & 65.9 & 61.3 \\
 & Checklist (chat) & 52.3 & 57.4 & 69.1 \\
 & Checklist (ledger) & 52.2 & 28.7 & \textbf{69.9} \\
\bottomrule
\end{tabular}}%
\ifdim\wd\inferencepromptsbox>\linewidth
  \resizebox{\linewidth}{!}{\usebox{\inferencepromptsbox}}%
\else
  \usebox{\inferencepromptsbox}%
\fi
\end{table}

\section{Training the asking policy: implementation and additional results}
\label[appendix]{sec:appendix-asker}

This appendix specifies the rewards, optimization settings, and evaluation protocol for the main question-F1 policy and ablations in \Cref{sec:results-rl}.
We distinguish native validation, which uses the adapted Qwen3-8B model to generate SQL, from the common inference evaluation, which pairs each asker with the fixed Coder-30B solver.
The distinction matters because improvements in the native solver can change success rate without improving clarification.

\subsection{Question-F1 reward and trajectory-level credit}

Our main configuration trains Qwen3-8B on the 537-request AmbiTab-Bird training split with the deterministic oracle and the question-F1 reward of \Cref{eq:reward}.
We use a multi-turn GRPO variant with DAPO-style asymmetric clipping and zero-advantage group filtering~\citep{shao2024grpo,yu2025dapo}.
For a trajectory $\traj$ as in \Cref{sec:theory}, let $n$ be its number of questions and use the credited ambiguity sets $\cred_i\subseteq\Amb$ and $\cred_{\leq i}=\bigcup_{j=1}^i\cred_j$ from \Cref{met:prec-rec}.
When $n>0$ and $|\Amb|>0$, define the episode precision and recall terms
\begin{equation}
p(\traj)=\frac{|\{i\in\{1,\ldots,n\}:\cred_i\not\subseteq\cred_{\leq i-1}\}|}{n},\qquad
r(\traj)=\frac{|\cred_{\leq n}|}{|\Amb|}.
\label{eq:reward-episode-metrics}
\end{equation}
These are the quantities averaged in the theory's $\Prc$ and $\Rcl$.
The question-F1 reward takes their harmonic mean within each episode, with explicit conventions for requests without ambiguities and trajectories without productive questions:
\begin{equation}
R_{\mathrm{F1}}(\traj)=
\begin{cases}
1,& |\Amb|=0\text{ and }n=0,\\
\dfrac{2p(\traj)r(\traj)}{p(\traj)+r(\traj)},& |\Amb|>0,\ n>0,\ p(\traj)+r(\traj)>0,\\
0,&\text{otherwise}.
\end{cases}
\label{eq:reward}
\end{equation}
Thus $R_{\mathrm{F1}}$ is not the harmonic mean of the population metrics $\Prc$ and $\Rcl$.
The annotation-based transcript scorer supplies the credits against the registered ambiguities.
A request with ambiguities receives zero reward if no question is productive, including when the asker stops immediately.
A request without ambiguities receives one for abstaining and zero for asking, which defines the training reward where recall is undefined; native rollouts give such requests a single decision, meant for stopping, so the zero applies to a question emitted instead.
Repeated and irrelevant questions reduce $p(\traj)$ without increasing $r(\traj)$; a stop decision does not count toward $n$.

\paragraph{Ablation objective.}
The \emph{outcome reward} uses the theory's interpretation-dependent verifier and the final submitted solution $\hat s$:
\begin{equation}
R_{\mathrm{SR}}(\theta,\traj)=\Verif(\theta,\hat s)\in\{0,1\}.
\label{eq:outcome-reward}
\end{equation}
Its expectation is the success rate $J(\AskPolicy;\Oracle)$ for the solver used during training.
Question-F1 training ends after clarification, whereas outcome-reward training also generates and verifies SQL.
Changing the training oracle can change both the dialogue and the credits supplied by the transcript scorer.

\paragraph{Trajectory-level advantages.}
For each request, we sample $K=16$ independent trajectories $\traj_1,\ldots,\traj_K$ and standardize their rewards, with $R$ denoting the chosen reward above and suppressing the fixed interpretation $\theta$ for outcome reward:
\begin{equation}
\widehat A_k=\frac{R(\traj_k)-\overline R}{s_R+10^{-6}},\qquad
\overline R=\frac{1}{K}\sum_{k=1}^{K}R(\traj_k),\qquad
s_R^2=\frac{1}{K-1}\sum_{k=1}^{K}\bigl(R(\traj_k)-\overline R\bigr)^2.
\label{eq:ask-grpo-advantage}
\end{equation}
Each retained clarification output, including a stop decision, receives its trajectory's advantage.
Oracle responses and SQL outputs carry no loss.
Groups with constant reward, or reward standard deviation at most $10^{-6}$, receive zero advantage and are filtered before the actor update.
If every group has zero advantage, one complete group is retained to avoid an empty training batch; only the regularization terms below then contribute to the loss gradient.

\paragraph{Per-turn policy objective.}
Let $j$ index retained clarification decisions in an optimizer batch $\mathcal{B}$, let $k(j)$ identify the trajectory containing decision $j$, and let $T_j$ be its number of unmasked output tokens.
For output token $y_{jt}$ with dialogue and token prefix $h_{jt}$, define the probability ratio and sampled reference penalty as
\begin{equation}
\rho_{jt}(\phi)=\frac{\pi_\phi(y_{jt}\mid h_{jt})}{\pi_{\mathrm{old}}(y_{jt}\mid h_{jt})},\qquad
d_{jt}^{\mathrm{k2}}(\phi)=\frac{1}{2}\left(\log\frac{\pi_\phi(y_{jt}\mid h_{jt})}{\pi_{\mathrm{ref}}(y_{jt}\mid h_{jt})}\right)^2,
\end{equation}
where $\phi$ denotes trainable policy parameters, $\pi_{\mathrm{old}}$ is the pre-update actor, and $\pi_{\mathrm{ref}}$ is the frozen reference policy.
Writing $H_{jt}(\phi)$ for the entropy of $\pi_\phi(\cdot\mid h_{jt})$, the minimized loss is
\begin{equation}
\begin{aligned}
\mathcal{L}(\phi)&=\frac{1}{B}\sum_{j\in\mathcal{B}}\frac{1}{T_j}\sum_{t=1}^{T_j}\ell_{jt}(\phi),\\
\ell_{jt}(\phi)&=-\min\!\left\{\rho_{jt}(\phi)\widehat A_{k(j)},\,
\operatorname{clip}\!\left(\rho_{jt}(\phi),0.8,1.28\right)\widehat A_{k(j)}\right\}\\
&\quad{}+0.1d_{jt}^{\mathrm{k2}}(\phi)-0.01H_{jt}(\phi).
\end{aligned}
\label{eq:ask-grpo-loss}
\end{equation}
The batch normalization uses the nominal decision-sample batch size $B=512$, including for partial final batches.
Tokens are averaged within each decision, and decision losses are summed without normalizing by the number of decisions in a trajectory; dialogues with more retained decisions therefore contribute more terms.

\begin{figure}[t]
\centering
\begingroup
\sffamily
\hyphenpenalty=10000
\exhyphenpenalty=10000

\definecolor{grInk}{HTML}{1C1B21}
\definecolor{grMuted}{HTML}{5F6570}
\definecolor{grRule}{HTML}{C9C9CF}
\definecolor{grPaper}{HTML}{F8F8F5}
\definecolor{grAsk}{HTML}{005E92}    
\definecolor{grOrc}{HTML}{BF5400}    
\definecolor{grVer}{HTML}{00785A}    

\newcommand{\grTitle}{\fontsize{7}{8.2}\selectfont\bfseries}
\newcommand{\grBody}{\fontsize{6.8}{8}\selectfont}
\newcommand{\grSmall}{\fontsize{6.4}{7.6}\selectfont}
\newcommand{\grTag}{\fontsize{6}{7}\selectfont}
\newcommand{\grCode}{\fontfamily{txtt}\fontsize{6.4}{7.6}\selectfont}

\tikzset{grRound/.style={rounded corners=#1}}
\newcommand{\grRgroup}{2mm}   
\newcommand{\grRbox}{1.2mm}   
\newcommand{\grRsmall}{0.6mm} 

\tikzset{
  grBlock/.style={fill=#1!9!grPaper},
  grMidRule/.style={draw=grRule, line width=0.4pt},
  grHairRule/.style={draw=grRule, line width=0.25pt},
  grFlow/.style={draw=#1, line width=0.55pt, -{Triangle[length=1.5mm, width=1.15mm]}},
  grCustody/.style={draw=#1, line width=0.45pt, dash pattern=on 1.3pt off 1.1pt,
    -{Triangle[length=1.3mm, width=1.05mm]}},
  grStamp/.style={fill=grPaper, inner xsep=0.5mm, inner ysep=0.2mm, font=\grTag, text=#1},
}

\tikzset{
  pics/emQuestion/.style={code={
    \node[text=white, font=\fontsize{4.6}{4.6}\selectfont\bfseries] at (0,-0.02) {?};}},
  pics/emUser/.style={code={
    \fill[white] (0,0.3) circle (0.3);
    \fill[white] (-0.55,-0.62) .. controls (-0.55,-0.12) and (-0.28,-0.04) .. (0,-0.04)
      .. controls (0.28,-0.04) and (0.55,-0.12) .. (0.55,-0.62) -- cycle;}},
  pics/emDoc/.style={code={
    \fill[white] (-0.45,-0.6) rectangle (0.45,0.6);
    \draw[#1, line width=0.15mm] (-0.28,0.3) -- (0.28,0.3) (-0.28,0.02) -- (0.28,0.02)
      (-0.28,-0.26) -- (0.1,-0.26);}},
  pics/emPlus/.style={code={
    \draw[white, line width=0.3mm, line cap=round] (-0.62,0) -- (0.62,0) (0,-0.62) -- (0,0.62);}},
}

\newcommand{\gremblem}[4]{%
  \fill[#1] (#2,#3) circle (1.35mm);
  \begin{scope}[shift={(#2,#3)}, scale=1.12]\pic{#4=#1};\end{scope}}

\newcommand{\grcomp}[5]{%
  \path[grBlock=#5, grRound=\grRbox] (#1,#2-#4) rectangle (#1+#3,#2);
  \begin{scope}
    \clip[grRound=\grRbox] (#1,#2-#4) rectangle (#1+#3,#2);
    \fill[#5] (#1,#2-0.32) rectangle (#1+#3,#2);
  \end{scope}}

\newcommand{\grzone}[5]{%
  \path[grBlock=#5, grRound=\grRbox] (#1,#2) rectangle (#3,#4);
  \begin{scope}
    \clip[grRound=\grRbox] (#1,#2) rectangle (#3,#4);
    \fill[#5] (#1,#2) rectangle (#1+0.39,#4);
  \end{scope}}

\newcommand{\grground}[4]{%
  \path[fill=grPaper, grRound=\grRbox] (#1,#2) rectangle (#3,#4);
  \begin{scope}
    \clip[grRound=\grRbox] (#1,#2) rectangle (#3,#4);
    \fill[grInk] (#1,#4-0.28) rectangle (#3,#4);
    \fill[grInk] (#1,#2) rectangle (#3,#2+0.28);
  \end{scope}}

\newcommand{\grminus}[2]{%
  \filldraw[fill=grPaper, draw=grAsk, line width=0.85pt] (#1,#2) circle (1.35mm);
  \draw[grAsk, line width=0.3mm, line cap=round] ({#1-0.69},#2) -- ({#1+0.69},#2);}

\newcommand{\grterm}[1]{\sbox{\figuretermbox}{#1}%
  \rlap{\textcolor{grOrc!28}{\rule[-0.35mm]{\wd\figuretermbox}{1.7mm}}}\usebox{\figuretermbox}}

\newcommand{\grrow}[3]{%
  \node[anchor=base west, font=\grBody] at (35.2,{#1-3.3}) {$\traj_{#2}$};
  \node[anchor=base west, font=\grTag, text=grMuted] at (35.2,{#1-6.7}) {#3};}

\newcommand{\grturn}[5]{%
  \node[anchor=base west, font=\grSmall, text=grAsk] at (#1,{#2-3.3}) {#3};
  \node[anchor=base west, font=\grSmall\itshape, text=grOrc] at (#1,{#2-6.7}) {#4};
  \node[anchor=base west, font=\grSmall\itshape, text=grOrc] at (#1,{#2-9.38}) {#5};}

\newcommand{\grRZ}{116.4}
\newcommand{\grRS}{12}
\newcommand{\grRM}{0.6}

\begin{tikzpicture}[x=1mm, y=1mm,
  every node/.style={inner sep=0pt, outer sep=0pt, text=grInk}]
\path[use as bounding box] (0,0) rectangle (139.7,62.4);

\grzone{0}{48.6}{30.0}{60.8}{grAsk}
\gremblem{grAsk}{2.3}{60.8}{emDoc}
\node[anchor=base west, font=\grTitle, text=grAsk] at (1.9,57.2) {Task Specification};
\node[anchor=base west, font=\grBody] at (1.9,52.6) {``List the \grterm{top} \grterm{earners}.''};

\grcomp{34.0}{60.8}{105.7}{12.2}{grOrc}
\gremblem{grOrc}{36.1}{60.8}{emUser}
\node[grStamp=grOrc, anchor=east] at (138.2,60.8) {Frozen};
\node[anchor=base west, font=\grTitle, text=grOrc] at (35.7,57.0) {Oracle $\Oracle$};
\node[anchor=base west, font=\grSmall, text=grMuted] at (35.7,52.9) {holds the ambiguities and their resolutions};
\node[anchor=base west, font=\grCode, text=grOrc] at (84.0,56.4) {earners};
\node[anchor=base west, font=\grSmall, text=grMuted] at (95.2,56.4) {$\mapsto$};
\node[anchor=base west, font=\grSmall\itshape] at (99.0,56.4) {Salary + bonus.};
\node[anchor=base west, font=\grCode, text=grOrc] at (84.0,52.7) {top};
\node[anchor=base west, font=\grSmall, text=grMuted] at (95.2,52.7) {$\mapsto$};
\node[anchor=base west, font=\grSmall\itshape] at (99.0,52.7) {Top 3.};

\grground{34.0}{7.0}{139.7}{46.2}
\draw[grMidRule] (34.0,41.2) -- (139.7,41.2);
\draw[grHairRule] (34.0,29.8) -- (139.7,29.8) (34.0,18.4) -- (139.7,18.4);
\node[anchor=base west, font=\grTag, text=grMuted] at (35.2,43.2) {Rollout};
\node[anchor=base west, font=\grTag, text=grMuted] at (47.4,43.2) {Turn 1};
\node[anchor=base west, font=\grTag, text=grMuted] at (69.0,43.2) {Turn 2};
\node[anchor=base west, font=\grTag, text=grMuted] at (90.6,43.2) {Turn 3};
\node[anchor=base west, font=\grSmall, text=grMuted] at (\grRZ,43.2) {$R_{\mathrm{F1}}$};
\node[anchor=base, font=\grSmall, text=grMuted] at (135.6,43.2) {$\hat A$};

\grrow{41.2}{1}{precise}
\grturn{47.4}{41.2}{Rank \grterm{earners} by?}{Salary + bonus.}{}
\grturn{69.0}{41.2}{What is \grterm{top}?}{Top 3.}{}

\grrow{29.8}{2}{repeats}
\grturn{47.4}{29.8}{Rank \grterm{earners} by?}{Salary + bonus.}{}
\grturn{69.0}{29.8}{\grterm{Earners} by salary?}{That ambiguity was}{already answered.}
\grturn{90.6}{29.8}{What is \grterm{top}?}{Top 3.}{}

\grrow{18.4}{3}{off-target}
\grturn{47.4}{18.4}{Which year?}{Sorry, I can't help}{with that.}
\grturn{69.0}{18.4}{Ascending order?}{Sorry, I can't help}{with that.}

\draw[grMuted, line width=0.35pt] (\grRZ,7.0) -- (\grRZ,41.2);
\fill[grVer, grRound=\grRsmall] (\grRZ,34.0) rectangle ({\grRZ+1.0*\grRS},37.0);
\fill[grVer, grRound=\grRsmall] (\grRZ,22.6) rectangle ({\grRZ+0.8*\grRS},25.6);
\draw[grVer, line width=1.4pt] (\grRZ,11.2) -- (\grRZ,14.2); 
\draw[grMuted, line width=0.5pt, dash pattern=on 1.5pt off 1.1pt]
  ({\grRZ+\grRM*\grRS},7.0) -- ({\grRZ+\grRM*\grRS},41.2);
\node[anchor=base, font=\grTag, text=grMuted] at (\grRZ,4.3) {0};
\node[anchor=base, font=\grTag, text=grMuted] at ({\grRZ+\grRM*\grRS},4.3) {mean};

\grcomp{1.0}{30.4}{28.0}{12.6}{grAsk}
\gremblem{grAsk}{3.1}{30.4}{emQuestion}
\node[anchor=base, font=\grTitle] at (15.0,26.4) {Asking Policy};
\node[anchor=base, font=\grBody, text=grMuted] at (15.0,22.9) {$\AskPolicy$};
\node[anchor=base, font=\grSmall, text=grMuted] at (15.0,19.8) {samples $K$ rollouts};
\draw[grCustody=grAsk] (15.0,48.6) -- (15.0,30.4);

\draw[grAsk, line width=0.55pt] (29.0,24.1) -- (31.6,24.1) (31.6,35.5) -- (31.6,12.7);
\foreach \y in {35.5,24.1,12.7} {\draw[grFlow=grAsk] (31.6,\y) -- (34.3,\y);}

\draw[grFlow=grAsk] (135.6,35.5) -- (135.6,1.6) -- (15.0,1.6) -- (15.0,17.8);
\node[grStamp=grAsk, fill=white, font=\grSmall\bfseries] at (75.0,1.6) {Per-turn updates of the asking policy};
\gremblem{grAsk}{135.6}{35.5}{emPlus}
\gremblem{grAsk}{135.6}{24.1}{emPlus}
\grminus{135.6}{12.7}

\end{tikzpicture}
\endgroup
\caption{Asking-policy training with a multi-turn GRPO variant with DAPO-style asymmetric clipping and zero-advantage group filtering.
The main question-F1 reward $R_{\mathrm{F1}}$ balances productive questions and ambiguity coverage; every retained asking decision receives its trajectory's normalized advantage.
Three trajectories illustrate the update; training uses $K=16$.}
\label{fig:grpo}
\end{figure}

\subsection{Hyperparameters and shared model parameters}
\label[appendix]{sec:asker-optimization}

\Cref{tab:hp-train} reports the training hyperparameters.
For the main configuration with the deterministic oracle and question-F1 reward, the selected checkpoint follows \RLIteration{det_f1} rollout batches.
Each rollout batch samples eight requests and 16 trajectories per request.
Each retained model decision becomes a separate training sample, with a nominal global batch size of 512 samples, partial final batches, and dynamic token packing.
With zero-advantage filtering, each rollout batch retained at most one nominal batch of decision samples, so every rollout batch produced exactly one optimizer update.
The learning-rate schedule assumes four decisions per trajectory; this estimate does not cap the number of retained decisions.

The asker and native SQL solver share the same frozen Qwen3-8B base and trainable LoRA adapter.
Masking SQL outputs out of the loss does not freeze the solver: updates from clarification tokens also change the adapter used for SQL generation.

\begin{table}[t]
\centering\small
\caption{Asker-training hyperparameters.
Oracle and reward choices vary as specified in \Cref{sec:ablations-details}.}
\label{tab:hp-train}
\begin{tabular}{@{}lp{8.0cm}@{}}
\toprule
Setting & Value \\
\midrule
Initialization / backend & Qwen3-8B / FSDP \\
Optimizer & AdamW, $(\beta_1,\beta_2)=(0.9,0.98)$, $\epsilon=10^{-8}$ \\
Learning rate / schedule & Peak $5\times10^{-5}$; cosine decay to 0, 5\% linear warmup \\
Schedule horizon & 136 optimizer updates, estimated from four decisions per trajectory \\
Weight decay / gradient clipping & 0 / 1.0 \\
LoRA rank / alpha / dropout & 32 / 64 / 0 (adapter scaling $\alpha/r=2$) \\
LoRA targets & Attention q/k/v/o and feed-forward gate/up/down projections \\
Requests per rollout / group size & 8 / 16 trajectories per request \\
Rollout / nominal training batch & 128 trajectories / 512 retained decision samples \\
Policy ratio clipping & $[0.8,1.28]$ \\
Loss aggregation & Token mean per retained decision; sum divided by nominal batch size \\
Group filtering & Zero-advantage groups dropped; retain one group if all are zero \\
Reference-policy penalty & Sampled \texttt{k2}, coefficient 0.1 \\
Entropy coefficient & 0.01 \\
Policy decoding & Temperature 0.8; thinking disabled \\
Main reward & Question F1 ($R_{\mathrm{F1}}$; \Cref{eq:reward}) \\
Main training oracle & Deterministic; no oracle LLM is instantiated \\
LLM-oracle ablation & Qwen3-14B; temperature 0; thinking disabled \\
Rollout context / response ceiling & 32,768 / 256 tokens \\
Clarification prompt & Checklist (ledger); question budget visible \\
Question budget & At most $|\Amb|+2$ questions for $|\Amb|>0$, then a stop decision \\
Checkpoint / validation cadence & Every eight rollout batches; initial validation \\
Native validation samples & One per request \\
\bottomrule
\end{tabular}
\end{table}

Native rollouts on requests with registered ambiguities allow $|\Amb|+3$ decisions, reserving the final decision for stopping.
On requests without registered ambiguities, only a stopping decision is allowed.
Question-F1 training skips SQL generation.

\subsection{Ablation design and checkpoint selection}
\label[appendix]{sec:ablations-details}

We compare the main policy trained with question-F1 reward and the deterministic oracle with two outcome-reward ablations, one using the deterministic oracle and one using Qwen3-14B as the oracle.
All three use AmbiTab-Bird and its 127-request development split, with one training seed per configuration.
Every configuration updates clarification outputs only.
We evaluate the designated checkpoints: iteration \RLIteration{det_f1} for question-F1 reward, \RLIteration{det_sr} for outcome reward with the deterministic oracle, and \RLIteration{llm_sr} for outcome reward with the LLM oracle.
These checkpoints are fixed for the reported comparison rather than selected by a common validation-score rule.
All RL learning curves and ablation figures use a shared plotting window through iteration \RLTrainingIteration{det_f1}.
This window contains zero-based rollout batches 0-47, which precede checkpoint 48, and native evaluations through checkpoint 48.
All three configurations have complete common-inference evaluations across three seeds on all six corpora.
The untrained reference retains its existing five seeds.
The AmbiTab-Bird development split was used to inspect training and checkpoints, so its scores are development results, not held-out test estimates.
Because every rollout batch produced one optimizer update, the shared window corresponds to 48 updates per configuration and the evaluated checkpoints to \RLIteration{det_f1}, \RLIteration{det_sr}, and \RLIteration{llm_sr} updates.

\subsection{Native learning curves and objective mismatch}
\label[appendix]{sec:asker-native-diagnostics}

\Cref{fig:rl-main,fig:rl-summary,fig:rl-bi,tab:ablations-native} separate changes in clarification quality from changes in native SQL success.

In \Cref{fig:rl-main}, training reward averages one reward per logged trajectory before groups without training signal are filtered; the solid curve is a debiased exponential moving average with weight 0.95.
Iteration 0 is the run's native evaluation before training.
Native checkpoints use the Qwen3-8B solver and native validation protocol; the untrained reference uses the fixed Coder-30B solver and common inference protocol.
The reference uses the 127-request AmbiTab-Bird development slice of the deterministic-oracle inference runs underlying \Cref{tab:inference-oracles}, which pools training and development requests in its displayed AmbiTab-Bird row.
The dotted reference lines show the development-only mean across five seeds.
Common-solver results appear in \Cref{tab:ablations-summary}.

\begin{figure}[t]
\centering
\includegraphics[width=0.80\linewidth]{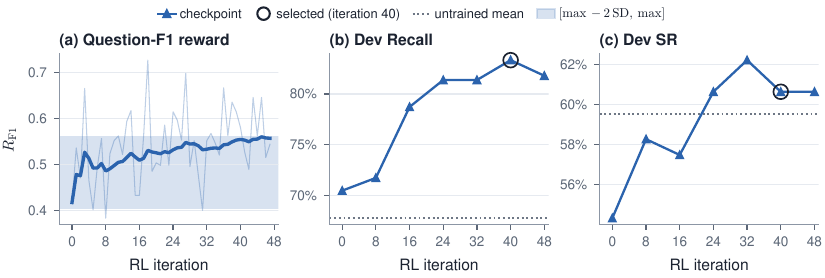}
\caption{Learning curves through iteration \RLTrainingIteration{det_f1} for the main deterministic-oracle configuration with question-F1 reward $R_{\mathrm{F1}}$: \textbf{(a)} training reward, \textbf{(b)} native AmbiTab-Bird development recall, and \textbf{(c)} SR.
Faint and solid lines show mean trajectory rewards per rollout batch and their smoothed trend; rings mark the selected checkpoint at iteration \RLIteration{det_f1}.
The blue band in (a) extends from the maximum smoothed reward down by two sample standard deviations of the per-batch mean rewards.
Dotted lines show the untrained development-only inference mean across five seeds, using the deterministic-oracle runs underlying \Cref{tab:inference-oracles}.
Native and reference evaluations use different protocols and solvers.}
\label{fig:rl-main}
\end{figure}

In \Cref{fig:rl-summary}(a-c), the trained checkpoints use the deterministic validation oracle and adapted Qwen3-8B solver.
Every row shows differences from the same untrained Qwen3-8B asker reference on the 127-request AmbiTab-Bird development split, averaged over all five evaluation seeds from the deterministic-oracle inference runs underlying \Cref{tab:inference-oracles}.
Gray reference bands show one sample standard deviation on either side of the reference mean, using the same development-only reference as \Cref{fig:rl-main}.
Colored row bands show a two-SD tolerance: each extends from that row's maximum trained-checkpoint value down by twice the corresponding reference standard deviation.
The reference uses the common inference protocol and fixed Coder-30B solver, whereas the trained checkpoints use native validation; these differences therefore include protocol effects, and SR also includes the solver difference.
\Cref{tab:ablations-dev} provides the comparison under the common inference protocol.
Colored points represent the available trained checkpoints, and rings identify the designated evaluation checkpoint for each configuration.
The main policy trained with question-F1 reward appears first, followed by the ablations.
Row labels give the numbers of trained checkpoints and rollout batches; $R_{\mathrm{SR}}$ denotes the outcome reward.
Each configuration contributes one training run, so the checkpoint comparisons are descriptive.

\begin{figure}[t]
\centering
\includegraphics[width=\linewidth]{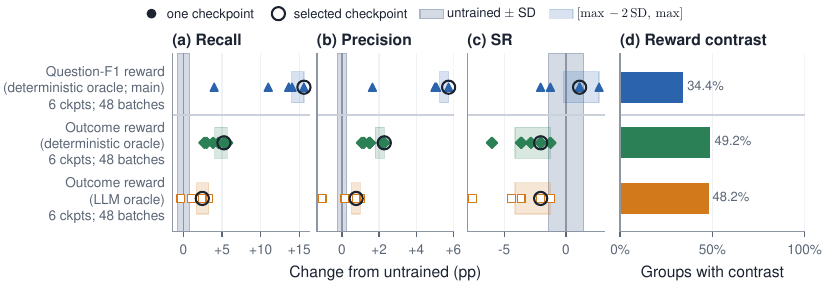}
\caption{The main policy trained with question-F1 reward and the oracle/reward ablations through iteration \RLTrainingIteration{det_f1}: native AmbiTab-Bird development checkpoints relative to the five-seed untrained inference reference.
\textbf{(a-c)} Recall, precision, and SR differences in percentage points.
Rings mark the designated evaluation checkpoints, including iteration \RLIteration{det_sr} for outcome reward with the deterministic oracle.
Gray shading shows $\pm$ one sample standard deviation across the five development-only reference seeds from the deterministic-oracle runs underlying \Cref{tab:inference-oracles}.
Colored row bands span $[\max-2\,\mathrm{SD},\max]$, using each row's maximum trained-checkpoint value and the corresponding reference SD.
Trained checkpoints use native validation with the Qwen3-8B solver; the reference uses the common inference protocol with the fixed Coder-30B solver.
\textbf{(d)} Fraction of logged request groups with nonconstant trajectory rewards before group filtering, pooling counts over the same 48 rollout batches for each configuration.}
\label{fig:rl-summary}
\end{figure}

At iteration \RLIteration{det_f1}, the main policy trained with question-F1 reward reaches $\RLNative{det_f1}{recall}\%$ native recall and $\RLNative{det_f1}{SR}\%$ native SR, compared with $\RLNativeBase{det_f1}{recall}\%$ and $\RLNativeBase{det_f1}{SR}\%$ at initialization.
With outcome reward, the deterministic-oracle policy at iteration \RLIteration{det_sr} reaches $\RLNative{det_sr}{recall}\%$ recall and $\RLNative{det_sr}{SR}\%$ SR; the LLM-oracle policy at iteration \RLIteration{llm_sr} reaches $\RLNative{llm_sr}{recall}\%$ recall and $\RLNative{llm_sr}{SR}\%$ SR.
Outcome reward depends on the solver's ability to use the responses; F1 balances question efficiency and ambiguity coverage.
The fixed-solver evaluation in \Cref{tab:ablations-dev} measures the trained asking policies under a common solution policy and protocol.
The two evaluations need not rank checkpoints in the same order.

The outcome-reward runs provide the training-oracle comparison under the common inference protocol.
There is no LLM-oracle F1 run in this ablation set.
One training seed per configuration does not establish a general ranking of objectives or training oracles.

\paragraph{Dataset choice matters.}
A separate outcome-reward run on \benchmarkname{}-BIRD-Interact crashed before completing training.
Its trajectory failed to sustain improvements in clarification quality over the shown window (\Cref{fig:rl-bi}), illustrating that the training dataset matters for learning to ask.
The run uses the LLM oracle and its 120-request native validation split; the shown window extends through iteration \RLTrainingIteration{llm_sr_bi}.
The columns show reward, training recall, training precision, and optimization loss.
Faint lines show rollout batches; solid lines use a debiased exponential moving average with weight 0.95.
Reward averages retained decision samples after constant-reward groups are filtered, so it is not the unfiltered rollout SR.
Native recall falls from $\RLNative{llm_sr_bi}{recall}\%$ at iteration \RLIteration{llm_sr_bi} to $\RLNativeLast{llm_sr_bi}{recall}\%$ at the end of the shown window, iteration \RLTrainingIteration{llm_sr_bi}, where native SR remains low at $\RLNativeLast{llm_sr_bi}{SR}\%$.
Few request groups provide differing binary rewards, limiting the within-request reward contrast available for policy updates.
The curves document the shown training window but do not identify the cause of the later crash.
This run also used a larger response budget and a different schedule estimate than the current AmbiTab-Bird runs, so it does not isolate the effect of the training corpus.
We retain this run only as an illustration of unsuccessful training; it is excluded from the configuration and results tables.

\begin{figure}[t]
\centering
\includegraphics[width=\linewidth]{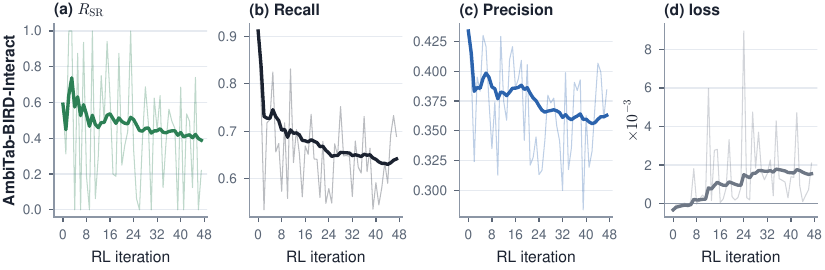}
\caption{Outcome-reward training dynamics on \benchmarkname{}-BIRD-Interact through iteration \RLTrainingIteration{llm_sr_bi}.
Training recall and precision decline over the window, and binary rewards provide limited within-request contrast.}
\label{fig:rl-bi}
\end{figure}

\paragraph{Reward contrast.}
\Cref{fig:rl-summary}(d) reports the fraction of request groups whose available trajectory rewards are not all equal, with 16 trajectories sampled per request.
Counts are logged before group filtering and pooled across the same 48 rollout batches for each configuration; row labels give the number of batches.
Such groups supply the within-request reward contrast used for policy updates, whereas constant-reward groups have zero policy-gradient advantage (\Cref{eq:ask-grpo-advantage}).
These fractions describe reward contrast over the shared plotting window.
They measure available reward contrast rather than question quality.

\providecommand{\meansd}[2]{#1\textsubscript{$\pm$#2}}
\begin{table}[t]
\centering\small
\setlength{\tabcolsep}{3.1pt}
\caption{Native validation before training and at the selected checkpoint, using the adapted Qwen3-8B solver. Rates are percentages. All rows use AmbiTab-Bird dev ($n=127$). Question-F1 reward is the main configuration, trained with the deterministic oracle; the remaining trained policies are ablations. Selection and oracle settings are given in \Cref{sec:ablations-details,sec:asker-evaluation-details}.}
\label{tab:ablations-native}
\begin{tabular}{@{}lrrrr@{}}
\toprule
Policy & SR & Precision & Recall & F1 \\
\midrule
Question-F1 reward (untrained) & 54.3 & 32.2 & 70.4 & 43.7 \\
Question-F1 reward & 60.6 & 36.8 & 83.3 & 50.6 \\
\addlinespace[2pt]
Outcome reward (deterministic oracle) (untrained) & 55.9 & 31.0 & 68.1 & 42.2 \\
Outcome reward (deterministic oracle) & 57.5 & 33.3 & 73.0 & 45.3 \\
\addlinespace[2pt]
Outcome reward (LLM oracle) (untrained) & 52.8 & 30.6 & 68.1 & 41.9 \\
Outcome reward (LLM oracle) & 57.5 & 31.8 & 70.2 & 43.4 \\
\addlinespace[2pt]
\bottomrule
\end{tabular}
\end{table}

\subsection{Common inference protocol and aggregation}
\label[appendix]{sec:asker-evaluation-details}

\paragraph{Fixed solver and oracle.}
\Cref{tab:ablations-dev} evaluates the trained askers with the deterministic oracle and a fixed Qwen3-Coder-30B-A3B-Instruct solver, using Checklist (ledger), temperature 0.7, and no SQL debugging.
The main and complete tables use the same designated checkpoints; the question-F1 policy and both outcome-reward ablations each have three completed evaluation seeds.
The question budget matches native rollouts on requests with registered ambiguities; requests without ambiguities skip clarification entirely.
Native validation instead generates SQL with the adapted Qwen3-8B policy at temperature 0.8.
On AmbiTab-Bird dev, its oracle is deterministic on the 119 requests with registered ambiguities and silent on the remaining eight, including for policies trained with the LLM oracle.
\benchmarkname{}-BIRD-Interact native validation uses the LLM oracle.
The untrained rows in \Cref{tab:ablations-native} report each run's own iteration-0 native validation with the untrained Qwen3-8B solver and its single recorded training seed.

\paragraph{Untrained reference and uncertainty.}
The untrained reference uses the Qwen3-8B asker with the same clarification protocol, except for nucleus sampling: its asker samples at top-$p=0.8$, whereas the trained-policy evaluations leave the asker at the serving default of top-$p=0.95$; asker temperature, top-$k=20$, solver decoding, prompt, budget, and oracle are identical.
Its scores retain all five evaluation seeds from the deterministic-oracle runs used in \Cref{tab:inference-oracles}, taking the development split alone for AmbiTab-Bird and \benchmarkname{}-EW-Bird.
Trained-policy cells average all three evaluation seeds; \Cref{tab:ablations-dev} also reports their sample standard deviations.
These seeds vary inference for one trained policy per configuration; they do not measure variation across training runs.
All gains subtract the five-seed untrained mean from the trained policy's three-seed mean, including the differences in \Cref{tab:ablations-summary}.
\Cref{tab:inference-oracles} instead pools each of these two corpora's train and development splits.

\paragraph{Eligible populations.}
SR averages $\Verif(\theta,\hat s)$ over requests.
Precision and recall are macro averages of the episode quantities in \Cref{met:prec-rec}, omitting precision when $n=0$ and recall when $|\Amb|=0$.
Their denominators can differ: for example, AmbiTab-Bird dev recall uses 119 of 127 requests.
Reported F1 is the average of episode-level F1, not the harmonic mean of macro precision and recall.

\benchmarkname{}-BIRD-Interact pools its full dataset within each evaluation seed using each metric's eligible denominator.
\benchmarkname{}-Ambrosia uses the full ambiguous split of Ambrosia, and \benchmarkname{}-AmbiQT uses the column-synonym subset of AmbiQT.
The Mean column in \Cref{tab:ablations-dev} equally weights all six evaluation families, AmbiTab-Bird dev, \benchmarkname{}-EW-Bird dev, \benchmarkname{}-BIRD-Interact, \benchmarkname{}-Ambrosia, \benchmarkname{}-AmbiQT, and \benchmarkname{}-TACO, instead of pooling requests.
For the family mean, we first average all six families within each evaluation seed, then report the mean and sample standard deviation of these per-seed averages over all three trained-policy seeds or all five untrained-reference seeds.

\subsection{Complete inference results}

\Cref{tab:ablations-dev} reports complete three-seed evaluations of the main policy trained with question-F1 reward and both outcome-reward ablations across all six corpora.
The main-text summary and appendix use the same runs and checkpoints.
For outcome-reward training, the policy trained with the deterministic oracle at iteration \RLIteration{det_sr} has six-corpus means of $\RLMean{det_sr}{precision}\%$ precision, $\RLMean{det_sr}{recall}\%$ recall, and $\RLMean{det_sr}{SR}\%$ SR, compared with $\RLMean{llm_sr}{precision}\%$, $\RLMean{llm_sr}{recall}\%$, and $\RLMean{llm_sr}{SR}\%$ for the policy trained with the LLM oracle.
Both outcome-reward ablations fall below the question-F1 policy on these three means, while differences between the two training oracles are small and vary across corpora.
\providecommand{\meansd}[2]{#1\textsubscript{$\pm$#2}}
\begin{table}[t]
\centering\small
\setlength{\tabcolsep}{2.0pt}
\caption{Training and transfer with a fixed Coder-30B solver and deterministic oracle. Question-F1 reward is the main configuration, trained with the deterministic oracle; the remaining trained policies are ablations. Question-F1 reward uses iteration 40; Outcome reward (deterministic oracle) uses iteration 16; Outcome reward (LLM oracle) uses iteration 40. Selected trained policies use all three evaluation seeds; the untrained baseline retains five seeds. Rates are percentages, reported as means with sample SD in subscripts. Mean equally weights all six families within each seed, using only seeds complete across all six. Bold values mark the highest mean in each column within each metric, including ties. AmbiTab-Bird and \benchmarkname{}-EW-Bird use dev; \benchmarkname{}-AmbiQT uses column synonyms. Dashes denote unavailable results. Protocol details appear in \Cref{sec:ablations-details,sec:asker-evaluation-details}.}
\label{tab:ablations-dev}
\begin{tabular*}{\textwidth}{@{\extracolsep{\fill}}lrrrrrrr@{}}
\toprule
\multicolumn{8}{l}{\textbf{SQL success}} \\
Policy & \shortstack[r]{AmbiTab-\\Bird} & \shortstack[r]{\benchmarkname{}-\\EW-Bird} & \shortstack[r]{\benchmarkname{}-\\BIRD-Interact} & \shortstack[r]{\benchmarkname{}-\\Ambrosia} & \shortstack[r]{\benchmarkname{}-\\AmbiQT} & \shortstack[r]{\benchmarkname{}-\\TACO} & Mean \\
\midrule
Untrained & \meansd{59.5}{1.4} & \meansd{69.0}{0.8} & \meansd{9.7}{0.3} & \meansd{35.0}{0.2} & \meansd{56.3}{0.7} & \meansd{72.8}{1.5} & \meansd{50.4}{0.4} \\
\textbf{Question-F1 reward} & \textbf{\meansd{71.9}{0.5}} & \textbf{\meansd{72.2}{1.0}} & \meansd{9.7}{0.3} & \textbf{\meansd{37.3}{0.2}} & \textbf{\meansd{59.4}{0.3}} & \textbf{\meansd{77.0}{1.4}} & \textbf{\meansd{54.6}{0.2}} \\
\shortstack[l]{Outcome reward\\(deterministic oracle)} & \meansd{62.7}{1.2} & \meansd{71.1}{1.6} & \textbf{\meansd{9.8}{0.1}} & \meansd{35.5}{0.7} & \meansd{56.8}{0.2} & \meansd{74.3}{1.9} & \meansd{51.7}{0.4} \\
\shortstack[l]{Outcome reward\\(LLM oracle)} & \meansd{64.3}{1.2} & \meansd{71.4}{1.6} & \meansd{9.7}{0.6} & \meansd{35.9}{0.6} & \meansd{56.8}{0.5} & \meansd{73.4}{0.8} & \meansd{51.9}{0.6} \\
\midrule
\multicolumn{8}{l}{\textbf{Precision}} \\
Policy & \shortstack[r]{AmbiTab-\\Bird} & \shortstack[r]{\benchmarkname{}-\\EW-Bird} & \shortstack[r]{\benchmarkname{}-\\BIRD-Interact} & \shortstack[r]{\benchmarkname{}-\\Ambrosia} & \shortstack[r]{\benchmarkname{}-\\AmbiQT} & \shortstack[r]{\benchmarkname{}-\\TACO} & Mean \\
\midrule
Untrained & \meansd{31.0}{0.2} & \meansd{33.6}{0.5} & \meansd{32.1}{0.2} & \meansd{24.0}{0.2} & \meansd{19.5}{0.2} & \meansd{29.1}{0.4} & \meansd{28.2}{0.1} \\
\textbf{Question-F1 reward} & \textbf{\meansd{37.0}{0.5}} & \textbf{\meansd{38.0}{0.3}} & \textbf{\meansd{36.4}{0.4}} & \textbf{\meansd{27.3}{0.1}} & \textbf{\meansd{21.7}{0.1}} & \textbf{\meansd{30.7}{0.3}} & \textbf{\meansd{31.8}{0.2}} \\
\shortstack[l]{Outcome reward\\(deterministic oracle)} & \meansd{33.4}{0.4} & \meansd{34.6}{0.3} & \meansd{33.0}{0.7} & \meansd{24.6}{0.1} & \meansd{19.8}{0.2} & \meansd{29.7}{0.5} & \meansd{29.2}{0.2} \\
\shortstack[l]{Outcome reward\\(LLM oracle)} & \meansd{32.2}{0.7} & \meansd{35.8}{0.5} & \meansd{33.1}{0.4} & \meansd{24.5}{0.1} & \meansd{19.7}{0.1} & \meansd{29.8}{0.5} & \meansd{29.2}{0.2} \\
\midrule
\multicolumn{8}{l}{\textbf{Recall}} \\
Policy & \shortstack[r]{AmbiTab-\\Bird} & \shortstack[r]{\benchmarkname{}-\\EW-Bird} & \shortstack[r]{\benchmarkname{}-\\BIRD-Interact} & \shortstack[r]{\benchmarkname{}-\\Ambrosia} & \shortstack[r]{\benchmarkname{}-\\AmbiQT} & \shortstack[r]{\benchmarkname{}-\\TACO} & Mean \\
\midrule
Untrained & \meansd{67.7}{0.8} & \meansd{69.8}{1.1} & \meansd{57.7}{0.5} & \meansd{71.7}{0.5} & \meansd{58.3}{0.5} & \meansd{85.1}{0.8} & \meansd{68.4}{0.3} \\
\textbf{Question-F1 reward} & \textbf{\meansd{83.4}{1.1}} & \textbf{\meansd{79.3}{0.6}} & \textbf{\meansd{65.1}{0.7}} & \textbf{\meansd{81.8}{0.2}} & \textbf{\meansd{65.0}{0.3}} & \textbf{\meansd{89.8}{0.8}} & \textbf{\meansd{77.4}{0.3}} \\
\shortstack[l]{Outcome reward\\(deterministic oracle)} & \meansd{72.9}{0.5} & \meansd{72.0}{0.7} & \meansd{59.2}{1.4} & \meansd{73.7}{0.4} & \meansd{59.3}{0.6} & \meansd{87.0}{1.5} & \meansd{70.7}{0.3} \\
\shortstack[l]{Outcome reward\\(LLM oracle)} & \meansd{71.3}{1.7} & \meansd{74.7}{0.5} & \meansd{59.4}{0.6} & \meansd{73.5}{0.2} & \meansd{59.2}{0.3} & \meansd{87.3}{1.5} & \meansd{70.9}{0.5} \\
\bottomrule
\end{tabular*}
\end{table}

\section{Leakage experiments}
\label[appendix]{sec:leakage-results}

All runs use the main evaluation settings: the Qwen3-8B asker with Checklist (ledger), the fixed Qwen3-Coder-30B-A3B-Instruct solver, the Qwen3-14B and Gemma-3-27B oracles, all six datasets, and three evaluation seeds (234 runs, 141{,}024 episodes).
The datasets are the development splits of AmbiTab-Bird (127 records) and \benchmarkname{}-EW-Bird (446), a 534-record \benchmarkname{}-Ambrosia, and the full \benchmarkname{}-BIRD-Interact, \benchmarkname{}-AmbiQT, and \benchmarkname{}-TACO pools.
Levels are mean $\pm$ sample standard deviation over seeds; paired differences carry 95\% AVT-cluster bootstrap intervals.
All values are in percentage points.

\paragraph{Untrained asker.}
\Cref{tab:leakage-results} reports the four diagnostics.
$\DiagResp$ and $\DiagFull$ are SR differences; $\DiagSpecQ$ is a recall and $\DiagRecNone$ a recall difference, both over the records with at least one ambiguity.
The two LLM-oracle entries are not paired estimates of their difference.
On \benchmarkname{}-AmbiQT the reference recall of the specification-randomized policy is 13.1\%, so $\DiagSpecQ$ there carries a large reference term; differences between oracles are unaffected, because the term cancels.

\begin{table}[t]
\centering
\scriptsize
\setlength{\tabcolsep}{2pt}
\caption{Leakage diagnostics of the untrained asker, mean $\pm$ SD over three seeds. AmbiTab-Bird and \benchmarkname{}-EW-Bird are development splits, and \benchmarkname{}-Ambrosia has 534 records. Bold marks the larger of the two LLM-oracle values. For the deterministic oracle, which is the reference, $\DiagResp$, $\DiagFull$, and $\DiagRecNone$ are 0 by definition, and $\DiagSpecQ$ is the reference recall of the specification-randomized policy.}
\label{tab:leakage-results}
\resizebox{\linewidth}{!}{
\begin{tabular}{@{}llrrrr@{}}
\toprule
Oracle & Dataset & $\DiagResp$ & $\DiagSpecQ$ & $\DiagFull$ & $\DiagRecNone$ \\
\midrule
Qwen3-14B & AmbiTab-Bird dev & $3.94\pm1.36$ & $0.00\pm0.00$ & $2.36\pm4.09$ & $7.84\pm0.97$ \\
 & \benchmarkname{}-EW-Bird dev & $\mathbf{3.36\pm0.22}$ & $0.22\pm0.00$ & $2.77\pm2.15$ & $0.82\pm0.26$ \\
 & \benchmarkname{}-BIRD-Interact & $-0.44\pm0.67$ & $0.14\pm0.05$ & $-0.11\pm0.95$ & $1.88\pm0.10$ \\
 & \benchmarkname{}-Ambrosia & $4.00\pm1.68$ & $7.24\pm0.57$ & $2.93\pm0.94$ & $54.87\pm0.56$ \\
 & \benchmarkname{}-AmbiQT & $1.61\pm0.47$ & $9.85\pm0.29$ & $-0.31\pm0.23$ & $0.35\pm0.10$ \\
 & \benchmarkname{}-TACO & $\mathbf{6.95\pm1.93}$ & $0.71\pm0.56$ & $\mathbf{1.07\pm1.07}$ & $20.50\pm1.11$ \\
\midrule
Gemma-3-27B & AmbiTab-Bird dev & $\mathbf{5.77\pm0.91}$ & $\mathbf{0.84\pm0.42}$ & $\mathbf{3.15\pm2.36}$ & $\mathbf{74.09\pm0.24}$ \\
 & \benchmarkname{}-EW-Bird dev & $-4.48\pm1.78$ & $\mathbf{1.10\pm0.24}$ & $\mathbf{3.59\pm0.45}$ & $\mathbf{60.99\pm0.47}$ \\
 & \benchmarkname{}-BIRD-Interact & $\mathbf{0.33\pm0.00}$ & $\mathbf{11.04\pm0.28}$ & $\mathbf{0.50\pm0.00}$ & $\mathbf{42.69\pm0.21}$ \\
 & \benchmarkname{}-Ambrosia & $\mathbf{8.24\pm2.76}$ & $\mathbf{53.25\pm0.47}$ & $\mathbf{3.37\pm0.82}$ & $\mathbf{100.00\pm0.00}$ \\
 & \benchmarkname{}-AmbiQT & $\mathbf{2.94\pm0.65}$ & $\mathbf{70.94\pm0.44}$ & $\mathbf{0.37\pm0.19}$ & $\mathbf{100.00\pm0.00}$ \\
 & \benchmarkname{}-TACO & $0.71\pm0.62$ & $\mathbf{49.47\pm1.75}$ & $0.36\pm1.63$ & $\mathbf{97.33\pm0.00}$ \\
\midrule
Deterministic & AmbiTab-Bird dev & 0 & $3.50\pm0.24$ & 0 & 0 \\
 & \benchmarkname{}-EW-Bird dev & 0 & $2.91\pm0.29$ & 0 & 0 \\
 & \benchmarkname{}-BIRD-Interact & 0 & $2.13\pm0.24$ & 0 & 0 \\
 & \benchmarkname{}-Ambrosia & 0 & $0.50\pm0.29$ & 0 & 0 \\
 & \benchmarkname{}-AmbiQT & 0 & $13.09\pm0.32$ & 0 & 0 \\
 & \benchmarkname{}-TACO & 0 & $0.98\pm0.62$ & 0 & 0 \\
\bottomrule
\end{tabular}
}
\end{table}

\subsection{Responder access to the reference SQL}
\label[appendix]{sec:leakage-sql-full-access}

We repeat the experiments of \Cref{tab:leakage-results} with one change: the LLM responder's access to the full reference SQL.
On \benchmarkname{}-Ambrosia, \benchmarkname{}-AmbiQT, and \benchmarkname{}-TACO the default responder does not read it, and the new runs add it; on \benchmarkname{}-EW-Bird and AmbiTab-Bird the default responder reads it and its clause segments, and the new runs remove both.
The matcher is unchanged, so each difference compares two runs of the same policy under the actual oracle, paired by task: the untrained asker for $\DiagResp$, the adversarial full-disambiguation policy for $\DiagFull$, and the generic policy for $\DiagRecNone$.
The new runs comprise 82 evaluation runs and 47{,}640 episodes; a few untrained-asker runs with the SQL shown are reused from an earlier study with the same settings.
The two conditions come from different inference-server sessions.

\begin{table}[t]
\centering
\scriptsize
\setlength{\tabcolsep}{3pt}
\caption{Change in the leakage diagnostics when the LLM responder reads the full reference SQL, shown minus hidden, three seeds pooled, with paired 95\% AVT-cluster bootstrap intervals. AmbiTab-Bird and \benchmarkname{}-EW-Bird are development splits, and \benchmarkname{}-Ambrosia is a 534-record. New run: the condition added for this study.}
\label{tab:leakage-sql-full-access}
\resizebox{\linewidth}{!}{
\begin{tabular}{@{}lllrrr@{}}
\toprule
& & & \multicolumn{3}{c}{Difference, reference SQL shown minus hidden} \\
\cmidrule(lr){4-6}
Oracle & Dataset & New run & $\DiagResp$ & $\DiagFull$ & $\DiagRecNone$ \\
\midrule
Qwen3-14B & AmbiTab-Bird dev & SQL hidden & 3.15\;[$-0.26$,\,6.56] & 0.52\;[$-2.36$,\,3.67] & 0.28\;[0.00,\,1.12] \\
 & \benchmarkname{}-EW-Bird dev & SQL hidden & 3.96\;[1.42,\,6.35] & 2.84\;[0.22,\,5.61] & 0.00\;[$-0.22$,\,0.22] \\
 & \benchmarkname{}-Ambrosia & SQL shown & 2.87\;[0.88,\,4.95] & 4.99\;[3.00,\,7.02] & 0.25\;[$-0.96$,\,1.52] \\
 & \benchmarkname{}-AmbiQT & SQL shown & 0.81\;[$-0.08$,\,1.72] & 1.24\;[0.43,\,1.96] & $-0.14$\;[$-0.29$,\,$-0.02$] \\
 & \benchmarkname{}-TACO & SQL shown & $-0.18$\;[$-3.03$,\,2.50] & $-0.36$\;[$-2.50$,\,1.96] & $-2.50$\;[$-5.35$,\,0.00] \\
\midrule
Gemma-3-27B & AmbiTab-Bird dev & SQL hidden & 7.09\;[2.36,\,12.07] & 1.31\;[$-1.84$,\,4.72] & $-0.14$\;[$-1.40$,\,1.12] \\
 & \benchmarkname{}-EW-Bird dev & SQL hidden & 3.59\;[0.52,\,6.73] & 4.86\;[2.47,\,7.17] & 0.36\;[$-0.39$,\,1.12] \\
 & \benchmarkname{}-Ambrosia & SQL shown & 7.05\;[4.96,\,9.24] & 10.36\;[7.91,\,12.89] & 0.00\;[0.00,\,0.00] \\
 & \benchmarkname{}-AmbiQT & SQL shown & 1.82\;[0.83,\,2.83] & 1.34\;[0.54,\,2.17] & 0.00\;[0.00,\,0.00] \\
 & \benchmarkname{}-TACO & SQL shown & 1.07\;[$-1.43$,\,3.57] & $-0.18$\;[$-1.78$,\,1.43] & 0.00\;[0.00,\,0.00] \\
\bottomrule
\end{tabular}
}
\end{table}

Showing the SQL raises $\DiagFull$ on \benchmarkname{}-Ambrosia and \benchmarkname{}-AmbiQT under both oracles, by 1.24 to 10.36 points, and $\DiagResp$ under Gemma on both and under Qwen on \benchmarkname{}-Ambrosia, by 1.82 to 7.05 points; all these intervals exclude zero.
On \benchmarkname{}-EW-Bird and AmbiTab-Bird, removing it lowers $\DiagResp$ in all four dataset--oracle pairs, by 3.15 to 7.09 points, with intervals excluding zero in three, and lowers $\DiagFull$ on \benchmarkname{}-EW-Bird under both oracles, by 2.84 and 4.86 points, so \Cref{tab:leakage-results} includes this channel there.
On \benchmarkname{}-TACO every interval includes zero: no effect is detectable at this sample size.
$\DiagRecNone$, which depends only on the unchanged matcher, stays within 0.36 points of zero except for Qwen on \benchmarkname{}-TACO, $-2.50$, whose interval reaches zero.
Hidden SQL can still reach the responder through annotations, the schema, and matcher actions that quote an SQL segment.

\subsection{Trained asking policies}
\label[appendix]{sec:leakage-exploratory}

\Cref{tab:leakage-trained-askers} reports $\DiagResp$ and $\DiagSpecQ$ for the two iteration-40 policies of \Cref{sec:ablations-details}, trained with question-F1 reward and with outcome reward (Qwen3-14B oracle), next to the untrained asker.
They repeat the untrained asker's runs with three seeds on the development splits of AmbiTab-Bird and \benchmarkname{}-EW-Bird, the 534-record \benchmarkname{}-Ambrosia (\Cref{tab:ablations-summary} uses all 2{,}670), \benchmarkname{}-AmbiQT, and \benchmarkname{}-TACO (\LeakTrainedInfo{runs} runs, \LeakTrainedInfo{episodes} episodes); \benchmarkname{}-BIRD-Interact and the deterministic outcome policy are not evaluated.
$\DiagFull$ and $\DiagRecNone$ do not depend on the asking policy.
The runs behind each trained-policy $\DiagResp$ share a server session, while the untrained-asker rows come from a different one.

\begin{table}[t]
\centering
\scriptsize
\setlength{\tabcolsep}{3pt}
\caption{Leakage diagnostics of two trained asking policies and the untrained asker, mean $\pm$ SD over three seeds. Under the deterministic oracle, $\DiagResp$ is 0 by definition and $\DiagSpecQ$ is the reference recall.}
\label{tab:leakage-trained-askers}
\resizebox{\linewidth}{!}{
\begin{tabular}{@{}llrrrrrrrrrr@{}}
\toprule
& & \multicolumn{2}{c}{AmbiTab-Bird dev} & \multicolumn{2}{c}{\benchmarkname{}-EW-Bird dev} & \multicolumn{2}{c}{\benchmarkname{}-Ambrosia} & \multicolumn{2}{c}{\benchmarkname{}-AmbiQT} & \multicolumn{2}{c}{\benchmarkname{}-TACO} \\
\cmidrule(lr){3-4}\cmidrule(lr){5-6}\cmidrule(lr){7-8}\cmidrule(lr){9-10}\cmidrule(lr){11-12}
Oracle & Asking policy & $\DiagResp$ & $\DiagSpecQ$ & $\DiagResp$ & $\DiagSpecQ$ & $\DiagResp$ & $\DiagSpecQ$ & $\DiagResp$ & $\DiagSpecQ$ & $\DiagResp$ & $\DiagSpecQ$ \\
\midrule
Deterministic & Untrained & 0 & $3.50\pm0.24$ & 0 & $2.91\pm0.29$ & 0 & $0.50\pm0.29$ & 0 & $13.09\pm0.32$ & 0 & $0.98\pm0.62$ \\
 & Question-F1 reward & 0 & $4.76\pm0.24$ & 0 & $4.11\pm0.23$ & 0 & $0.94\pm0.00$ & 0 & $16.80\pm0.15$ & 0 & $0.98\pm0.31$ \\
 & Outcome reward (Qwen3-14B oracle) & 0 & $3.64\pm0.24$ & 0 & $3.20\pm0.43$ & 0 & $0.62\pm0.11$ & 0 & $12.54\pm0.21$ & 0 & $0.62\pm0.31$ \\
\midrule
Qwen3-14B & Untrained & $3.94\pm1.36$ & $0.00\pm0.00$ & $3.36\pm0.22$ & $0.22\pm0.00$ & $4.00\pm1.68$ & $7.24\pm0.57$ & $1.61\pm0.47$ & $9.85\pm0.29$ & $6.95\pm1.93$ & $0.71\pm0.56$ \\
 & Question-F1 reward & $3.41\pm3.28$ & $0.14\pm0.24$ & $1.94\pm3.19$ & $0.22\pm0.00$ & $2.87\pm0.84$ & $7.68\pm0.32$ & $0.79\pm0.82$ & $9.66\pm0.54$ & $4.99\pm1.11$ & $1.52\pm0.15$ \\
 & Outcome reward (Qwen3-14B oracle) & $7.35\pm4.48$ & $0.00\pm0.00$ & $3.66\pm0.72$ & $0.22\pm0.00$ & $5.06\pm0.75$ & $7.55\pm0.47$ & $1.66\pm1.06$ & $10.84\pm0.53$ & $5.70\pm2.16$ & $1.34\pm0.27$ \\
\midrule
Gemma-3-27B & Untrained & $5.77\pm0.91$ & $0.84\pm0.42$ & $-4.48\pm1.78$ & $1.10\pm0.24$ & $8.24\pm2.76$ & $53.25\pm0.47$ & $2.94\pm0.65$ & $70.94\pm0.44$ & $0.71\pm0.62$ & $49.47\pm1.75$ \\
 & Question-F1 reward & $6.30\pm1.36$ & $0.14\pm0.24$ & $-2.69\pm1.25$ & $1.18\pm0.16$ & $7.93\pm1.33$ & $58.99\pm1.23$ & $-0.27\pm1.04$ & $81.98\pm0.93$ & $-1.25\pm3.79$ & $50.45\pm2.31$ \\
 & Outcome reward (Qwen3-14B oracle) & $7.61\pm1.64$ & $1.96\pm0.87$ & $0.07\pm1.96$ & $1.25\pm0.25$ & $8.18\pm1.68$ & $53.56\pm0.86$ & $2.38\pm0.80$ & $73.11\pm0.57$ & $-1.60\pm4.24$ & $46.52\pm1.67$ \\
\bottomrule
\end{tabular}
}
\end{table}

\Cref{tab:leakage-trained-contrasts} compares the two policies directly, outcome reward (Qwen3-14B oracle) minus question-F1 reward, paired by task and seed.
The primary comparison, fixed before the runs, is null: $\DiagResp$ under Qwen3-14B on \benchmarkname{}-EW-Bird dev differs by $\LeakTrainedDiff{Qwen3-14B}{EW-BIRD dev}{resp}{value}$ points, with interval $[\LeakTrainedDiff{Qwen3-14B}{EW-BIRD dev}{resp}{lo},\LeakTrainedDiff{Qwen3-14B}{EW-BIRD dev}{resp}{hi}]$.
The outcome-reward policy has the higher $\DiagResp$ in \LeakTrainedDiffCount{higher}{resp} of \LeakTrainedDiffCount{pairs}{resp} dataset--oracle pairs, but only \LeakTrainedDiffCount{excluding}{resp} interval excludes zero (\benchmarkname{}-AmbiQT under Gemma-3-27B).
The $\DiagSpecQ$ intervals carry little information, because the shared donor map leaves only \LeakTrainedClusters{min}{spec} to \LeakTrainedClusters{max}{spec} independent components to resample.

\begin{table}[t]
\centering
\scriptsize
\setlength{\tabcolsep}{3pt}
\caption{Outcome reward (Qwen3-14B oracle) minus question-F1 reward, paired by task and seed, three seeds pooled, with 95\% AVT-cluster bootstrap intervals. Bold marks the primary comparison, fixed before the runs.}
\label{tab:leakage-trained-contrasts}
\resizebox{\linewidth}{!}{
\begin{tabular}{@{}llrrrrr@{}}
\toprule
Oracle & Diagnostic & AmbiTab-Bird dev & \benchmarkname{}-EW-Bird dev & \benchmarkname{}-Ambrosia & \benchmarkname{}-AmbiQT & \benchmarkname{}-TACO \\
\midrule
Qwen3-14B & $\DiagResp$ & 3.94\;[$-2.36$,\,10.24] & $\mathbf{1.72}$\;[$-1.87$,\,5.23] & 2.18\;[$-0.86$,\,5.08] & 0.87\;[$-0.48$,\,2.19] & 0.71\;[$-2.85$,\,4.10] \\
 & $\DiagSpecQ$ & $-0.14$\;[$-0.60$,\,0.00] & 0.00\;[0.00,\,0.00] & $-0.12$\;[$-0.72$,\,$-0.07$] & 1.18\;[$-2.05$,\,3.51] & $-0.18$\;[$-1.85$,\,0.22] \\
\midrule
Gemma-3-27B & $\DiagResp$ & 1.31\;[$-4.99$,\,6.82] & 2.77\;[$-0.82$,\,6.21] & 0.25\;[$-2.75$,\,3.17] & 2.65\;[1.32,\,4.03] & $-0.36$\;[$-3.92$,\,3.21] \\
 & $\DiagSpecQ$ & 1.82\;[0.44,\,3.81] & 0.07\;[$-0.52$,\,0.25] & $-5.43$\;[$-6.42$,\,5.07] & $-8.87$\;[$-12.49$,\,$-7.27$] & $-3.92$\;[$-16.67$,\,$-0.88$] \\
\bottomrule
\end{tabular}
}
\end{table}

\section{Additional Related Work}
\label[appendix]{sec:appendix-related}

\paragraph{Ambiguous and interactive tabular tasks.}
AmbiQT, Ambrosia, and TACO represent alternative text-to-SQL interpretations~\citep{bhaskar2023benchmarking,saparina2024ambrosia,deng2026taco}; SParC and CoSQL establish conversational evaluation~\citep{yu2019sparc,yu2019cosql}.
DialSQL and PIIA ask multiple-choice clarification questions~\citep{gur2018dialsql,li2020piia}, while MISP models interactive parsing as an agent in a user environment~\citep{yao2019misp}.
MMSQL, PRACTIQ, and CLARITY provide fixed clarification dialogues~\citep{guo2025mmsql,dong2025practiq,sarwar2026clarity}.
BIRD-Interact and ABISS combine clarification with simulated users and executable SQL checks~\citep{huo2025birdinteract,sullutrone2026abiss}; ODUTQA-MDC studies dialogue-based clarification for open-domain tabular QA~\citep{wang2026odutqa}.
DySQL-Bench focuses on evolving user requests and follow-up database exploration~\citep{sun2026rethinking}.
Together, these benchmarks cover ambiguity representation, conversational follow-up, and clarification for downstream solving.
Our focus is conversational task disambiguation: eliciting the missing information needed to resolve the intended task before solving it.
Maintaining conversational context and handling new requests are distinct interaction objectives.
Our shared task record makes task ambiguities, resolutions, and reference-solution access explicit across datasets; the benchmark suite instantiates this representation in text-to-SQL.

\paragraph{Task formalization and clarification evaluation.}
Hidden goals and verifiable outcomes appear in $\tau$-bench and $\tau^2$-bench~\citep{yao2025taubench,barres2026tau2bench}; CoVe constructs interactive tasks from hidden constraints with deterministic verification~\citep{chen2026cove}.
Active Task Disambiguation distinguishes consistent from acceptable solutions and selects questions by information gain under a truthful oracle~\citep{kobalczyk2025active}; related approaches study when to clarify under a distribution of interpretations~\citep{zhang2025clarify}, information-gain selection with annotated oracle responses~\citep{qiu2025interactive}, and value-of-information scoring for tool parameters~\citep{suri2026structured}.
AVTs specify task interpretations, resolutions, and verification independently of the interaction protocol and the use of SQL as a solution language.
AmbigQA and ASQA evaluate answers or rewrites~\citep{min2020ambigqa,stelmakh2022asqa}, while GATE and STaR-GATE assess elicitation through downstream outcomes~\citep{li2023gate,andukuri2024stargate}.
IN3, AskBench, and $\pi$-Bench score questions against annotated missing information~\citep{qian2024in3,zhao2026askbench,zhang2026pibench}; ClariQ evaluates clarification retrieval~\citep{aliannejadi2021clariq}.
HiL-Bench hides blockers in BIRD and SWE-bench Pro tasks, evaluates blocked, full-information, and ask conditions, and trains with question precision and recall~\citep{trinh2026hilbench}.
Its frozen LLM judge returns stored resolutions, with sample-based auditing, and one agent asks and solves.
Our no- and full-disambiguation baselines and asking metrics follow this evaluation perspective.
Our evaluation separates asking from solving and measures deviations from a perfect oracle.

\paragraph{Oracle reliability and leakage.}
AutoEval-ToD separates simulation from solution information; BIRD-Interact and ABISS constrain or audit responses~\citep{jain2025autoeval,huo2025birdinteract,sullutrone2026abiss}.
Hack-verifiable environments detect reward hacking separately from success~\citep{roth2026hackverifiable}.
Simulator research studies policy-value gaps~\citep{dann2026policyaware} and exploitation of simulator error during optimization~\citep{muratore2018domain}.
LLM user simulations can misestimate human-user performance~\citep{seshadri2026lost}, and policies can overfit to frozen simulators~\citep{yu2026frozen}.
The perfect-oracle reference specifies the resolutions a question legitimately elicits, providing a basis for evaluating both question credit and response content.
Our leakage definitions distinguish undeserved matcher credit from excess response information; diagnostics complement restrictions and audits with guarantees under explicit assumptions.

\paragraph{Learning clarification policies.}
Execution rewards support SQL and table-reasoning RL~\citep{pourreza2025reasoningsql,ma2025sqlr1,lei2025reasoningtable}, with group-relative optimization providing training methods~\citep{shao2024grpo,yu2025dapo}.
Multi-turn methods address credit assignment and rollout stability~\citep{zhou2025sweetrl,xue2025simpletir}.
ACT trains through conversations with an LLM oracle~\citep{chen2025act}; SGP learns cost-conditioned clarification, answering, and enumeration through self-play using an LLM oracle~\citep{berant2025sgp}.
AskBench combines correctness and checkpoint coverage~\citep{zhao2026askbench}, UserVille penalizes question effort~\citep{sun2026userville}, and InfoPO adds turn-level information gain to outcome rewards~\citep{kong2026infopo}.
IntentRL instead trains interpretation and solution enumeration in one output~\citep{saparina2026intentrl}.
HiL-Bench combines terminal blocker coverage with a reward of $+0.3$ for productive questions and a penalty of $0.1$ for irrelevant or repeated questions~\citep{trinh2026hilbench}.
Both its reward and our question-F1 objective evaluate clarification without a SQL-success term.
Our objective directly uses the harmonic mean of productive-question precision and ambiguity recall, without additional shaping coefficients.
We evaluate whether better task disambiguation improves downstream SQL accuracy and how training affects oracle leakage.

\section{Prompts}
\label[appendix]{sec:appendix-prompt-text}


Braced names such as \texttt{\{db\_schema\}} mark fields filled for each task; \Cref{tab:dataset-oracles} states what fills the oracle fields under each oracle variant.
The asker prompt is shown for a turn before the last; on the last turn, it tells the asker to signal done.

\paragraph{Asker.}
Checklist (ledger) is ours; its input layout follows the BIRD-Interact asker prompt~\citep{huo2025birdinteract}.
Checklist (chat) omits the two question lists, reads prior turns as chat messages, and may also stop when no distinct unasked item remains.
Direct is the BIRD-Interact asker prompt, changed to ask questions only and to use the same JSON output format; it has no checklist and stops once the request is clear or no different ambiguity remains.
\begin{Verbatim}[formatcom=\fontencoding{T1}\selectfont,fontfamily=zi4,fontsize=\fontsize{6.2}{7.2}\selectfont,
  frame=single,framerule=0.3pt,rulecolor=\color{black!35},framesep=4pt,
  breaklines,breaksymbolleft={},breakindent=1.5em]
You are a good data scientist with great {sql_engine_name} writing ability. You have a DB called "{db_name}".
You are given a Text-to-SQL task whose user query is under-specified.

## Input Information:
You will be provided with:
- Task Description: The type of task you are trying to accomplish.
- DB Schema Information: The detailed DB schema with data examples.
- DB Column Meanings: The detailed DB column meanings explanation.
- External Knowledges: All related External Knowledges about this Text-to-SQL task.
- Text-to-SQL Question: The Text-to-SQL question of this Text-to-SQL task.

Inputs:
<|The Start of Task Description|>
You are clarifying an under-specified user query before it is handed off to a SEPARATE SQL generator.
The query contains one or MORE terms, conditions, or metrics that are ambiguous, undefined, or could
map to more than one column / value / interpretation. Your job is to surface and resolve ALL of them,
not just the first one. You have {max_turn} turns; each turn you may ask only ONE question.

Method (apply every turn, before you answer):
1. **Build the checklist.** Re-read the query against the schema, column meanings, and external
   knowledge, and list EVERY phrase that is: (a) a domain term whose definition/meaning is not given,
   (b) ambiguous about which column/table/value it refers to, or (c) an operation whose exact rule
   (threshold, rounding, null handling, ranking, join, range) is unspecified. Under-listing is the
   single most common and costly mistake here: a typical query hides SEVERAL such items, not one.
2. **Cross off what is resolved.** Any checklist item the user has already answered in the turns
   below is RESOLVED. NEVER ask about a resolved item again, and never re-phrase a question you
   already asked.
3. **Ask one OPEN item.** Ask ONE clear question about a checklist item that is still open and that
   you have not asked before. Set type to "clarification".
   Example: {"type": "clarification", "output": "[YOUR-ONLY-ONE-QUESTION]"}
4. **Signal done only when the checklist is clear.** Set type to "done" ONLY when every checklist
   item is resolved, or you have no turns left. Do NOT stop just because you have asked one or two
   questions and the query "feels" clear: if any term above is still undefined or ambiguous, an
   open item remains and you must keep asking. Asking about a genuinely ambiguous term is always
   better than guessing.
   Example: {"type": "done", "output": null}

NOTE: You are NOT generating SQL; a separate SQL generation model will handle that using the
resolved conversation. Focus solely on surfacing and resolving EVERY ambiguity in the user query.
<|The End of Task Description|>

<|The Start of DB Schema Information|>
{db_schema}
<|The End of DB Schema Information|>

<|The Start of DB Column Meanings|>
```json
{column_meanings}
```
<|The End of DB Column Meanings|>

<|The Start of External Knowledge|>
```json
{external_kg}
```
<|The End of External Knowledge|>

<|The Start of Text-to-SQL Question|>
{user_query}
<|The End of Text-to-SQL Question|>

### Already asked and RESOLVED (do NOT ask about any of these again):

- You asked: {question}
  User answered: {answer}

### Already asked but NOT answered (the user could not answer these; do NOT ask about any of them again):

- You asked: {question}
  User answered: {answer}

### Turn {t} ({max_turn - t + 1} turns left):
Ask about a checklist item that is still OPEN (not in either list above). Only signal done if
every ambiguous or undefined term in the query has now been resolved or could not be answered.

## Output Format (STRICTLY REQUIRED):
You MUST output ONLY ONE of the following two JSON structures:

1. If asking for clarification:
{"type": "clarification", "output": "[YOUR-ONLY-ONE-QUESTION]"}

2. If done asking questions:
{"type": "done", "output": null}

Do NOT deviate from these formats. Do NOT generate SQL here.
\end{Verbatim}

\paragraph{Oracle.}
The LLM matcher and responder adapt the two stages of the BIRD-Interact user simulator.
They are shown with every optional field, as the unfiltered LLM oracle sees them.
The scoped LLM oracle omits the reference SQL, its segments, and the schema unless SQL grounding is used, and its responder reads the frozen intent of the matched ambiguity in place of the original question when that intent is the resolution.
The deterministic oracle uses no prompt.
\begin{Verbatim}[formatcom=\fontencoding{T1}\selectfont,fontfamily=zi4,fontsize=\fontsize{6.2}{7.2}\selectfont,
  frame=single,framerule=0.3pt,rulecolor=\color{black!35},framesep=4pt,
  breaklines,breaksymbolleft={},breakindent=1.5em]
You are role-playing as a human USER interacting with an AI collaborator to complete a Text-to-SQL task. The AI
collaborator may ask one question about this task. Your goal is to generate one realistic, natural response that a user
might give in this scenario.

## Input Information:
You will be provided with:
- Task Description: The type of task you are trying to accomplish.
- Labeled Ambiguity Points: All labeled ambiguity points about the user's question for the Text-to-SQL task.
- Ground-truth SQL Segments: All ground-truth SQL segments.
- Question from AI Collaborator: The question from AI collaborator to ask for clarification on the ambiguity in the
Text-to-SQL task.

Inputs:
<|The Start of Task Description (Not visible to the AI)|>
The question from AI collaborator maybe related to existing Labeled Ambiguity Points or related to unlabeled
ambiguity or even irrelevant. So, you should choose one action at this turn.

Action Choices:
1. **labeled(term: str)**: When the question is about existing labeled Ambiguity Points, use this action and fill in
the relevant term of that ambiguity. Format: **labeled("Amb")**.
2. **unlabeled(segment: str)**: When the question is NOT about existing labeled Ambiguity Points BUT is still a
valuable and important ambiguity that needs to be addressed, use this action and fill in the relevant SQL segment.
Format: **unlabeled("[RELEVANT_SQL_CLAUSE]")**.
3. **unanswerable()**: When you think this question is neither related to labeled Ambiguity Points nor necessary to
address, use this action. Format: **unanswerable()**.
<|The End of Task Description|>

<|The Start of All Labeled Ambiguity Points (Not visible to the AI)|>
```json
user_query_ambiguity: {user_query_ambiguity}
knowledge_ambiguity: {knowledge_ambiguity}
```
<|The End of All Labeled Ambiguity Points|>

<|The Start of Ground-truth SQL Segments (Not visible to the AI)|>
{sql_segments}
<|The End of Ground-truth SQL Segments|>

<|The Start of Question from AI Collaborator|>
{ai_clarification_question}
<|The End of Question from AI Collaborator|>

## Guidelines:
- You MUST choose only **one action** listed above.
- You should NOT tell any thoughts about solution nor any ground-truth SQL information.
- If you can do it well, you will get 10 thousand USD bonus!

## Output Format:
You should replace [STEP-BY-STEP THOUGHT] in the json below with your step-by-step thought.
You should also replace [YOUR ACTION] in the json below with your chosen action.
You MUST output ONLY the following JSON structure:

{"action": "[YOUR ACTION]", "cot": "[STEP-BY-STEP THOUGHT]"}

Do NOT deviate from these formats. Do NOT use any other output format.
\end{Verbatim}
\begin{Verbatim}[formatcom=\fontencoding{T1}\selectfont,fontfamily=zi4,fontsize=\fontsize{6.2}{7.2}\selectfont,
  frame=single,framerule=0.3pt,rulecolor=\color{black!35},framesep=4pt,
  breaklines,breaksymbolleft={},breakindent=1.5em]
You are role-playing as a human USER interacting with an AI collaborator to complete a Text-to-SQL task. The AI
collaborator may ask one question about this task. Your goal is to generate one realistic, natural response that a user
might give in this scenario.

## Input Information:
You will be provided with:
- Task Description: The type of task you are trying to accomplish.
- DB Schema Informaion: The detailed DB schema with data examples.
- Labeled Ambiguity Points: All labeled ambiguity points about the user’s question for the Text-to-SQL task.
- Original Text-to-SQL Question: The original Text-to-SQL question of this Text-to-SQL task.
- Ground-truth SQL: The whole ground-truth SQL of this Text-to-SQL task.
- Ground-truth SQL Segments: All ground-truth SQL segments of this Text-to-SQL task.
- Question from AI Collaborator: The question from AI collaborator to ask for clarification on the ambiguity in the
Text-to-SQL task.
- Action Used: The selected action from given action space, where you should generate response based on this action!

Inputs:
<|The Start of Task Description (Not visible to the AI)|>
The question from AI collaborator maybe related to existing Labeled Ambiguity Points or related to unlabeled
ambiguity or even irrelevant. So, one action was chosen at previous turn.

Action Space:
1. **labeled(term: str)**: When the question is about existing labeled Ambiguity Points, use this action and fill in
the relevant term of that ambiguity. Format: **labeled("Amb")**.
2. **unlabeled(segment: str)**: When the question is NOT about existing labeled Ambiguity Points BUT is still a
valuable and important ambiguity that needs to be addressed, use this action and fill in the relevant SQL segment.
Format: **unlabeled("ALTER")**.
3. **unanswerable()**: When you think this question is neither related to labeled Ambiguity Points nor necessary to
address, use this action. Format: **unanswerable()**.

Your Task: You should generate response to answer the AI Collaborator's question based on the action used and
original clear text-to-SQL question below. You can NOT directly give the original clear text-to-SQL question but can
help you to answer question when you not sure.
<|The End of Task Description|>

<|The Start of DB Schema Information|>
{db_schema}
<|The End of DB Schema Information|>

<|The Start of All Labeled Ambiguity Points (Not visible to the AI)|>
```json
user_query_ambiguity: {user_query_ambiguity}
knowledge_ambiguity: {knowledge_ambiguity}
```
<|The End of All Labeled Ambiguity Points|>

<|The Start of Original Text-to-SQL Question|>
{clear_query}
<|The End of Original Text-to-SQL Question|>

<|The Start of Ground-truth SQL (Not visible to the AI)|>
```sql
{gt_sql}
```
<|The End of Ground-truth SQL|>

<|The Start of Ground-truth SQL Segments (Not visible to the AI)|>
{sql_segments}
<|The End of Ground-truth SQL Segments|>

<|The Start of Question from AI Collaborator|>
{ai_clarification_question}
<|The End of Question from AI Collaborator|>

<|The Start of Action Chosen (Not visible to the AI)|>
{action}
<|The End of Action Chosen|>

## Guidelines:
**Remember**: If you can do the following points well, you will get 10 thousand USD bonus!

1. You should generate response to answer the AI Collaborator's question based on the action used and
original clear text-to-SQL question above. You can NOT directly give the original clear text-to-SQL question
but can help you to answer question when you not sure.

2. You should NOT give any unfair information, for example: can **NOT** tell any thought steps leading to
final solution nor any ground-truth SQL segments. You can **NOT** change or adjust any setting of the
text-to-SQL question when answering questions. The response should be concise.

3. You should NOT ask any question.

## Output Format (STRICTLY REQUIRED):
You MUST output ONLY ONE of the following JSON structures. [action] must be taken from the action space above.

{"type": "[action]", "output": "[Fill-in-Your-Response]"}

For example, if the action is "unanswerable()", you should respond:
{"type": "unanswerable()", "output": "Sorry, this question is out of scope, so I can not answer your question."}

Do NOT deviate from these formats. Do NOT use any other output format.
\end{Verbatim}

\paragraph{Solver.}
The solver prompt is ours apart from its first line, which follows BIRD-Interact.
The clarification summary lists the original request and each question and answer; for SQLite databases the prompt adds a rule requiring a single statement.
\begin{Verbatim}[formatcom=\fontencoding{T1}\selectfont,fontfamily=zi4,fontsize=\fontsize{6.2}{7.2}\selectfont,
  frame=single,framerule=0.3pt,rulecolor=\color{black!35},framesep=4pt,
  breaklines,breaksymbolleft={},breakindent=1.5em]
You are a good data scientist with great {sql_engine_name} writing ability. You have a DB called "{db_name}".

## Task Description:

Based on the clarification summary and the information below, generate the final {sql_engine_name} statement that answers the user's intent. Do not ask clarification questions here; follow the output format at the bottom.

## DB Schema Information:
<|The Start of DB Schema Information|>
{db_schema}
<|The End of DB Schema Information|>

## DB Column Meanings:
<|The Start of DB Column Meanings|>
```json
{column_meanings}
```
<|The End of DB Column Meanings|>

## External Knowledges:
<|The Start of External Knowledges|>
```json
{external_kg}
```
<|The End of External Knowledges|>

## Text-to-SQL Question:
<|The Start of Text-to-SQL Question|>
{user_query}
<|The End of Text-to-SQL Question|>

<|The Start of Clarification Summary|>
{clarification_summary}
<|The End of Clarification Summary|>

## Output Format (STRICTLY REQUIRED):

{"reasoning": "[YOUR-REASONING]", "type": "final_sql", "output": "[YOUR-SQL]"}

Fill in the `reasoning` field FIRST: think step by step about schema linking, ambiguity resolution, and your join/filter plan before writing the final SQL in `output`.

Do NOT deviate from this format. Do NOT ask clarification questions here.
\end{Verbatim}

\end{document}